\documentclass[
 reprint,
showpacs,
 amsmath,amssymb,amsfonts,amsthm
 aps,
pre,
floatfix,
]{revtex4-2} 

\usepackage{amsthm}
\usepackage{graphicx}
\usepackage{etoolbox}
\usepackage{booktabs}
\usepackage{xcolor}

\usepackage{hyperref}

\usepackage[capitalize]{cleveref} 
\crefname{appendix}{Appendix}{Appendices}

\newtoggle{publish}
\toggletrue{publish}

\newtoggle{PRE}
\toggletrue{PRE}

\iftoggle{publish}{}
{
\usepackage{refcheck} 
\makeatletter
\newcommand{\refcheckize}[1]{%
  \expandafter\let\csname @@\string#1\endcsname#1%
  \expandafter\DeclareRobustCommand\csname relax\string#1\endcsname[1]{%
    \csname @@\string#1\endcsname{##1}\wrtusdrf{##1}}%
  \expandafter\let\expandafter#1\csname relax\string#1\endcsname
}
\makeatother

\refcheckize{\cref}
\refcheckize{\Cref}

}

\newtheorem{property}{Property}[section]
\newtheorem{theorem}{Theorem}[section]
\newtheorem{lemma}[theorem]{Lemma}

\newtheorem{corollary}[theorem]{Corollary}

\newcommand{\dominanceindex}{\bar{S}}

\newcommand{\averageswapdistance}{\left< d \right>}
\newcommand{\localaverageswapdistance}[1][i] {\left< d | #1 \right>}
\newcommand{\nmax}{7}

\newcommand{\lonelypermutohedron}{0.8}

\begin{document}

\title{How to measure the optimality of word or gesture order with respect to the principle of swap distance minimization}

\author{Ramon Ferrer-i-Cancho$^1$}
 \email{rferrericancho@cs.upc.edu}
 \homepage{https://cqllab.upc.edu/people/rferrericancho/} 

 \affiliation{$^1$Quantitative, Mathematical and Computational Linguistics Research Group (LQMC) \\
  Departament de Ci\`encies de la Computaci\'o \\
  Universitat Polit\`ecnica de Catalunya \\
  Campus Nord, Edifici Omega\\
  Jordi Girona Salgado 1-3 \\
  08034 Barcelona, Catalonia, Spain
 }
 

\date{\today}
             
\begin{abstract}
The structure of all the permutations of a sequence can be represented as a permutohedron, a graph where vertices are permutations and two vertices are linked if a swap of adjacent elements in the permutation of one of the vertices produces the permutation of the other vertex. It has been hypothesized that word orders in languages minimize the swap distance in the permutohedron: given a source order, word orders that are closer in the permutohedron should be less costly and thus more likely. Here we explain how to measure the degree of optimality of word order variation with respect to swap distance minimization. We illustrate the power of our novel mathematical framework by examining spontaneous gestures produced by speakers from four languages (two SVO languages and two VSO languages, which represent two linguistic families). We show that crosslinguistic gestures are at least $77\%$ optimal. It is unlikely that the multiple times where crosslinguistic gestures hit optimality are due to chance. We establish the theoretical foundations for research on the optimality of word or gesture order with respect to swap distance minimization in communication systems. Finally, we introduce the quadratic assignment problem (QAP) into language research as an umbrella for multiple optimization problems and, accordingly, postulate a general principle of optimal assignment that unifies various linguistic principles including swap distance minimization.    
\end{abstract}

\maketitle

\iftoggle{publish}{}{

\tableofcontents

\input{pending}

}

\section{Introduction}
\label{sec:introduction}

There are six possible orders of subject (S), object (O) and verb (V). Languages such as English and Russian tend to follow the SVO order as in the following English example:
\begin{equation*}
\underbrace{\text{The chef}}_S~\underbrace{\text{cooked}}_V~\underbrace{\text{a delicious meal}}_O.
\end{equation*}
In contrast, languages such as Tagalog and Irish, tend to follow the VSO order. For that reason, English and Russian are referred to as SVO languages while Tagalog and Irish are referred to as VSO languages. 

The mathematical structure of the six possible orders of S, O and V can be represented as a graph, called permutohedron, where two orders are joined by an edge if one order produces the other by swapping a couple of adjacent constituents and {\em vice versa} (\cref{fig:permutohedron}). For instance, SOV produces SVO by swapping OV and SVO produces SOV by swapping VO. 
The distance between orders in the graph is the swap distance \citep{Franco2024a}. In \cref{fig:permutohedron}, SOV is at distance 0 from itself, at distance 1 from SVO, at distance 2 from VSO and at distance 3 from VOS, that is the inverse of SOV. 
In a sequence of three constituents such as S, O and V, the maximum distance is 3.  
In a sequence of length $n$, the number of possible orders is $N = n!$ and
the maximum distance between any two vertices is 
\begin{equation}
d_{max} = {n \choose 2}
\label{eq:diameter_of_permutohedron}
\end{equation}
which gives $d_{max} = 3$ for $n = 3$.

\begin{figure}
\centering
\includegraphics[width = \lonelypermutohedron \linewidth]{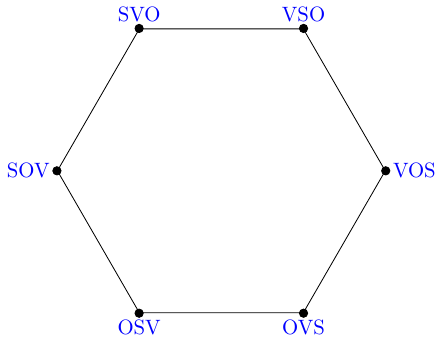}
\caption{\label{fig:permutohedron} The permutohedron of order 3. Vertices are labeled with all the possible permutations of SOV.} 
\end{figure}

It has been hypothesized that the cost of a word order variant is a monotonically decreasing function of
the swap distance between the variant and the source order \cite{Ferrer2023a,Ferrer2016c,Ferrer2013e}. 

Recently, a new index, called average swap distance, has been proposed to estimate such a cost \cite{Franco2024a}. The index is defined as 
\begin{equation}
\averageswapdistance = \sum_{i=1}^{N} \sum_{j=1}^{N} d_{ij} p_i p_j,
\label{eq:average_swap_distance}
\end{equation}
where $p_i$ is the probability of the $i$-th order of a structure of $n$ constituents and $d_{ij}$ is the swap distance between permutations $i$ and $j$. $d_{ij}$ is the distance (in edges) between vertex $i$ and $j$ in the permutohedron graph.  
The definition $\averageswapdistance$ makes a series of assumptions. First, it assumes that the swap distance cost is directly proportional to $d_{ij}$. However, the actual cognitive cost may be non-linear. Second, the definition can be seen as a particular case of a more general function \cite{Franco2024a}, i.e.  
\begin{equation*}
\averageswapdistance = \sum_{i=1}^{N} \sum_{j=1}^{N} d_{ij} p_{ij},
\end{equation*}
where $p_{ij}$ is the joint probability of orders $i$ and $j$. Then the original definition (\cref{eq:average_swap_distance}) is retrieved when assuming independence, namely $p_{ij} = p_i p_j$. 
Third, the cost of certain adjacent swaps may be higher than others. For instance, the transition from SOV to SVO may have a lower cost than a transition from SVO to VOS, although the swap distance in both transitions is one. The reason is that SVO minimizes dependency distances and the transition from SOV to SVO would imply shifting to an optimal configuration while the transition from SVO to VOS would imply abandoning such optimal configuration \cite{Ferrer2013e}. These differences could be incorporated by refining the definition of $d_{ij}$ or that of $p_{ij}$. Thus, another implicit assumption is that swap distance minimization is acting in isolation or it is the primary word order force. 

The minimum value of $\averageswapdistance$, i.e. $\averageswapdistance = 0$ is achieved when only one order has non-zero probability. The maximum value of $\averageswapdistance$ is \citep{Franco2024a} found 
\begin{equation*}
\averageswapdistance_{max} = \frac{d_{max}}{2} = \frac{n(n-1)}{4}
\end{equation*}
at least for $n \geq \nmax$. The maximum value is achieved when (a) all orders are equally likely and also when (b) the only two orders that are equally likely are located at maximum distance in the permutohedron graph.
$\averageswapdistance$ is a diversity index akin to entropy or Simpson's index \citep{Sommerfield2008a} but more powerful: configurations with the same word order entropy may differ in terms of $\averageswapdistance$ due to how probabilities are distributed on the permutohedron \cite{Franco2024a}.

According to the principle of swap distance minimization, $\averageswapdistance$ has to be minimized \citep{Franco2024a}. 
The hypothesis has been motivated in different ways. First, by the higher cost of swapping non-adjacent elements (versus swapping adjacent elements) due to the memory limitations underlying the dependency distance minimization principle \cite{Ferrer2016c}. Second, by means of an analog of rotation and canonical representation in visual recognition experiments \cite{Ferrer2023a}. These are preliminary approaches requiring further investigation. At present, the hypothesis is mainly supported by its predictive power.
Swap distance minimization has been applied successfully to distinct levels of explanation. First, diachronic word order variation, namely the evolution of the dominant order of a language over time \cite{Ferrer2013e,Ferrer2016c}. Second, synchronic word order variation, namely the variation that languages exhibit at a certain time \cite{Ferrer2016c,Franco2024a,Rios-El-Yazidi2026a,Rios-El-Yazidi2026b}. Third, word order acceptability \cite{Ferrer2023a}.
As for synchronic variation, massive evidence indicates that $\averageswapdistance$ is smaller than expected by chance in spoken languages of the world for SOV structures even in languages that do not follow the typical dominant orders (SOV and SVO), even in languages lacking a dominant order and even in other syntactic structures \cite{Rios-El-Yazidi2026a,Rios-El-Yazidi2026b,Franco2024a}.

Here we address the question of how to normalize $\averageswapdistance$ so as to measure the degree of optimization of word order with respect to the principle of swap distance minimization.
As a first approximation, we could normalize $\averageswapdistance$ by means of its theoretical range of variation. A simple normalization is thus $\averageswapdistance/\averageswapdistance_{max}$, which satisfies 
\begin{equation*}
0 \leq \averageswapdistance/\averageswapdistance_{max} \leq 1.
\end{equation*}
We could also normalize $\averageswapdistance$ in a more sophisticated way, following a state-of-the-art normalization method in language optimization research \citep{Ferrer2020b,Petrini2022a}.
For a certain score $X$, such normalization follows the template
\begin{equation} 
\Omega = \frac{X_r - X}{X_r - X_{min}}, 
\label{eq:optimality_score}
\end{equation}
where $X_r$ is the expected value of the score under some null hypothesis and $X_{min}$ is the minimum value that the score can achieve. 
By definition, such a score is normalized ($\Omega \leq 1$), takes a constant value when the arrangement is optimal ($\Omega = 1$) and is stable under the null hypothesis, namely it takes a value of $0$, on average, under the null hypothesis. $\Omega$ is a measure of closeness to optimality while $\averageswapdistance/\averageswapdistance_{max}$ is a measure of distance to optimality.
The mathematical structure of \cref{eq:optimality_score} is the same as the general form of adjusted similarity score for comparing partitions introduced by Hubert and Arabie \cite[p. 212]{Hubert1985a} and revisited 25 years later \cite{Vinh2010a}. The notion of similarity scores that are ``corrected or adjusted for chance'' is equivalent to our stability under the null hypothesis.

\cite{Ferrer2020b} applied the template to $D$, the sum of edge distances of a graph of $N$ vertices, that is defined as
\begin{equation*}
D = \frac{1}{2} \sum_{i=1}^N \sum_{j=1}^N |i - j| a_{ij}, 
\end{equation*}
where $A=\{a_{ij}\}$ is the adjacency matrix of the graph.
Then the template yields the optimality score 
\begin{equation} 
\Omega = \frac{D_r - D}{D_r - D_{min}}
\label{eq:optimality_dependency_distances}
\end{equation}
where $D_r$ and $D_{min}$ are the average and the minimum value of $D$ over all linear arrangements of vertices. 
\cite{Ferrer2020b} applied $\Omega$ (\cref{eq:optimality_dependency_distances}) to measure the degree of optimality of syntactic dependency distances in sentences according to the principle of syntactic dependency distance minimization \citep{Ferrer2004b}, which has been detected in ensembles of languages of increasing size and increasing linguistic diversity \cite{Liu2008a,Futrell2015a,Ferrer2020b}.
The score (\cref{eq:optimality_dependency_distances}) has been used to compare humans versus state-of-the-art generative models: the degree of optimality of humans is higher than that of Large Language Models \citep{Munoz-Ortiz2024a}. 
\cite{Petrini2022a} applied the template to $L$, the average word token length, which yields the optimality score
\begin{equation*} 
\Omega = \frac{L_r - L}{L_r - L_{min}},
\end{equation*}
where $L_r$ and $L_{min}$ are the average and the minimum value of $L$ over all one-to-one mappings of word probability into word length.
Replacing $X$ by our swap distance score, $\averageswapdistance$, we obtain
\begin{equation} 
\Omega = \frac{\averageswapdistance_r - \averageswapdistance}{\averageswapdistance_r - \averageswapdistance_{min}},
\label{eq:optimality_template}
\end{equation}
where $\averageswapdistance_r$ and $\averageswapdistance_{min}$ will be derived later on. 

The goal of the present article is two-fold. First, to set the theoretical foundations to estimate the degree of optimality of word or gesture order with respect to the principle of swap distance minimization and to investigate the epiphenomena of swap distance minimization, namely consequences of swap distance minimization on how probabilities are arranged on the permutohedron. Second, to demonstrate the application of the theory in a specific case: the order of subject, object and verb in unconventional gestures \citep{Futrell2015b}. We choose gestures to show that our theoretical framework is valid for any modality (vocal or gestural) and to shed light on a specific context of synchronic variation, namely the real-time production of words or gestures. In addition, unconventional gesture production is interesting because the preferred order of the gesturer can differ from the preferred order in his/her spoken language \cite{Goldin-Meadow2008a,Langus2010a,Futrell2015a}. For instance, the two most frequent orders in gestures produced by speakers of Irish and Tagalog (which are VSO languages), are SOV and SVO \cite{Futrell2015a}. However, the goal of the present article is not to explain why a particular gesture order is selected but to shed light on what remains {\em invariant} independently of the preferred order.

\begin{figure*}
\centering
\includegraphics[width = 0.7\linewidth]{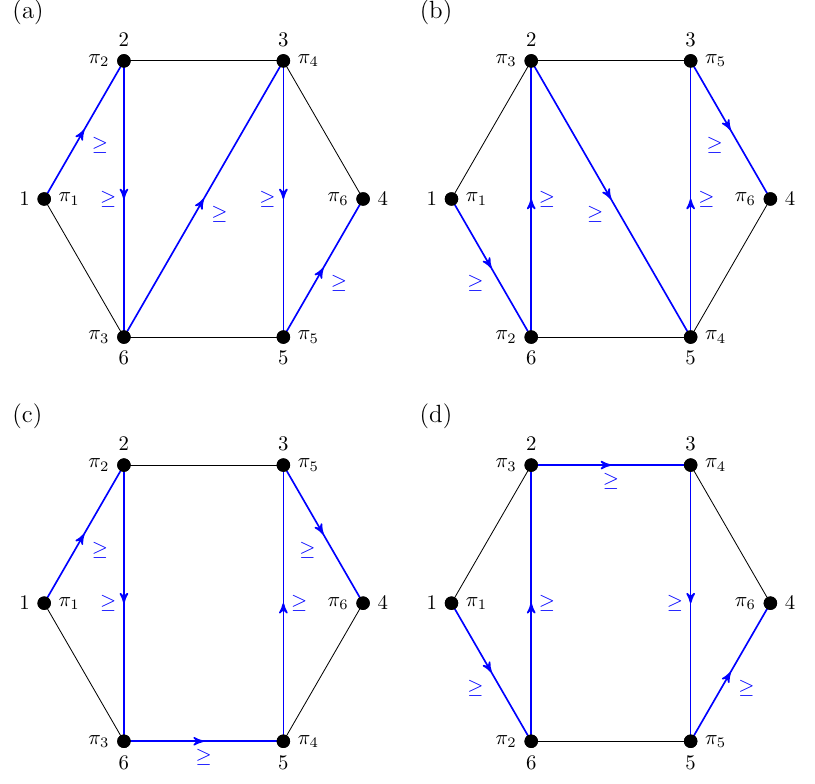}
\caption{\label{fig:optimal_probability_arrangement} Hasse diagrams (blue arrows) of four total orders of vertices of the permutohedron. 
Every Hasse diagram describes an arrangement of probabilities on the permutohedron. Vertices are labeled with numbers from 1 to 6 in clockwise sense starting from the left-most vertex. Near each vertex, we show the probability of the vertex. $\pi_i$ stands for the $i$-th largest probability. 
A blue arrow from vertex $i$ to vertex $j$ indicates that $p_i \geq p_j$. Black lines indicate edges of the permutohedron (only edges that are not hidden by a blue arrow are visible). 
(a) A Hasse diagram that minimizes $\averageswapdistance$ and $\localaverageswapdistance[1]$. 
(b) A symmetric Hasse diagram that also minimizes $\averageswapdistance$ and $\localaverageswapdistance[1]$.
(c) A Hasse diagram that minimizes $\localaverageswapdistance[1]$ but not $\averageswapdistance$. 
(d) A symmetric Hasse diagram that also minimizes $\localaverageswapdistance[1]$ but not $\averageswapdistance$. 
} 
\end{figure*}

The remainder of the article is organized as follows. Section \ref{sec:theory} completes the derivation of $\Omega$ for $\averageswapdistance$ combining mathematical and linguistic arguments to guide the choice of $\averageswapdistance_r$ and $\averageswapdistance_{min}$. Section \ref{sec:theory} also presents a series of theoretical results on $\Omega$, $\averageswapdistance_{min}$ and the epiphenomena of $\averageswapdistance$ minimization. Section \ref{sec:methods} presents the data and the methods used. Section \ref{sec:results} analyzes the optimality of unconventional gestures. It shows that unconventional gestures are at least $77\%$ optimal according to $\Omega$ (\cref{eq:optimality_template}) in addition to exhibiting various epiphenomena of $\averageswapdistance$ minimization. Finally, Section \ref{sec:discussion} discusses the findings and suggests future work. The supporting mathematical theory is presented in the Appendix. 

\section{Theoretical foundations}
\label{sec:theory}

\subsection{How to measure the degree of optimality}
\label{subsec:how}

To complete the derivation of $\Omega$ for $\averageswapdistance$ from \cref{eq:optimality_template}, we need to specify $\averageswapdistance_r$ and $\averageswapdistance_{min}$.
If we consider that the $p_i$'s are relative frequencies produced by rolling a fair die to pick one of the orders at random, one gets \citep{Franco2024a}
\begin{equation*}
\averageswapdistance_r = \frac{F - 1}{F} \averageswapdistance_{max},
\end{equation*}
where $F$ is the number of die rolls. Once the null hypothesis for $\averageswapdistance_r$ has been chosen, we have to choose $\averageswapdistance_{min}$ accordingly. Then $\averageswapdistance_{min}$ has to be the minimum value of $\averageswapdistance$ that die rolling can produce, i.e. $\averageswapdistance = 0$, that is achieved when all rolls produce the same side. Hence \cref{eq:optimality_template} with $\averageswapdistance_{min} = 0$ yields
\begin{align*}
\Omega & = 1 - \frac{F}{F -1} \frac{\averageswapdistance}{\averageswapdistance_{max}} \\
       & \approx 1 - \frac{\averageswapdistance}{\averageswapdistance_{max}}  
\end{align*}
for sufficiently large $F$.
Therefore, applying the die rolling null hypothesis to the general definition $\Omega$, we obtain again a score that is simply the complementary of our initial normalization attempt ($\averageswapdistance/\averageswapdistance_{max}$). Furthermore, the normalizations of $\averageswapdistance$ above take for granted that the minimum $\averageswapdistance$ can be achieved but the probability of hitting the minimum $\averageswapdistance$ (i.e. getting always the same side of the die) under the null hypothesis tends to zero as $F$ increases. Therefore, we need a proper null hypothesis. We will propose a null hypothesis after revising the typology of word order variation \citep{wals-81}.

Languages can be classified into rigid word order languages and flexible word order languages \cite{wals-81}. Rigid order languages are characterized by one of the six orders because all orders other than one are either ungrammatical or used relatively infrequently and only in special pragmatic contexts. For these languages, one would expect $\averageswapdistance \approx 0$.
However, flexible order languages, that is languages where all six orders are grammatical, are challenging. In some languages with flexible word order, there is one order which is most common and which can be described as the dominant order \citep{wals-81}. Some languages exhibit a pair of primary alternating dominant orders, e.g., SOV and SVO in German. These are languages in which it is not possible to identify a single dominant order but two orders, neither of which is dominant relative to the other, but which can be said to be dominant relative to other orders. Finally, there are languages lacking a dominant order such as Hungarian. 
Interestingly, swap distance minimization manifests even in languages lacking a dominant order \cite{Rios-El-Yazidi2026a}. Admittedly, the classification of languages into rigid/flexible order and further divisions of languages with flexible order that we borrow from \cite{wals-81} is rather idealized. See \cite{Rios-El-Yazidi2026a} for the challenge of classifying a language as lacking a dominant order using statistical criteria. We simply use this classification to motivate our choice for the normalization of $\averageswapdistance$. The point is that we
aim to find a way to normalize $\averageswapdistance$ that adapts to the inherent degree of word order flexibility of certain languages. For instance, we wish to find a way to normalize $\averageswapdistance$ conditioning on the fact that the language exhibits a pair of dominant orders and a single order will never dominate completely although the latter would lead to the absolute minimum value of $\averageswapdistance$, that is $0$ \cite{Franco2024a}. The solution is to consider a random permutation null hypothesis, namely a shuffling of the original word order probabilities \citep{Franco2024a}. There are $N = n!$ orders and $N!$ possible shufflings. When $n = 3$, the total number of probability shuffling is $(3!)! = 6! = 720$.

Consider a permutation $\sigma$, a one-to-one mapping of integers in $[1, N]$ to integers in $[1, N]$.
We use $\sigma$ to assign probabilities to vertices of the permutohedron, that is $\sigma(i)$ is the index of the probability assigned to vertex $i$. Hence vertex $i$ is assigned probability $p_{\sigma(i)}$.
Given the vector of probabilities
\begin{equation*}
\mathbf{p} = (p_1, ..., p_i,...,p_N),
\end{equation*}
the permutation $\sigma$ produces a value of $\averageswapdistance$ that is
\begin{equation}
\averageswapdistance(\sigma) = \sum_{i=1}^N \sum_{j=1}^N p_{\sigma(i)}p_{\sigma(j)}d_{ij}. \label{eq:average_swap_distance_permutation}
\end{equation}
By definition, a shuffling preserves the empirical distribution of the ${p_i}'s$, e.g., a preference for a pair of orders (no matter which pair), but destroys any constraint that the structure of the permutohedron may impose on how the probabilities are assigned to word orders. We cannot exclude, however, that the permutohedron has had some influence in the values of the word order probabilities.  
We use the term arrangement to define the probability vector obtained from $\mathbf{p}$ by applying some permutation, i.e. 
\begin{equation*}
(p_\sigma(1), ..., p_\sigma(i),...,p_\sigma(N)).
\end{equation*}

The expected value of $\averageswapdistance$ under the null hypothesis of a random permutation is \cite{Franco2024a}
\begin{equation}
\averageswapdistance_{r} = \dominanceindex \frac{N}{N-1}\frac{d_{max}}{2} = \dominanceindex \frac{N}{N-1}\frac{n(n-1)}{4},
\label{eq:expected_average_swap_distance_random_permutation}
\end{equation}
where $\dominanceindex = 1 - S$ is the so-called dominance index and $S$ is the Simpson index, that is defined as \citep{Sommerfield2008a}
\begin{equation}
S= \sum_{i=1}^{N} p_i^2.
\label{eq:Simpson_index}
\end{equation}
The $\averageswapdistance_{min}$ must be consistent with $\averageswapdistance_r$. Since $\averageswapdistance_r$ is the average value of $\averageswapdistance$ over all probability shufflings ($N!$ probability shufflings), $\averageswapdistance_{min}$ must be chosen as the minimum value of $\averageswapdistance$ over all these shufflings. Choosing $\averageswapdistance_{min} = 0$ as before would produce an inconsistent optimality score. Thus, our final version of $\Omega$ is obtained by defining $\averageswapdistance_r$ and $\averageswapdistance_{min}$ as the average and the minimum $\averageswapdistance$ over all permutations. 
Formally, 
\begin{align}
\averageswapdistance_{r}   & = \frac{1}{N!} \sum_{\sigma \in {\cal S}} \averageswapdistance(\sigma) \nonumber \\
\averageswapdistance_{min} & = \min_{\sigma \in {\cal S}} \averageswapdistance(\sigma) \label{eq:average_swap_distance_min_QAP},
\end{align}
where ${\cal S}$ is the set of all permutation functions over $[1, N]$.

The decisions above parallel the decisions in the application of $\Omega$ to $D$, the sum of dependency distances, (\cref{eq:optimality_dependency_distances}) in \cite{Ferrer2020b}, where
\begin{align}
D(\sigma) & = \frac{1}{2} \sum_{i=1}^N \sum_{j=1}^N |\sigma(i) - \sigma(j)| a_{ij} \label{eq:sum_of_dependency_distance_for_a_permutation} \\ 
D_r       & = \frac{1}{N!} \sum_{\sigma \in {\cal S}} D(\sigma) \label{eq:average_linear_arrangement_problem} \\
D_{min}   & = \min_{\sigma \in {\cal S}} D(\sigma). \label{eq:minimum_linear_arrangement_problem}
\end{align}
$D_{min}$ is known as the solution of the minimum linear arrangement problem \cite{Diaz2002,Petit2011a}. $D_{min}$ is the minimum value of $D$ over all possible permutations of the vertices of a graph of $N$ vertices. In dependency linguistics, $D_{min}$ is the minimum value of $D$ of a sentence of $N$ words according to the principle of dependency distance minimization \cite{Ferrer2004b,Liu2008a,Futrell2015a,Ferrer2020b} when the dependency structure is fixed and all word permutations are possible, namely no formal constraint \cite{Gomez2026a_Encyclopedia} or no additional word order principle limits the possible permutations of words.

Consider $m$, the number of non-zero probability orders, that satisfies $1 \leq m \leq N$ (the case $m = 0$ is not possible since the $p_i's$ must sum to $1$). The final version of $\Omega$ is undefined when $m = 1$ or when all orders are equally likely ($\pi_1 = 1/N$). When $m = 1$, all permutations have the same $\averageswapdistance$, namely $\averageswapdistance = 0$ and then $\averageswapdistance = \averageswapdistance_{min} = \averageswapdistance_r$. When all orders are equally likely, all permutations also have maximum $\averageswapdistance$, namely $\averageswapdistance = \averageswapdistance_{max}$ and then $\averageswapdistance = \averageswapdistance_{min} = \averageswapdistance_r$ \footnote{Notice that $S = \frac{1}{N}$ (\cref{eq:Simpson_index}) and then (\cref{eq:expected_average_swap_distance_random_permutation}) 
\begin{equation*}
\averageswapdistance_r = (1 - S)\frac{N}{N-1}\averageswapdistance_{max} = \averageswapdistance_{max}. 
\end{equation*}
}
. 
Hereafter, we assume $m > 1$ and that orders are not equally likely for calculating $\Omega$.

\subsection{The problem of $\averageswapdistance_{min}$}

To calculate $\Omega$, we need to compute $\averageswapdistance$, $\averageswapdistance_{r}$ and $\averageswapdistance_{min}$. We have a formula to compute $\averageswapdistance$ (\cref{eq:average_swap_distance}) and another to compute $\averageswapdistance_{r}$ (\cref{eq:expected_average_swap_distance_random_permutation}). $\averageswapdistance_{min}$ satisfies (Appendix \ref{app:bounds}, \cref{eq:Franco_Sanchez_et_al_bounds_refined})
\begin{equation*}
\dominanceindex \leq \averageswapdistance_{min}. 
\end{equation*}
The calculation of $\averageswapdistance_{min}$ (\cref{eq:average_swap_distance_min}) is a particular case of the Quadratic Assignment Problem (QAP) (Appendix \ref{app:quadratic_assignment_problem}). QAP is the general problem of finding a minimum cost allocation of facilities into locations, taking the costs as the sum of all possible distance-flow products \cite{Loiola2007a}. Interestingly, the minimum linear arrangement problem (dependency distance minimization) and the problem of compression with prescribed probabilities and magnitudes are special cases of QAP, too (Appendix \ref{app:quadratic_assignment_problem}). 
We can compute $\averageswapdistance_{min}$ with a brute force procedure, i.e. calculating the value of $\averageswapdistance$ for each of the $N!$ shufflings and selecting the minimum or a sophisticated algorithm to solve the QAP problem \cite{Loiola2007a,Wang2021a}. 
Fortunately, for $n = 3$, $\averageswapdistance_{min}$ can be computed efficiently with a straightforward procedure (Appendix \ref{app:optimal_arrangements}): 
\begin{enumerate}
\item
Sort the probabilities decreasingly so that $\pi_i$ is the $i$-th largest probability.
\item
Assign the sorted probabilities to vertices on the permutohedron following one of the schemes in \cref{fig:optimal_probability_arrangement} (a) and (b). 
\item
Calculate $\averageswapdistance_{min}$ by means of \cref{eq:average_swap_distance}.
\end{enumerate}
Indeed, $\averageswapdistance_{min}$ can be calculated as (Appendix \ref{app:optimal_arrangements})
\begin{align}
\averageswapdistance_{min} & = 3\dominanceindex - 2\left[ \right. \nonumber \\
                           &   \pi_1(2\pi_2 + \pi_4) + \pi_2(2\pi_4 + \pi_6) + \nonumber \\ 
                           &   \pi_3(2\pi_1 + \pi_2) + \pi_4(2\pi_6 + \pi_5) + \nonumber \\ 
                           &   \left. \pi_5(2\pi_3 + \pi_1) + \pi_6(2\pi_5 + \pi_3) \right]. \label{eq:average_swap_distance_min_introduction}                           
\end{align}

\subsection{An example of the calculation of $\Omega$}
\label{subsec:example}

We borrow the frequencies of the six possible orders produced by speakers of Tagalog in unconventional gesturing experiments under the non-reversible condition \cite{Futrell2015a}, which are reproduced in \cref{fig:permutohedron_combo_Tagalog_non_reversible} (a), to illustrate the calculation of $\Omega$, $\averageswapdistance$, $\averageswapdistance_r$ and $\averageswapdistance_{min}$ as follows
\begin{itemize}
\item
$\averageswapdistance_r$ is the easiest to calculate. 
If we plug the frequencies in \cref{fig:permutohedron_combo_Tagalog_non_reversible} (a) into \cref{eq:Simpson_index}, we obtain that the Simpson index is $S=0.5$. Plugging $S=0.5$ into \cref{eq:expected_average_swap_distance_random_permutation} with $n = 3$ and $N = 3! = 6$, we obtain $\averageswapdistance_r = 0.9$.
\item
$\averageswapdistance$ can be calculated by means of \cref{eq:average_swap_distance} using \cref{fig:permutohedron} as a guide to calculate the $d_{ij}$'s.
If we plug the frequencies in \cref{fig:permutohedron_combo_Tagalog_non_reversible} (a) into \cref{eq:average_swap_distance}, we obtain $\averageswapdistance = 0.73$.
\item
$\averageswapdistance_{min}$ can be calculated in two ways. First, mechanically, by sorting the frequencies in \cref{fig:permutohedron_combo_Tagalog_non_reversible} (a) decreasingly and plugging them into \cref{eq:average_swap_distance_min_introduction}, which gives $\averageswapdistance_{min} = 0.7$. Second, with a more insightful procedure that consists of rearranging the frequencies in \cref{fig:permutohedron_combo_Tagalog_non_reversible} (a) to produce the optimal arrangement in \cref{fig:optimal_probability_arrangement} (a), that is \cref{fig:permutohedron_combo_Tagalog_non_reversible} (b), and then plugging the frequencies in \cref{fig:permutohedron_combo_Tagalog_non_reversible} (b) into \cref{eq:average_swap_distance} to obtain $\averageswapdistance = 0.7 = \averageswapdistance_{min}$. 
\item
Finally, plugging $\averageswapdistance_r = 0.9$, $\averageswapdistance = 0.73$ and $\averageswapdistance_{min} = 0.7$ into \cref{eq:optimality_template}, we obtain $\Omega=0.86$. 
\end{itemize}
      
The high degree of optimality of Tagalog in this condition ($\Omega=0.86$) is not very surprising: the assignment in \cref{fig:permutohedron_combo_Tagalog_non_reversible} (a) is almost optimal, i.e. it suffices to swap the frequencies of OSV and VSO in \cref{fig:permutohedron_combo_Tagalog_non_reversible} (a) to produce the optimal assignment in \cref{fig:permutohedron_combo_Tagalog_non_reversible} (b).
It is easy to see that, for the assignment in \cref{fig:permutohedron_combo_Tagalog_non_reversible} (b), $\averageswapdistance_r = 0.9$ again and $\averageswapdistance = \averageswapdistance_{min} = 0.7$, which gives $\Omega = 1$.

\begin{figure}
\centering
\includegraphics[width = \linewidth]{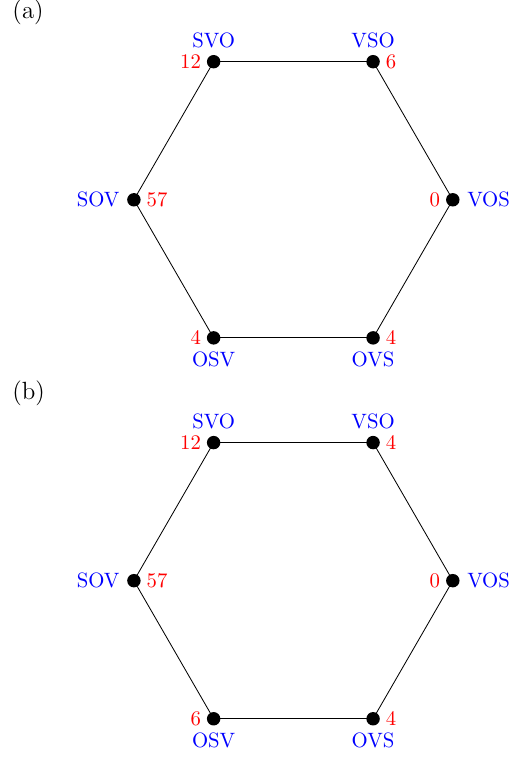}
\caption{\label{fig:permutohedron_combo_Tagalog_non_reversible} (a) The frequency of the six possible orders of S, O and V produced by speakers of Tagalog in unconventional gesturing experiments under the non-reversible condition \cite{Futrell2015a}. (b) An optimal assignment of the frequencies in (a).}
\end{figure}

\subsection{Overview of theoretical foundations}
\label{subsec:overview}

By its design, $\Omega \leq 1$ and the expected value of $\Omega$ in a random permutation is zero. $\Omega$ can take negative values, that indicate that $\averageswapdistance$ is larger than expected by chance.
$\Omega$ can take negative values for any $n \geq 3$ (Appendix \ref{app:bounds}).
When $n = 3$ and $m = 2$ or $m = 3$, $\Omega \geq -3/2$.

So far we have presented essential theory about $\Omega$ that suffices to apply $\Omega$ to our case study. Appendix \ref{app:bounds} presents general lower and upper bounds of $\averageswapdistance$ that are simple functions of $\dominanceindex$ in general bounds tailored for the specific case of $n = 3$ that improve with respect to previous work \citep{Franco2024a}. 
When $n = 3$, 
\begin{equation*}
\averageswapdistance_{low} \leq \averageswapdistance \leq \averageswapdistance_{up},
\end{equation*}
where
\begin{align*}
\averageswapdistance_{low}(\dominanceindex) & = \max\left(\dominanceindex, 2\dominanceindex - 2/3 \right) \\  
\averageswapdistance_{up}(\dominanceindex)  & = \min\left(\dominanceindex + 1, 2\dominanceindex + 1/2, \frac{3}{2}\right).
\end{align*}

Appendix \ref{app:optimal_arrangements} focuses on $n = 3$, presents the theory that is required to derive \cref{eq:average_swap_distance_min_introduction} and examines the structure of optimal arrangements. The major findings are summarized next.

Consider $V$, the set of vertices of the permutohedron. We use $t$, $u$ and $v$ to refer to vertices in $V$. 
Assigning probabilities to each vertex of the permutohedron induces a total relation between vertices.
If the probabilities do not have repeated values, the relation is a total order and then, sorting the vertices decreasingly (or increasingly) by probability, one obtains a single vertex sequence.
For instance, the probability assignment in \cref{fig:probability_assignment} (a) produces the total order of vertices (the vertex sequence)  
\begin{equation*}
(2,5,6,4,3,1)
\end{equation*}
when sorting vertices by decreasing probability.
If the probabilities have repeated values, the relation is a total preorder (weak order) and then multiple sequences are obtained when sorting the vertices. Crucially, in all these sequences, the vertices with unique probabilities are always in the same position in the sequences. 
For instance, the probability assignment in \cref{fig:probability_assignment} (b) produces the vertex sequences
\begin{align*}
& (2, 5, 6, 3, 4, 1) \\
& (2, 6, 5, 3, 4, 1) \\
& (2, 5, 6, 4, 3, 1) \\
& (2, 6, 5, 4, 3, 1)
\end{align*}
when sorting vertices by decreasing probability. Notice that the vertices with unique probabilities (1 and 2) are always in the same position.

\begin{figure*}
\centering
\includegraphics[width = 0.7\linewidth]{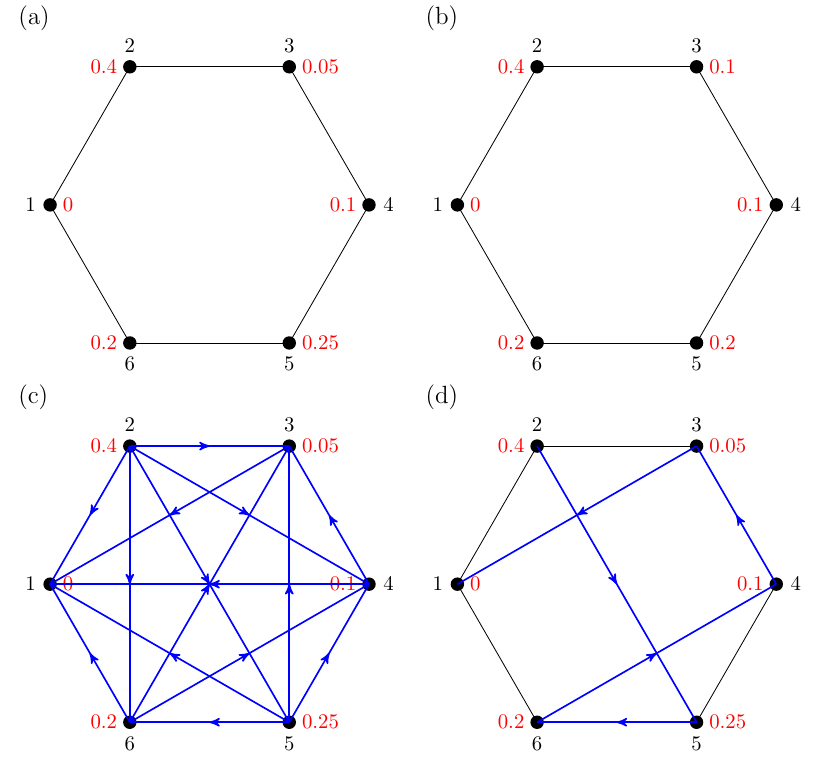} 
\caption{\label{fig:probability_assignment} Probability assigments to vertices of the permutohedron. Black numbers indicate vertex labels while red numbers indicate probabilities. (a) A probability assignment to vertices of the permutohedron without probability ties. (b) A probability assignment to vertices of the permutohedron with probability ties. (c) The directed graph (blue arcs) of the vertex relation induced by the probability assignment in (a). A blue arrow from vertex $i$ to vertex $j$ indicates that $p_i \geq p_j$. Black lines indicate edges of the permutohedron (only edges that are not hidden by a blue arrow are visible). (d) The Hasse diagram of the previous directed graph following the same conventions as in (c).
}
\end{figure*}

Consider a generic binary relation $R$ between vertices. The total orders and total preorders induced by the assignment of probabilities to vertices above are particular cases of $R$ where $u R v$ is equivalent to $p_u \leq p_v$. As we will see later on, the minimization $\averageswapdistance$ and $\localaverageswapdistance$ induce order relations of another kind, namely order relations 
to assign the probabilities $(\pi_1, \pi_2,...\pi_6)$ to vertices in the order prescribed by the order relation.

In order theory, a total order and a total preorder are reflexive, transitive and total relations. In addition, a total order is antisymmetric. For orders induced by the assignment of probabilities to vertices, the meaning of these properties is as follows. 
Total means that, for each pair of distinct vertices $u$ and $v$, either $p_u \leq p_v$ or $p_u \geq p_v$. Reflexivity means that $p_v \leq p_v$ for each vertex. Transitivity means that if $p_t \leq p_u$ and $p_u \leq p_v$ then $p_t \leq p_v$. A total order satisfies antisymmetry while a total preorder does not. Antisymmetry means that if $p_u \leq p_v$ and $p_u \geq p_v$ then $u = v$. When there are no probability ties, antisymmetry holds. When there are tied probabilities, the fact that $p_u \leq p_v$ and $p_u \geq p_v$ implies that
$p_u = p_v$ but it does not imply that $u = v$. For orders induced by the minimization of $\averageswapdistance$ or $\localaverageswapdistance$, it suffices to replace $p_u \leq p_v$ by $u R v$ in the explanations above.

A relation on vertices of the permutohedron can be visualized as a directed graph $G$, where vertices are the same vertices as in the permutohedron and an edge from $u$ to $v$ indicates $p_u \geq p_v$. \Cref{fig:probability_assignment} (c) shows the graph for the total order relation induced by the assignment in \cref{fig:probability_assignment} (a). 
To ease visualization and simplify modeling, a compact representation of $G$ is needed. A vertex $v$ can be reached from a vertex $u$ if there is a path from $u$ to $v$. Here we use a Hasse diagram, a directed graph $H$ that preserves the reachability with repect to $G$ for all pairs of vertices but using as few edges as possible. $H$ has no edge that can be inferred by reflexivity and transitivity \cite{Clough2014a}. Given $G$, $H$ is obtained by removing self-edges and computing the transitive reduction 
\cite{Warshall1962a,Aho1972a}. \Cref{fig:probability_assignment} (d) shows the Hasse diagram of the graph in \cref{fig:probability_assignment} (c). The Hasse diagram of a total order is a path graph.

The minimization of $\averageswapdistance$ induces total orders of the vertices of the permutohedron. Each of these total orders is a sequence of vertices $(t, u, v,...)$. The probabilities $(\pi_1,\pi_2, \pi_3,...)$ are assigned to vertices following the sequence, i.e. $p_t = \pi_1$, $p_u = \pi_2$, $p_v = \pi_3$, so as to minimize $\averageswapdistance$ (Appendix \ref{app:optimal_arrangements}). In these total orders, $t$ can be any vertex of the graph. 
The Hasse diagram of the two total orders that minimize $\averageswapdistance$ assuming $t = 1$ are shown in \cref{fig:optimal_probability_arrangement} (a) and (b), that correspond to the vertex sequences
\begin{align*}
& (1, 2, 6, 3, 5, 4) \\
& (1, 6, 2, 5, 3, 4).
\end{align*}
These total orders may produce an assignment of probabilities to vertices that in turn induces a total preorder of probabilities because the probabilities have ties.

\begin{figure}
\centering
\includegraphics[width = \linewidth]{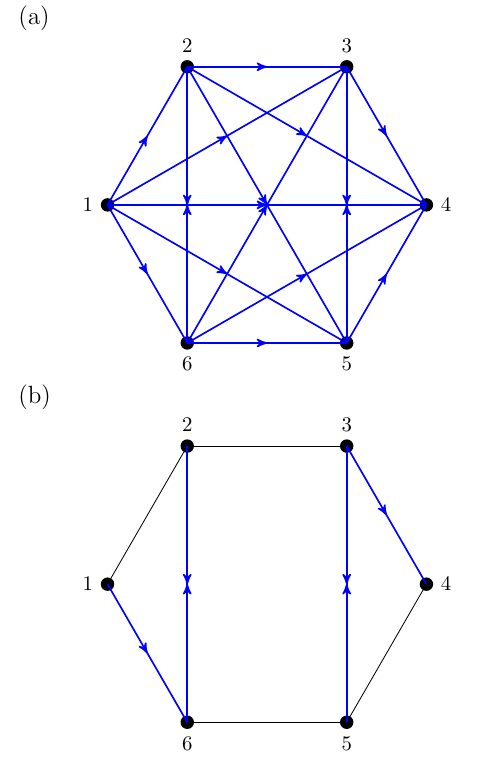} 
\caption{\label{fig:locally_optimal_probability_arrangement} The total preorder of vertices of the permutohedron (blue arrows) induced by the minimization of $\localaverageswapdistance[1]$. Vertices are labeled with numbers from $1$ to $6$ in a clockwise sense. A blue arrow from vertex $i$ to vertex $j$ indicates that $p_i \geq p_j$. 
Black lines indicate edges of the permutohedron (only edges that are not hidden by a blue arrow are visible). 
(a) The total preorder as a directed graph. Self-edges resulting from reflexivity are omitted for simplicity. (b) A {\em Hasse diagram} of the previous graph.
As the directed graph that defines the total preorder has cycles, the transitive reduction may not be unique \cite{Aho1972a}, and then we show only one of the Hasse diagrams (Appendix \ref{app:optimal_arrangements}).
}
\end{figure}


$\averageswapdistance$ (\cref{eq:average_swap_distance}) can be expressed equivalently as
\begin{equation}
\averageswapdistance = \sum_{i=1}^N p_i \localaverageswapdistance,
\label{eq:average_swap_distance_using_local_average_swap_distance}
\end{equation}
where
\begin{equation}
\localaverageswapdistance = \sum_{j=1}^N p_j d_{ij}.
\label{eq:local_average_swap_distance}
\end{equation}
The minimization of $\localaverageswapdistance$ for some vertex $i$ induces a ranking of the vertices according their distance to $i$; the minimum probability arrangements are obtained by assigning the probabilities $(\pi_1,\pi_2, \pi_3,...)$ to each vertex so that the vertex of distance zero receives $\pi_1$, the two vertices at distance 1 receive $\pi_2$ and $\pi_3$, and so on (Appendix \ref{app:optimal_arrangements}). 
Such a ranking with ties results in a total preorder of the vertices as when inducing an order from probabilities with ties above.
The total preorder involves the following vertex sequences when $i = 1$:
\begin{align*}
& (1, 2, 6, 3, 5, 4) \\
& (1, 6, 2, 5, 3, 4) \\
& (1, 2, 6, 5, 3, 4) \\
& (1, 6, 2, 3, 5, 4).
\end{align*}
Notice that the vertices with a unique rank (1 and 4) are always in the same position in the sequence. As every vertex sequence corresponds to a total order of vertices, Figure \ref{fig:optimal_probability_arrangement} shows the four total orders that minimize $\localaverageswapdistance[1]$. 

Optimality in a broad sense (either the minimization of 
$\averageswapdistance$ or the minimization of $\localaverageswapdistance$ for a given $i$ (\cref{fig:optimal_probability_arrangement}), 
has the following consequences (when probability is maximized by a single vertex)
\begin{enumerate}
\item
Radiation from the most likely order. The structure of optimal arrangements implies a tendency for probability to decrease as one moves away from the most likely order in the permutohedron (\cref{fig:optimal_probability_arrangement} (a-b) and \cref{fig:locally_optimal_probability_arrangement} (b)). 
\item 
Adjacency of the two most likely orders. The two most likely orders are connected in the permutohedron (\cref{fig:optimal_probability_arrangement}).
\item
Contiguity. The orders that have non-zero probability form a path on the permutohedron (\cref{fig:contiguity}).
\end{enumerate} 
A prominent consequence of optimality, contiguity, is examined in detail in Appendix \ref{app:contiguous}, where contiguity is confronted to optimality and it is shown to be expected even in suboptimal arrangements, as any non-contiguous arrangement can be transformed into a contiguous arrangement with smaller $\averageswapdistance$. 

\begin{figure}
\centering
\includegraphics[width = \linewidth]{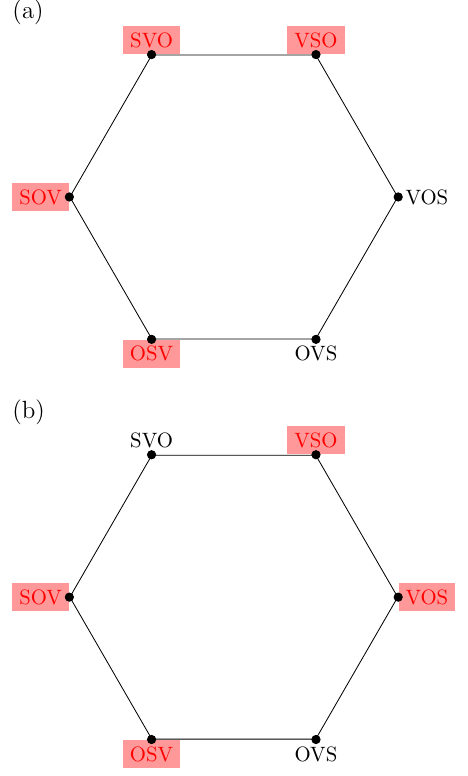}
\caption{\label{fig:contiguity} Arrangements with four non-zero probability orders (vertices with non-zero probability are marked in red). (a) The non-zero probability orders are contiguous, namely they form a path. (b) The non-zero probability orders are not contiguous. } 
\end{figure}

\section{Methods}
\label{sec:methods}

\subsection{Data}

Here we borrow the frequency of the 6 orders of the SOV structure produced by participants in unconventional gesturing experiments by \cite{Futrell2015b}. 
The participants were native or highly proficient bilingual speakers of two SVO languages (English and Russian) and two VSO languages (Irish and Tagalog). Thus the speakers represent two linguistic families: Austronesian (Tagalog) and Indo-European (English, Irish and Russian). Gesturing experiments took place in two conditions depending on the reversibility of the event (verb). Reversible events are those where the agent (subject) and the patient (object) could be plausibly reversed, e.g., the agent and the patient of the action are both humans and thus are both plausible as agents. Non-reversible are events where the agent and the patient cannot be reversed. For instance, "The boy kicked the girl." is reversible (both the boy and the girl can perform the action) while "The boy kicked the ball." is non-reversible (a ball cannot kick a boy) \cite{Futrell2015b}. 
The frequencies of the orders produced by reversibility condition (reversible or non-reversible) and language are borrowed from Tables 1 and 2 of \cite{Futrell2015b}.

\subsection{Calculation of $\averageswapdistance_{min}$}

We calculate $\averageswapdistance_{min}$ using two methods: the brute force procedure and the straightforward procedure described in Section \ref{sec:introduction}. The brute force procedure is used to verify every calculation of $\averageswapdistance_{min}$ with the straightforward procedure. $\averageswapdistance$ can be calculated by a direct implementation of \cref{eq:average_swap_distance}, that implies 36 sums and $36 \cdot 2$ products or a faster formula that requires fewer operations \citep[Appendix C.2]{Franco2024a}. The latter speeds up the brute force procedure. 

\subsection{The chance of the difference between $\averageswapdistance$ and $\averageswapdistance_r$}

Following \cite{Franco2024a}, we apply a Wilcoxon signed-rank test to determine if the difference between $\averageswapdistance$ and $\averageswapdistance_r$, when matched by language and reversibility condition, is statistically significant. The test is applied to all $\averageswapdistance$ and $\averageswapdistance_r$ matched pairs and also to each reversibility condition. The outcome of a test is $V$, the statistic and $\mathbb{P}_W$, the $p$-value.

\subsection{The chance that $\averageswapdistance = \averageswapdistance_{min}$}

We have the frequency of each order from speakers from four languages and two reversibility conditions.  
We wish to investigate the chance that $\averageswapdistance = \averageswapdistance_{min}$ in three ensembles: the reversible condition (the ensemble is formed by the four languages in that condition), the non-reversible condition (the ensemble is formed by the four languages in that condition) and any condition (the ensemble is formed by the eight language-reversibility pairs). 
We define a trial as the outcome of an experiment in one of the members of the ensemble. We wish to evaluate how likely it is that $\averageswapdistance$ is optimal across trials, that is, $\averageswapdistance = \averageswapdistance_{min}$, in an ensemble. 
We define $b_i$ as Bernoulli random variable that indicates if $\averageswapdistance$ is optimal for the $i-$th trial ($b_i = 1$ if optimal; $b_i = 0$ otherwise).  We define $B$ as the number of trials such that $\averageswapdistance$ is optimal, that is 
\begin{equation*}
B = \sum_{i=1}^T b_i,
\end{equation*} 
where $T$ is the number of trials. 

We define $\pi_o$ as the probability that a random permutation of probabilities is optimal for a certain trial. That value depends on $m$, the number of non-zero probabilities and the specific values of the non-zero probabilities.
For simplicity, we calculate $\pi_o$ numerically as the proportion of shufflings of probabilities such that $\averageswapdistance = \averageswapdistance_{min}$. We define $p_o(m)$ as the probability that a random permutation of probabilities is optimal knowing $m$ and assuming that there are no probability ties among the non-zero probability orders. In contrast to $\pi_o$, $p_o(m)$ has a simple analytical expression (Appendix \ref{app:number_of_optimal_arrangements}) 
\begin{equation}
p_o(m) \left\{
          \begin{array}{ll} 
          \frac{1}{60}(6 - m)! & \mbox{if } 1 < m \leq 6 \\
          1 & \mbox{if } m = 1.
          \end{array}
       \right.
\label{eq:probability_of_optimal_by_chance}
\end{equation}
Notice that $\pi_o = p_o(m_i)$ if there are no probability ties among the non-zero probability orders of the trial.

We define $m_i$ as the number of non-zero probability orders of the $i$-th trial. Assuming independence among the $b_i$'s, the probability that $b_i = 1$ is $\pi_o(i)$. 
As $B$ is a sum of independent Bernoulli random variables, it is well-known that $B$ follows a Poisson binomial distribution with parameters $\pi_o(1), \pi_o(i),..., \pi_o(T)$. We define $\mathbb{P}_o$ as the right $p$-value of a Poisson binomial test on $B$. 
$\mathbb{P}_o$ indicates the probability that a value of $B$ that is at least as large as the actual one is obtained by chance. $\mathbb{P}_o$ was calculated using the R package {\tt poisbinom} \citep{Olivella2025a} that implements a method based on discrete Fourier transform \citep{Hong2013a}. 

\subsection{The chance of contiguity}

\begin{table}
\caption{\label{tab:distribution_of_number_of_different_orders} The distribution of $m$, the number of non-zero probability orders in each ensemble (non-reversible, reversible or any). $T(m)$ is the number of $m$ non-zero probability orders in members of an ensemble. }
\centering
\begin{tabular}{llll}
    & \multicolumn{3}{c}{$T(m)$} \\
    \cmidrule(lr){2-4}
$m$ & non-reversible & reversible & any \\
\hline
1 & 0 & 0 & 0 \\
2 & 0 & 0 & 0 \\
3 & 1 & 1 & 2 \\
4 & 2 & 3 & 5 \\
5 & 1 & 0 & 1 \\
6 & 0 & 0 & 0 \\
\end{tabular}
\end{table}

Recall that we say that orders are contiguous when the non-zero probability orders form a path on the permutohedron (\cref{fig:contiguity}).
We wish to investigate the chance of contiguity in the three ensembles above. 
We define $T(m)$ as the number of trials with $m$ non-zero probability orders in an ensemble. Table \ref{tab:distribution_of_number_of_different_orders} shows the values of $T(m)$ in our dataset. We define $p_c(m)$ as the probability that $m$ non-zero probability orders are arranged contiguously on the permutohedron with $n=3$ by chance, which is (Appendix \ref{app:contiguous})
\begin{equation*}
p_c(m) = \left\{ 
             \begin{array}{ll} 
                 \frac{m! (6 - m)!}{5!} & \mbox{if } 1\leq m \leq 5 \\
                 1 & \mbox{if } m \in \{0, 6\}. 
             \end{array}    
          \right.
\end{equation*}
$p_c(m)$ is equivalent to the probability that $m$ randomly chosen vertices form a path in that permutohedron. 

We wish to evaluate how likely it is that the configurations are contiguous for any value of $m$ in an ensemble. To that end, we define $m_i$ as the number of non-zero probability orders of the $i$-th trial and $c_i$ as a Bernoulli variable that indicates if the configuration of the $i$-th trial is contiguous ($c_i = 1$ if contiguous; $c_i = 0$ otherwise).
We also define $C$ as the number of contiguous configurations over $T$ trials, that is
\begin{equation*}
C = \sum_{i=1}^T c_i.
\end{equation*}
Under independence among trials, $c_i$ is a Bernoulli random variable with parameter $p_c(m_i)$ and $C$ follows a Poisson binomial distribution with parameters $p_c(m_1),..., p_c(m_i), ..., p_c(m_T)$. We define $\mathbb{P}_c$ as the right $p$-value of a Poisson binomial test. $\mathbb{P}_c$ is the probability of achieving at least as many contiguous configurations as in the actual data by chance. 
$\mathbb{P}_c$ can be calculated using the same methods used for $\mathbb{P}_o$. 
However, when the non-zero probability orders are arranged consecutively on the permutohedron in all trials ($C = T$),
we have 
\begin{align*}
\mathbb{P}_c & = \prod_{m=1}^6 p_c(m)^{T(m)} \\ 
             & = \prod_{m=2}^4 p_c(m)^{T(m)}.  
\end{align*}
In our dataset, $C = T$ and $T(2) = 0$ reduce the calculation to
\begin{equation}
\mathbb{P}_c =  \left(\frac{3}{10}\right)^{T(3)} \left(\frac{2}{5}\right)^{T(4)}. \label{eq:our_probability_always_consecutive_orders} 
\end{equation}

\section{Results}
\label{sec:results}

\begin{figure}
\centering
\includegraphics[width = \linewidth]{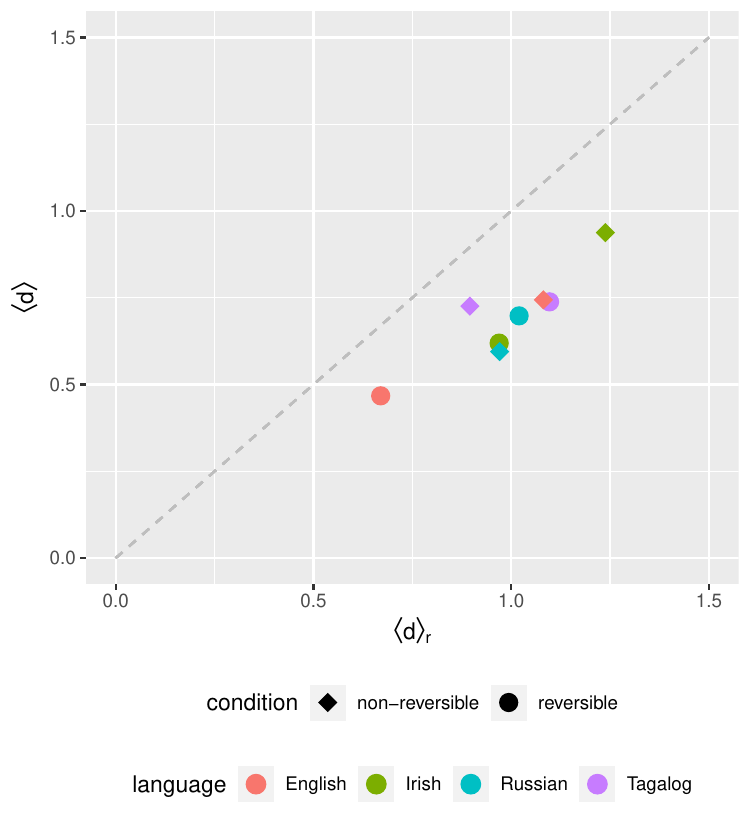}
\caption{\label{fig:swap_distance_minimization_in_crosslinguistic_gestures} The average swap distance ($\averageswapdistance$) as a function of the random baseline ($\averageswapdistance_r$) in crosslinguistic gestures. The dashed line is a control line to indicate identity, i.e. $\averageswapdistance = \averageswapdistance_r$. Points below the control line satisfy $\averageswapdistance < \averageswapdistance_r$. Color point indicates the language of the participants. Point shape indicates the reversibility condition. } 
\end{figure}

Figure \ref{fig:radar_SOV} shows the frequency of every word order. First, notice that word orders are sorted following the structure of the permutohedron (\cref{fig:permutohedron}).
Second, notice that we use logarithmic scale for gesture order frequencies so that the slice of every order that has non-zero frequency can be easily seen. Such a choice is motivated by the fact that the rank distribution of dominant orders in languages decays exponentially \citep{Cysouw2010a, Ferrer2024b}. Third, notice that all non-zero frequency orders are contiguous in the permutohedron. This is a remarkable feature that we will revisit later on.

\subsection{Evidence for swap distance minimization}

Table \ref{tab:statistical_summary_average_swap_distance} summarizes the statistical properties of crosslinguistic gestures.
Figure \ref{fig:swap_distance_minimization_in_crosslinguistic_gestures} shows that $\averageswapdistance < \averageswapdistance_r$ as predicted by the principle of swap distance minimization, independently of the language and the reversibility of the verb. The Wilcoxon signed-rank test indicates that the differences between $\averageswapdistance$ and $\averageswapdistance_r$ are unlikely to be due to chance ($V=0$ and $\mathbb{P}_W = 0.004$ when conditions are combined and $V=0$ and $\mathbb{P}_W = 0.062$ in each condition).

\begin{table*}
\caption{\label{tab:statistical_summary_average_swap_distance} Summary of the statistical information by reversibility condition and participant's language: $F$ (the total frequency), $m$ (the number of non-zero probability orders), $\dominanceindex$ (the dominance index), $\pi_o$ (the probability that a random permutation is optimal), $p_o(m)$ (the probability that a random permutation is optimal assuming no probability ties among the $m$ non-zero probability orders), 
$\averageswapdistance_{min}$ (the minimum value of $\averageswapdistance$ over all permutations), $\averageswapdistance$ (the average swap distance), $\averageswapdistance_{r}$ (the expected value of $\averageswapdistance$ in a random permutation), 
$\averageswapdistance_{max}$ (the maximum value of $\averageswapdistance$) and $\Omega$ (the optimality of $\averageswapdistance$). 
}
\centering
\begin{tabular}{@{} llllllllllll @{}}
Reversibility & Language & $F$ & $m$ & $\dominanceindex$ & $\pi_o$ & $p_o(m)$ & $\averageswapdistance_{min}$ & $\averageswapdistance$ & $\averageswapdistance_{r}$ & $\averageswapdistance_{max}$ & $\Omega$ \\
\hline
reversible & English & 121 & 4 & 0.37 & 0.033 & 0.033 & 0.41 & 0.47 & 0.67 & 1.5 & 0.78\\
 & Russian & 103 & 3 & 0.57 & 0.1 & 0.1 & 0.64 & 0.7 & 1.02 & 1.5 & 0.85\\
 & Irish & 82 & 4 & 0.54 & 0.033 & 0.033 & 0.62 & 0.62 & 0.97 & 1.5 & 1\\
 & Tagalog & 82 & 4 & 0.61 & 0.033 & 0.033 & 0.74 & 0.74 & 1.1 & 1.5 & 1\\
non-reversible & English & 119 & 4 & 0.6 & 0.033 & 0.033 & 0.71 & 0.74 & 1.08 & 1.5 & 0.91\\
 & Russian & 117 & 3 & 0.54 & 0.1 & 0.1 & 0.59 & 0.59 & 0.97 & 1.5 & 1\\
 & Irish & 81 & 4 & 0.69 & 0.033 & 0.033 & 0.94 & 0.94 & 1.24 & 1.5 & 1\\
 & Tagalog & 83 & 5 & 0.5 & 0.033 & 0.017 & 0.7 & 0.73 & 0.9 & 1.5 & 0.86\\
\\
\end{tabular}
\end{table*}

\begin{figure*}
\centering
\includegraphics[width = \linewidth]{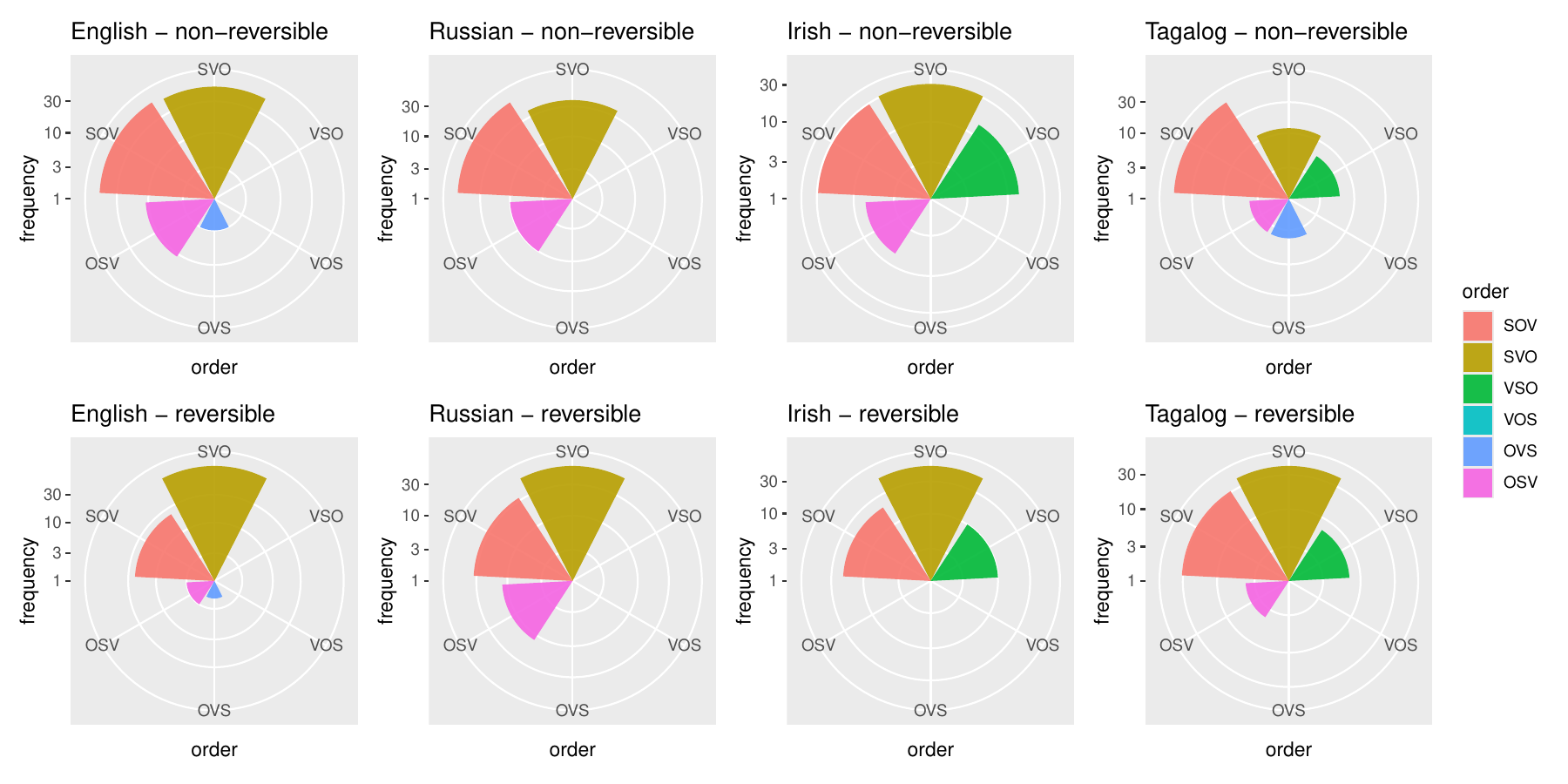}
\caption{\label{fig:radar_SOV} The distribution of gesture orders in each combination of language and reversibility condition. Languages are sorted by dominant order: SVO on the left (English and Russian) and VSO on the right (Irish and Tagalog).}
\end{figure*}
  
\subsection{The degree of optimality of crosslinguistic gestures}
  
Interestingly, we find that $\Omega \geq 0.77$ in all cases (Table \ref{tab:statistical_summary_average_swap_distance}). In half of the cases, the gesture orders are fully optimal ($\Omega = 1$): Irish (in both conditions), Tagalog (in the reversible condition) and Russian (in the non-reversible condition). The number of times the arrangements are optimal is larger than expected by chance according to the Poisson binomial test: $T = 4, B = 2$ and $\mathbb{P}_o= 0.013$ under the non-reversible condition and also under the reversible condition; $T = 8, B = 4$ and $\mathbb{P}_o = 3 \cdot 10^{-4}$ when the two conditions are mixed. Notice that we have $\pi_o = p_o(m)$ except for Tagalog under the non-reversible condition because both OSV and OVS have frequency 4 (Table \ref{tab:statistical_summary_average_swap_distance}).
 
English under the non-reversible condition is suboptimal ($\Omega < 1$) because the 4th most frequent order (OVS) is not at swap distance 1 of the 2nd most frequent order (SVO), breaking the pattern of optimal arrangements. However, the degree of optimality of English under the non-reversible condition is still high ($\Omega = 0.91$). Indeed, English under the non-reversible condition minimizes $\localaverageswapdistance$ when $i$ is SOV. Notice that it matches one of the total orders in \cref{fig:optimal_probability_arrangement} (c) and (d) that in turn correspond to one of the total orders compatible with the total preorder (but not total order) that is obtained when minimizing $\localaverageswapdistance$ (\cref{fig:locally_optimal_probability_arrangement} (b)).

\subsection{The epiphenomena of swap distance minimization}

We check two consequences of swap distance minimization: radiation from the most likely order and contiguity of the non-zero probability orders (Section \ref{sec:theory}). 
As for the latter, notice that all the cases where $\Omega = 1$ and English under the non-reversible condition exhibit perfect radiation from the most likely order: probability decreases as swap distance to the most likely order increases.
As for the former, we investigate if the contiguous arrangement of non-zero probability orders in all cases (\cref{fig:radar_SOV}) could be due to chance. Table \ref{tab:distribution_of_number_of_different_orders} summarizes the information about $T(m)$ in our dataset (Table \ref{tab:statistical_summary_average_swap_distance}). 
Applying Table \ref{tab:distribution_of_number_of_different_orders} to \cref{eq:our_probability_always_consecutive_orders}, we have that, under the reversible condition, 
\begin{equation*}
\mathbb{P}_c = \frac{3}{10} \left(\frac{2}{5}\right)^3 = \frac{12}{625} = 0.0192.
\end{equation*}
Under the non-reversible condition, 
\begin{equation*}
\mathbb{P}_c = \frac{3}{10} \left(\frac{2}{5}\right)^2 = \frac{6}{125} = 0.048.
\end{equation*}
Combining both conditions, 
\begin{equation*}
\mathbb{P}_c = \left(\frac{3}{10}\right)^2 \left(\frac{2}{5}\right)^5 = \frac{72}{78125} = 9.216 \cdot 10^{-4}.
\end{equation*}
Therefore, such as pervasiveness of contiguous configurations is unlikely to be due to chance. 

\section{Discussion}
\label{sec:discussion}

\subsection{The optimal assignment principle}

The quadratic assignment problem (QAP) consists of assigning facilities to locations so as to minimize a cost function \cite{Koopmans1957a}. 
Several problems are indeed particular cases of such a general problem (Appendix \ref{app:quadratic_assignment_problem}). In the minimum linear arrangement problem of computer science, one has to find an optimal assignment of vertices to positions in a sequence \cite{Diaz2002}. 
In the problem of compression with prescribed probabilities and lengths, one has to find an optimal assignment of lengths to probabilities, or the other way around, i.e. an optimal assignment of probabilities to lengths \cite{Ferrer2019c}. Here we have investigated the problem of minimization of average swap distance with prescribed probabilities. 

In the past, each of those subproblems has been paired with optimization principles that operate on languages and other communication systems. \cite{Ferrer2004b} introduced the minimum linear arrangement problem into language research and postulated the principle of Euclidean distance minimization. \cite{Ferrer2019c} introduced the problem of compression with prescribed probabilities and length to provide theoretical foundations for the principle of compression \cite{Ferrer2012d}. 
Here our interest in the swap distance minimization principle has taken us to an optimization problem that turns out to be a particular case QAP.  
Since QAP is a unifying umbrella for all the problems above, here we postulate a general principle of optimal assignment as an umbrella principle for the compression principle, the dependency distance minimization principle and the swap distance minimization principle. The principle of optimal assignment has the potential for explaining a myriad of linguistic phenomena: namely all the phenomena that in the past have been attributed to each of these principles: various linguistic laws \cite{Ferrer2019c} and a wide range of word order patterns \cite{Liu2017a,Temperley2018a,Rios-El-Yazidi2026a}. The optimal assignment principle is not unique to language as it also applies to economic systems \cite{Koopmans1957a} and to biological systems \cite{Semple2021a}. 
 
We have investigated the problem of the minimization of $\averageswapdistance$ under the constraint that the distribution of probabilities is fixed. We have seen that the optimal probability arrangements are obtained by following a traversal of the permutohedron graph and assigning the probabilities in non-increasing order as a new vertex is encountered (\cref{fig:optimal_probability_arrangement} (a) and (b)). Such strategy is reminiscent of the one that is needed to solve the problem of compression, the minimization of the average code length $L$, in the non-singular coding scheme \citep{Ferrer2019c}. In that setting, probabilities are sorted in non-increasing order and assigned to strings following a breadth-first traversal (BFS) of the coding tree (a tree where two strings $s$ and $t$ are joined if adding a symbol at the end of $s$ produces string $t$). For the minimization of $\averageswapdistance$ one also needs to follow a BFS of the permutohedron but not all breadth first traversals produce an optimal arrangement in general (the BFSs in \cref{fig:optimal_probability_arrangement} (a) and (b) do while the BFSs in \cref{fig:optimal_probability_arrangement} (c) and (d) do not). Future research should investigate if a specific BFS also applies to $n > 3$.	

\subsection{Theory construction}

For the sake of parsimony, linguistic theories must separate principles from their manifestations. 
For instance, the low number of edge crossings in syntactic dependency structures is to a large extent a manifestation of the principle of syntactic dependency distance minimization \citep{Ferrer2014c,Gomez2016a,Gomez2019a,Yadav2022a}. One does not need to postulate a principle of minimization of edge crossings to justify the scarcity of crossing dependencies. Similarly, here we have shown that various structural properties: adjacency (of the two most likely orders), radiation (from most likely order) and contiguity (of non-zero probability orders) are implications of a single postulate: swap distance minimization (Section \ref{subsec:overview}; see Appendix \ref{app:optimal_arrangements} for further details). Put differently, these structural properties are epiphenomena of swap distance minimization.
One does not need to postulate an independent tendency for any of those structural properties in linguistic systems as they follow simply from swap distance minimization. 
In crosslinguistic gestures, all configurations exhibit adjacency (the most frequent orders are adjacent in the permutohedron) and are contiguous (\cref{fig:radar_SOV}), as predicted theoretically. Consistently, in most languages that exhibit a couple of primary alternating dominant orders \cite{wals-81}, the orders paired are adjacent in the permutohedron even when none of the orders paired coincides with the two most frequent dominant orders, i.e. SOV and SVO \citep{Ferrer2016c}. 

\subsection{The virtue of our theoretical framework} 

There is a trade-off between generality and predictive power. General theories tend to lack precision or predictive power. However, our theory is an example of a general postulate, swap distance minimization, that has very precise implications that follows combining the permutohedron for S, O and V (\cref{fig:permutohedron} and the minimum $\averageswapdistance$ configurations (\cref{fig:optimal_probability_arrangement} (a) and (b)). If the most likely order is SOV, the prediction about the probabilities of each order is one of the following
\begin{align}
& SOV \geq SVO \geq OSV \geq VSO \geq OVS \geq VOS \nonumber \\
& SOV \geq OSV \geq SVO \geq OVS \geq VSO \geq VOS. \label{eq:SOV_as_source}
\end{align}
If the most likely order is SVO, the prediction is one of the following 
\begin{align}
& SVO \geq VSO \geq SOV \geq VOS \geq OSV \geq OVS \nonumber \\
& SVO \geq SOV \geq VSO \geq OSV \geq VOS \geq OVS. \label{eq:SVO_as_source}
\end{align}

A general challenge of linguistic theory is to overcome the bias towards Indo-European languages or languages from Western, Educated, Industrialized, Rich and Democratic (WEIRD) societies \cite{Blasi2022a}.
A specific challenge of theories of word order is their compatibility with the actual diversity of word orders of languages of the world. 
Traditional models of word order proceed essentially by pure induction from the distribution of dominant orders of the world \cite{Cysouw2008a,Futrell2015b} and thus predict that SOV or SVO should be the most preferred orders, which coincides with the fact that SOV and SVO are the most frequent dominant orders in spoken languages of the world \cite{Hammarstroem2016a}.
However, they lack the generality and precision of \cref{fig:optimal_probability_arrangement} and its implications (e.g., \cref{eq:SOV_as_source} and \cref{eq:SVO_as_source}).
There are languages such that their dominant order is not any of these two orders. This is not a problem for our theoretical framework. If the most likely order is VSO as in many Mayan and Austronesian languages, swap distance minimization predicts one of the following preferences 
\begin{align*}
& VOS \geq OVS \geq VSO \geq OSV \geq SVO \geq SOV\\
& VOS \geq VSO \geq OVS \geq SVO \geq OSV \geq SOV.   
\end{align*} 
This demonstrates that unifying approaches as ours are able to handle languages that deviate from the main trend. 

\subsection{Crosslinguistic gestures}

We have illustrated the utility of our theoretical apparatus by showing that gesture order is highly optimized according to the principle of swap distance minimization. In particular, we have found that $\Omega \geq 0.77$ independently of the participant's language and the reversibility condition. We have shown that it is unlikely that the multiple times where crosslinguistic gestures hit optimality are due to chance. 
    
We consider contiguity (Section \ref{subsec:overview}) to demonstrate the power of our theoretical apparatus for generating predictions. Such property is an implication of optimality: any optimal arrangement must be contiguous (Appendix \ref{app:optimal_arrangements}, Corollary \ref{cor:optimality}). Furthermore, any non-contiguous arrangement has a rearrangement with smaller $\averageswapdistance$ (Appendix \ref{app:contiguous}, Property \ref{prop:contiguity_of_non_zero_probability_orders}). Thus, contiguity is expected from the minimization of $\averageswapdistance$ even when the arrangement is suboptimal. Therefore, it is not surprising that all the arrangements shown in \cref{fig:radar_SOV} are contiguous even when the combination of language and reversibility conditions is not optimal ($\Omega < 1$).
We have indeed shown that the number of contiguous configurations is larger than expected by chance. Adjacency and radiation should be the next targets of future research.

\subsection{Research on spontaneous gesturing}

Unconventional gestures are a window into the origins of signed or spoken languages \cite{Corballis2012a} and into the principles that govern word or gesture order \cite{Goldin-Meadow2008a,Langus2010a}.
The factors that determine order in sequences consisting of a subject (agent), a verb (action) and an object (patient) in spontaneous gesturing experiments have received considerable attention. 
As for the semantics of the event, researchers have been concerned about several distinctions: reversible versus non-reversible events \cite{Hall2014a,Futrell2015b,Schouwstra2022a}, manipulation versus creation events \cite{Christensen2016a}, and extensional versus intensional events \cite{Schouwstra2014a}. 

Here, instead of focusing once again on the conditions that determine the most frequent order, as in most research on word/gesture order, we have taken the opposite perspective: we have examined what remains {\em invariant} across experimental conditions and across all attested orders. We have shown that the variation in gesture order reported in \cite{Futrell2015b} experiments is constrained by the structure of the underlying space of possible permutations across experimental conditions. 
Why a particular order is selected in these gestural experiments is beyond the scope of the present article and should be the subject of future research. 

\subsection{The optimality of languages}

This article is part of a research program on quantifying the degree of optimality of languages with respect to distinct optimization principles using the template of $\Omega$ (\cref{eq:optimality_score}): compression \cite{Petrini2022a}, syntactic dependency distance minimization \cite{Ferrer2020b} and swap distance minimization \cite{Rios-El-Yazidi2026b}. 
With respect to compression, languages are optimized to 62 or 67 percent on average (depending on the source) when word lengths are measured in characters, and to 65 percent on average when word lengths are measured in time \cite{Petrini2022a}. With respect to syntactic dependency distance minimization, languages are optimized to 66\% on average in the Universal Dependencies (UD) collection and that percentage grows to 71\% when using Surface-Syntactic Universal Dependencies (SUD) annotation style \cite{Ferrer2020b}. The high degree of optimization of $\Omega$ with respect to swap distance minimization in spontaneous gestures ($\Omega \geq 0.77$) is in contrast with the lower degree of optimality of spoken languages according to a $99\%$ confidence interval obtained with stratified sampling over languages with linguistic families as strata \cite{Rios-El-Yazidi2026b}. In the New Testament, the interval is $[0.66, 0.76]$. In the UD collection, the interval is $[0.39, 0.67]$ when using UD annotation style and $[0.37, 0.63]$ when using SUD annotation style. Therefore, spontaneous gestures are, in general, more optimized than conventional spoken languages. The reason of such a difference should be the subject of future research. Our reanalysis of \cite{Futrell2015b} has covered only gesturers from four spoken languages. A wider sample of spoken languages should we used for fairer comparison.

\subsection{Type-based versus token-based typology}

Traditionally, languages have been classified by their dominant order, namely the most common order displayed by a language under certain conditions \cite{wals-81}. Then Turkish is an SOV language, English is an SVO language and Cebuano is a VSO language. This approach, that reduces the actual distribution of word orders to the mode is an example of the so-called type-based typology \cite{Levshina2019a}. The alternative is the so-called token-based typology, that emphasizes that the distribution of word orders is captured more accurately by diversity indices such as entropy \cite{Levshina2023a}. The limits of type-based typology with respect to token-based typology have been criticized \cite{Levshina2019a,Levshina2023a}. Our mathematical analysis reconciles and enriches these two apparently opposite approaches. First, it is token-based for being based on an a diversity index, $\averageswapdistance$. However, $\averageswapdistance$ is more powerful than entropy for incorporating the structure of the space of permutations \cite{Franco2024a,Rios-El-Yazidi2026a}. Second, it establishes that the most frequent order has a critical role in the variation of word order: when swap distance minimization is the only force, all others emanate from the most frequent order, in particular, the frequency of other orders decreases in a particular way as their distance to the most frequent order increases (\cref{sec:theory}). Put differently, the dominant order is more than the mode. 

\subsection{Swap distance minimization and cognitive ease}

\begin{figure}
\centering
\includegraphics[width = \linewidth]{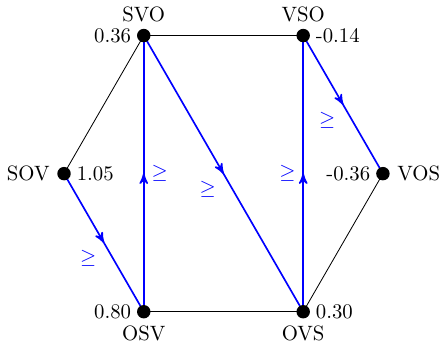}
\caption{\label{fig:Malayalam} The Hasse diagram of induced from acceptability ratings in Malayalam (blue) on the permutohedron (black). A blue arrow from vertex $i$ to vertex $j$ indicates that $p_i \geq p_j$ where $p_i$ is the acceptability rating of vertex $i$. The acceptability rating each order is shown near each vertex. 
}
\end{figure}

In our mathematical framework, probability can be replaced by a normalized cost and the mathematical results remain the same. That is, $p_i$ can be a probability or a normalized measure of cognitive ease. To illustrate the broader applicability of our framework, we leave gesture experiments and consider the case of word order acceptability judgments in Malayalam, a language with default SOV order \citep{Namboodiripad2017a}. \Cref{fig:Malayalam} shows the acceptability of each word order according to \citep[Table 2.7]{Namboodiripad2017a}. The arrangement of word order acceptability judgments in \cref{fig:Malayalam} corresponds to one of the two optimal arrangements such that SOV is the most likely order (\cref{fig:optimal_probability_arrangement}). A strong correspondence can be established algebraically: these acceptability ratings are distinct real numbers and thus induce a total order of the vertices.
Therefore, the acceptability ratings induce a Hasse diagram that is shown in \cref{fig:Malayalam} and that turns out to coincide with one of the Hasse diagrams of minimum $\averageswapdistance$ (\cref{fig:optimal_probability_arrangement}  (a) and (b)).
Thus, our theory predicts the distribution of cognitive cost on the permutohedron with higher precision than \cite{Ferrer2023a}.

The probability of an optimal arrangement of acceptability ratings has been produced by chance assuming that the most likely order can be any (that is ignoring that SOV is the default order in Malayalam) is (\cref{eq:probability_of_optimal_by_chance} with $m = 6$)
\begin{equation*}
p_o(6) = \frac{1}{60} = 0.01\bar{6}.
\end{equation*}
Knowing that SOV is the default order in Malayalam, the probability of hitting the arrangement displayed by Malayalam (and not any of the 5 alternative arrangements with another order as the most frequent) is
\begin{equation*}
\frac{1}{6}p_o(6) = \frac{1}{360} = 0.002\bar{7}.
\end{equation*}
Thus, it is unlikely that our theory makes a successful prediction by chance. 

This example has just been used as an illustration of the power of our theoretical framework. Further research in other contexts is necessary. 

\subsection{Conclusion}

To sum up, we have paved the way for (a) a large scale analysis of the degree of optimality of word or gesture order with respect to swap distance minimization, (b) the investigation of epiphenomenal properties of swap distance minimization (e.g., contiguity) in communication systems and (c) research on cognitive cost from a swap distance minimization perspective. 
Finally, notice that our mathematical framework can be applied to any permutation of three elements. $\averageswapdistance$ is a serious alternative to the Bandt-Pompe permutation entropy for the analysis of time series \cite{Bandt2002a} for incorporating the hidden structure of the permutation space \cite{Rios-El-Yazidi2026a}. Therefore, our optimality index ($\Omega$) and the underlying theoretical framework provides new tools for Ordinal Analysis \cite{Leyva2022a}.

\section*{Acknowledgments}

We are grateful to M. Serna for insights from theoretical computer science and helpful discussions. We also thank J. Rios El-Yazidi for helpful comments, A. Punnen for advice on QAP and M. Arias for calling our attention to adjusted similarity indices. This research is supported by the grant PID2024-155946NB-I00 funded by Ministerio de Ciencia, Innovación y Universidades (MICIU), Agencia Estatal de Investigación (AEI/10.13039/501100011033) and the European Social Fund Plus (ESF+).

\iftoggle{PRE}{

\section*{Data availability statement}

The code and data that support the findings of this article are openly available \cite{Ferrer2025c_open_data}.
}{
}

\appendix
\renewcommand{\theequation}{A\arabic{equation}}
\setcounter{equation}{0}


\section{Quadratic assignment problem}

\label{app:quadratic_assignment_problem}


Consider the matrix of flows $W = \{w_{ij}\}$ and the matrix of distances $D = \{d_{ij}\}$ \cite{Loiola2007a}.
$w_{ij}$ is the flow between facilities $i$ and $j$ and $d_{ij}$ is the distance between locations $i$ and $j$. $\sigma$ is a permutation on $[1, N]$. $\sigma(i)$ is the location of facility $i$. Then the Quadratic Assignment Problem (QAP) \cite{Loiola2007a} can be defined as the following optimization problem 
\begin{equation}
\min_{\sigma \in {\cal S}} \sum_{i=1}^N \sum_{j=1}^N w_{ij} d_{\sigma(i)\sigma(j)},
\label{eq:quadratic_assignment_problem_on_permutations}
\end{equation}
where ${\cal S}$ is the set of all permutation functions on $[1, N]$.                           
\Cref{eq:quadratic_assignment_problem_on_permutations} is one of the definitions of QAP when formulated by means of permutations
\cite[Section 2.1.3]{Loiola2007a}. \Cref{eq:quadratic_assignment_problem_on_permutations} defines the most common version of QAP, that is known as Koopmans–Beckmann formulation \cite{Koopmans1957a}, shortly QAP-KB. That formulation of QAP should not be confused with alternative definitions of QAP \cite{Wang2021a}.

A class of QAP-KB problems is obtained when $d_{ij}$ stands for the distance between vertices $i$ and $j$ in a graph $G$ \cite{Alvarez2007a}. We refer to it as QAP-KB-G.

\subsection{The minimum linear arrangement problem}

If $w_{ij}$ is the adjacency matrix of a graph and $G$ is the path graph (hence $d_{ij} = |i - j|$), it is well-known that QAP-KB becomes the minimum linear arrangement problem \cite[p. 218]{Garey1979}. To illustrate it, the insertion of \cref{eq:sum_of_dependency_distance_for_a_permutation} into \cref{eq:minimum_linear_arrangement_problem} yields
\begin{equation*}
D_{min} = \frac{1}{2} \min_{\sigma \in {\cal S}} \sum_{i=1}^N \sum_{j=1}^N a_{ij} |\sigma(i) - \sigma(j)|
\end{equation*}
and then $w_{ij} = a_{ij}$ and $d_{\sigma(i)\sigma(j)} = |\sigma(i) - \sigma(j)|$. 
In this setting, the expected value of $D$ over all linear arrangements (\cref{eq:average_linear_arrangement_problem}) is \cite{Ferrer2018a}
\begin{equation*}
D_r = m \frac{N + 1}{3},
\end{equation*}
where $m$ is the number of edges of the graph. 
See \cite{Alemany2021a,Alemany2021b,Alemany2022c} for the calculation $D_{min}$ and $D_r$ on trees when constraints on the possible linear arrangements are imposed. 

\subsection{Average swap distance minimization}

We will show that the calculation of $\averageswapdistance_{min}$ (\cref{eq:average_swap_distance_min_QAP}) is a special case of QAP-KB-G where $G$ is the permutohedron and $w_{ij} = p_i p_j$.
By plugging the definition of $\averageswapdistance(\sigma)$ (\cref{eq:average_swap_distance_permutation}) into the definition of $\averageswapdistance_{min}$ (\cref{eq:average_swap_distance_min_QAP}),
we obtain 
\begin{align*}
\averageswapdistance_{min} & = \min_{\sigma \in {\cal S}} \sum_{i=1}^N \sum_{j=1}^N p_{\sigma(i)}p_{\sigma(j)} d_{ij} \\
                           & = \min_{\sigma \in {\cal S}} \sum_{i=1}^N \sum_{j=1}^N p_i p_j d_{\sigma^{-1}(i)\sigma^{-1}(j)} \\
                           & = \min_{\sigma \in {\cal S}} \sum_{i=1}^N \sum_{j=1}^N p_i p_j d_{\sigma(i)\sigma(j)} \\
                           & = \min_{\sigma \in {\cal S}} \sum_{i=1}^N \sum_{j=1}^N w_{ij} d_{\sigma(i)\sigma(j)}, 
\end{align*}
where $w_{ij} = p_i p_j$.
Thus, the minimization of $\averageswapdistance$ over the space of all permutations is equivalent to a particular case of QAP-KB.
That is, the minimization of $\averageswapdistance$ is equivalent to finding an optimal assignment of probabilities to individual vertices of the permutohedron.   
 
In our variant of QAP-KB, $W$ results from the outer product of the vector $\mathbf{p}$ with itself. If $\mathbf{p}$ is a column vector, then 
\begin{equation*}
W = \mathbf{p} \mathbf{p}^T.
\end{equation*}
That implies that our $W = \{w_{ij}\}$ is symmetric and has rank 1. In our application, we have considered that $\mathbf{p}$ is a vector of probabilities and then our application is a special case of $w_{ij} = p_{ij}$ where $p_{ij}$ is a joint probability. Our particular case follows by assuming independence, that is $p_{ij} = p_i p_j$.
Besides, $D = \{d_{ij}\}$ corresponds to vertex-vertex distance on the permutohedron graph, that is a particular case of regular graph (all vertices have degree $n - 1$) with potentially useful additional features \cite[Appendix B]{Franco2024a}.

\subsection{The problem of compression}

In classic information theory, the average length of codes is defined as 
\begin{equation*}
L = \sum_{i = 1}^N p_i l_i,
\end{equation*}
where $p_i$ is the probability of a number and $l_i$ is the length of its code \cite{Cover2006a}.
The problem of compression is defined as the minimization of $L$ \cite{Cover2006a}.
In a linguist setting, $p_i$ can be interpreted as the probability of a word and $l_i$ its length in graphemes or phonemes. Classic information theory is concerned about the minimization of $L$ 
assuming that the probabilities are prescribed and the codes and their corresponding lengths can be {\em a priori} any within a coding scheme, i.e. unique decodability \cite{Cover2006a} or a less restrictive constraint that is non-singular coding \cite{Ferrer2019c}. The problem of compression has been generalized to the case where $l_i$ can be a positive real number, e.g., $l_i$ the duration of a call type in a non-human species, but imposing the constraint that the possible $l_i$'s are prescribed, namely they must be a permutation of a multiset of size $N$ \cite{Ferrer2019c}. 
In such a setting, the problem of compression can be formulated as the calculation of 
\begin{equation*}
L_{min} = \min_{\sigma \in {\cal S}} \sum_{i=1}^N p_i l_{\sigma(i)}.
\end{equation*}
It is easy to see that this is a radically simple case of QAP-KB.
First, notice that $L$ can be expressed in vectorial form as
\begin{equation*}
L = \mathbf{p} \mathbf{l}.
\end{equation*}
Assuming that $\mathbf{p}$ and $\mathbf{l}$ are column vectors, $L$ can be expressed in matrix form as $1 \times N$ row matrix $W = \mathbf{p}^T$ and 
a $N \times 1$ column matrix $D = \mathbf{l}$.
The solution to this minimization problem is \cite{Ferrer2019c}
\begin{equation}
L_{min} = \overleftarrow{\mathbf{p}} \overrightarrow{\mathbf{l}} = \overrightarrow{\mathbf{p}} \overleftarrow{\mathbf{l}},
\label{eq:minimum_average_magnitude}
\end{equation}
where $\overrightarrow{\mathbf{v}}$ is a sorting of vector $\mathbf{v}$ in increasing order and $\overleftarrow{\mathbf{v}}$ is a sorting of vector $\mathbf{v}$ in decreasing order.
$L_{min}$ (\cref{eq:minimum_average_magnitude}) is a straightforward consequence of a classic result by Hardy, Littlewood and Polya \cite{Hardy1934a} that is revisited in this article as Theorem \ref{theo:Hardy_et_al}.
In this setting, the expected value of $L$ over all assignments is simply the unweighted average \cite{Petrini2022b} 
\begin{equation*}
L_r = \frac{1}{N} \sum_{i=1}^N l_i.
\end{equation*}
\cite{Ferrer2019c} investigated generalizations of the compression problem above where the $l_i's$ are extracted from a multiset of size $N$ or greater.  

\section{Upper and lower bounds}
\label{app:bounds}

\subsection{Preliminaries}

Recall that $m$ is the number of non-zero probability orders, $1 \leq m \leq N$, and that $\pi_i$ is the value of the $i$-th largest value among the $p_i$'s. 
\begin{property}
If $1 \leq i \leq m$, we have 
\begin{equation}
\pi_i \geq \frac{1 - (i - 1)\pi_1}{m - i + 1}
\label{eq:ranked_probability_lower_bound}
\end{equation}
and 
\begin{equation}
\pi_1 \geq \frac{1}{m} \geq \frac{1}{N}.
\label{eq:maximum_probability_lower_bound}
\end{equation}
\end{property}
\begin{proof}
We have 
\begin{align*}
T & =    \sum_{i = 1}^N \pi_i \\
  & =    \sum_{j = 1}^{i - 1} \pi_j + \sum_{j = i}^m \pi_j \\
  & \leq (i-1) \pi_1 + (m - i + 1)\pi_i. 
\end{align*}
The condition $T = 1$ produces \cref{eq:ranked_probability_lower_bound} and then $\pi_1 \geq 1/m$.
\end{proof}
\Cref{eq:ranked_probability_lower_bound} is the finite support set complementary of Debowski's harmonic bound, i.e. $\pi_i \leq 1/i$ \cite{Debowski2025a}.

We revisit a useful result by Hardy, Littlewood and Polya \cite{Hardy1934a}.
Consider two vectors 
\begin{align*}
\mathbf{a} & = (a_1,..., a_i,..., a_N) \\
\mathbf{b} & = (b_1,..., b_i,..., b_N)
\end{align*}
of real numbers. 
A rearrangement of a vector $\mathbf{a}$ is a new vector resulting by applying a permutation function $\sigma$ to $\mathbf{a}$, that is
\begin{equation*}
\mathbf{a}' = (a_{\sigma(1)},..., a_{\sigma(i)},..., a_{\sigma(N)}).
\end{equation*}
We use $\overrightarrow{\mathbf{a}}$ and $\overleftarrow{\mathbf{a}}$ to refer to a rearrangement of $\mathbf{a}$ in ascending order or descending order, respectively. That is
\begin{align*}
& \overrightarrow{a}_1 \leq \overrightarrow{a}_2 \leq ... \leq\overrightarrow{a}_N\\
& \overleftarrow{a}_1 \geq \overleftarrow{a}_2 \geq ... \geq \overleftarrow{a}_N.
\end{align*}
\begin{theorem}{Rearrangement inequality.}
\label{theo:Hardy_et_al}
Given two vectors of real numbers $\mathbf{a}$ and $\mathbf{b}$ \cite[Theorem 368]{Hardy1934a},
\begin{equation*}
\sum_{i=1}^N \overrightarrow{a}_i \overleftarrow{b}_i \leq \mathbf{a'} \mathbf{b'} = \sum_{i=1}^N a'_i b'_i \leq \sum_{i=1}^N \overrightarrow{a}_i \overrightarrow{b}_i.
\end{equation*}  
\end{theorem}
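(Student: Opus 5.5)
The plan is to prove the upper bound first by an exchange argument, and then obtain the lower bound from it by negating one vector. Fix an arbitrary rearrangement $\mathbf{a}'$, $\mathbf{b}'$. Since the sum $\sum_i a'_i b'_i$ depends only on which $a$-value is paired with which $b$-value, we may reindex the positions so that $a'_1 \leq a'_2 \leq \dots \leq a'_N$, i.e. $\mathbf{a}' = \overrightarrow{\mathbf{a}}$. Then $\mathbf{b}'$ is some permutation of $\mathbf{b}$. The goal becomes showing that among all permutations of $\mathbf{b}$ paired with $\overrightarrow{\mathbf{a}}$, the sum is maximized by $\overrightarrow{\mathbf{b}}$.

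The key step is the two-element exchange. Suppose there are indices $i<j$ with $b'_i > b'_j$ (an inversion). Swapping $b'_i$ and $b'_j$ changes the sum by
\begin{equation*}
(\overrightarrow{a}_i b'_j + \overrightarrow{a}_j b'_i) - (\overrightarrow{a}_i b'_i + \overrightarrow{a}_j b'_j) = (\overrightarrow{a}_j - \overrightarrow{a}_i)(b'_i - b'_j) \geq 0 ,
\end{equation*}
because $\overrightarrow{a}_j \geq \overrightarrow{a}_i$ and $b'_i > b'_j$. Each such swap strictly decreases the number of inversions of $\mathbf{b}'$. So after finitely many swaps (at most $\binom{N}{2}$) we reach the increasing sorting $\overrightarrow{\mathbf{b}}$, and the sum never decreases along the way. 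This gives $\mathbf{a}'\mathbf{b}' \leq \sum_i \overrightarrow{a}_i \overrightarrow{b}_i$. Ties among the values cause no trouble: any permutation of $\mathbf{b}$ that is sorted increasingly has the same entrywise values as $\overrightarrow{\mathbf{b}}$.

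For the lower bound, apply the upper bound to $\mathbf{a}'$ and $-\mathbf{b}'$. The increasing sorting of $-\mathbf{b}$ is $-\overleftarrow{\mathbf{b}}$, so
\begin{equation*}
-\mathbf{a}'\mathbf{b}' \leq -\sum_i \overrightarrow{a}_i \overleftarrow{b}_i ,
\end{equation*}
which is the left inequality. Alternatively, one can rerun the exchange argument, swapping pairs that are in increasing order.

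The only point needing care is the initial reduction: the statement treats $\mathbf{a}'$ and $\mathbf{b}'$ as arbitrary rearrangements, possibly under different permutations. So we must note explicitly that applying a common permutation to both vectors leaves the inner product unchanged, which justifies sorting $\mathbf{a}'$ without loss of generality. Beyond that, the termination of the swapping process via the inversion count is routine, and no real obstacle is expected.
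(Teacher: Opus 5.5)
Your proof is correct. It differs from the paper mainly in that the paper gives no proof at all: it quotes the result as Theorem 368 of Hardy, Littlewood and P\'olya and uses it as a black box.

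Your argument has three steps:
\begin{itemize}
\item reduce, via a common permutation of positions, to the case $\mathbf{a}'=\overrightarrow{\mathbf{a}}$;
\item remove the inversions of $\mathbf{b}'$ by pairwise interchanges, each of which changes the sum by $(\overrightarrow{a}_j-\overrightarrow{a}_i)(b'_i-b'_j)\geq 0$;
\item obtain the lower bound by replacing $\mathbf{b}$ with $-\mathbf{b}$ and using that the increasing sorting of $-\mathbf{b}$ is $-\overleftarrow{\mathbf{b}}$.
\end{itemize}
This is the standard interchange proof, and it makes the appendix self-contained.

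You assert that swapping a non-adjacent inversion strictly lowers the inversion count. This is true even with ties: entries outside the swapped positions are unaffected, the pairs through an intermediate entry never gain inversions, and the swapped pair itself loses one. You can avoid checking this altogether by always swapping an adjacent inversion, which exists whenever $\mathbf{b}'$ is unsorted.

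Your exchange step is the same mechanism the paper later uses for $\averageswapdistance$ in Property~\ref{prop:increment_in_average_swap_distance_after_swap}. There, swapping two probabilities changes the cost by $2(p_y-p_x)\sum_{j\notin\{x,y\}} p_j(d_{xj}-d_{yj})$. In your setting the second factor depends only on the two swapped positions, so its sign is fixed by the ordering and a single sort is globally optimal. In the quadratic setting the second factor depends on the whole arrangement. That is why the paper must proceed case by case for $\averageswapdistance_{min}$ rather than simply sorting.
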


\subsection{Bounds of $\averageswapdistance$}

It has been shown that the range of variation of $\averageswapdistance$ satisfies \citep{Franco2024a}
\begin{equation*}
\averageswapdistance_{low} \leq \averageswapdistance \leq \averageswapdistance_{up} 
\end{equation*}
with 
\begin{align}
\averageswapdistance_{low} & = 0 \nonumber \\
\averageswapdistance_{up}  & = \min\left(\averageswapdistance_{max}, d_{max} \dominanceindex\right),
\label{eq:Franco_Sanchez_et_al_bounds}
\end{align}
where $\averageswapdistance_{max}$ is the maximum value of $\averageswapdistance_{max}$ for any $p_i$'s, $d_{max}$ is the diameter of the permutohedron and $\dominanceindex = 1 - S$, where $S$ is the Simpson index (\cref{eq:Simpson_index}).
Here we will refine the previous bounds and provide tighter bounds for $n = 3$.

$\averageswapdistance$ can be rewritten equivalently as
\begin{equation*}
\averageswapdistance = \sum_{d=0}^{d_{max}} P(d)d = \sum_{d=1}^{d_{max}} P(d)d,
\end{equation*}
where $P(d)$ is the probability mass of a certain distance $d$, i.e.  
\begin{equation}
P(c) = \sum_{i=1}^{N} \sum_{\substack{j=1 \\ d_{ij} = c}}^{N} p_i p_j.
\label{eq:probability_of_distance}
\end{equation}
and 
\begin{equation}    
\sum_{d=0}^{d_{max}} P(d) = 1.
\label{eq:normalization}
\end{equation}
We have $P(0) = S$.

The fact that $1/m \leq S \leq 1$ \citep{Franco2024a} has useful consequences. First, the range of variation of $\dominanceindex$ is \citep{Franco2024a}
\begin{equation}
0 \leq \dominanceindex \leq 1 - \frac{1}{m} \leq 1 - \frac{1}{N}.
\label{eq:range_of_variation_of_dominance_index}
\end{equation}
Second, 
\begin{align*}    
\sum_{d=1}^{d_{max}} P(d) & =    1 - P(0) \\
                          & \leq 1 - 1/m. 
\end{align*}
Hence 
\begin{align}
P(d) & \leq 1 - 1/m \nonumber \\
     & \leq 1 - 1/N \label{eq:probability_upper_bound}
\end{align}
for $1 \leq d \leq d_{max}$.
 
$\averageswapdistance$ can be expressed equivalently as \citep[Proof of Property C.3]{Franco2024a}
\begin{align}
\averageswapdistance & = d_{max}(1 - P(0)) - \sum_{d=1}^{d_{max} - 1} P(d) (d_{max} - d) \nonumber \\
                     & = d_{max}\dominanceindex - \sum_{d=1}^{d_{max} - 1} P(d) (d_{max} - d).
\label{eq:alternative2_average_swap_distance}                 
\end{align} 
Hence, it is obvious that $\averageswapdistance \leq d_{max}\dominanceindex$ (\cref{eq:Franco_Sanchez_et_al_bounds}). Next we provide useful expressions to bound $\averageswapdistance$ above and below.  

\begin{property}
\begin{align}
\averageswapdistance & = 2\dominanceindex + (d_{max} - 2)P(d_{max}) - P(1) + \nonumber \\
                     &   \sum_{d=3}^{d_{max} - 1} P(d)(d - 2) \label{eq:averageswapdistance_for_bounding_2_times_dominance_index} \\
                     & = \dominanceindex + (d_{max} - 1)P(d_{max}) + \nonumber \\ 
                     &   \sum_{d=2}^{d_{max} - 1} P(d)(d - 1). \label{eq:averageswapdistance_for_bounding_1_time_dominance_index}
\end{align}
\end{property}
\begin{proof}
Thanks to \cref{eq:normalization}, we have 
\begin{align*}
P(d) & = 1 - P(0) - \sum_{d'=1}^{d - 1} P(d') - \sum_{d'=d+1}^{d_{max}} P(d') \\
     & = \dominanceindex - \sum_{d'=1}^{d - 1} P(d') - \sum_{d'=d+1}^{d_{max}} P(d') 
\end{align*}
for $1 \leq d \leq d_{max}$. Then 
\begin{align}
P(1) & = \dominanceindex - \sum_{d=2}^{d_{max}} P(d) \label{eq:from_normalization1} \\
P(2) & = \dominanceindex - P(1) - \sum_{d=3}^{d_{max}} P(d). \label{eq:from_normalization2}
\end{align}
\Cref{eq:alternative2_average_swap_distance} gives 
\begin{align*}
\averageswapdistance & = d_{max}\dominanceindex - (d_{max} - 1)P(1) - (d_{max} - 2)P(2) - \\
                     &   \sum_{d = 3}^{d_{max} - 1} P(d) (d_{max} - d). 
\end{align*}
The application of \cref{eq:from_normalization2} leads to
\begin{align*}
\averageswapdistance & = 2 \dominanceindex - P(1) +  (d_{max} - 2)P(d_{max}) +  \\
                     &   (d_{max} - 2) \sum_{d=3}^{d_{max} - 1} P(d) - \sum_{d=3}^{d_{max} - 1} P(d)(d_{max} - d) \\
                     & = 2 \dominanceindex + (d_{max} - 2)P(d_{max}) - P(1) + \\
                     &   \sum_{d=3}^{d_{max} - 1} P(d)(d - 2).
\end{align*}
\Cref{eq:alternative2_average_swap_distance} also gives 
\begin{equation*}
\averageswapdistance = d_{max}\dominanceindex - (d_{max} - 1)P(1) - \sum_{d = 2}^{d_{max} - 1} P(d) (d_{max} - d).
\end{equation*}
The application of \cref{eq:from_normalization1} leads to 
\begin{align*}
\averageswapdistance & = \dominanceindex + (d_{max} - 1) P(d_{max}) + \\ 
                     &   (d_{max} - 1) \sum_{d=2}^{d_{max} - 1} P(d) - \sum_{d=2}^{d_{max} - 1} (d_{max} - d) \\
                     & = \dominanceindex + (d_{max} - 1)P(d_{max}) + \sum_{d=2}^{d_{max} - 1} P(d)(d - 1).\\
\end{align*}

\end{proof}

When $n=3$, \cref{eq:probability_upper_bound} gives $P(1), P(2), P(3) \leq 5/6$. The next property presents better upper bounds.

\begin{property}
\label{prop:probability_of_distance_upper_bound}
When $n = 3$,
\begin{align*}
4 Z \leq P(1), P(2) & \leq \min(2S, 1 - S) \leq \frac{2}{3} \\
2 Z \leq P(3)       & \leq \min(S, 1 - S)  \leq \frac{1}{2},
\end{align*}
where 
\begin{align*}
S & = P(0) \\
Z & = \pi_1 \pi_6 + \pi_2 \pi_5 + \pi_3 \pi_4.
\end{align*} 
\end{property}

\begin{proof}
It follows from \cref{eq:normalization} that
\begin{equation*}
P(1), P(2), P(3) \leq 1 - P(0) = 1 - S.
\end{equation*}
Following the conventions in \cref{fig:permutohedron_1_to_6} for $n = 3$, it is easy to see that (\cref{eq:probability_of_distance})
\begin{align*}
P(1) & = 2(p_1 p_2 + p_2 p_3 + p_3 p_4 + p_4 p_5 + p_5 p_6 + p_6 p_1) \\
P(2) & = 2(p_1 p_3 + p_2 p_4 + p_3 p_5 + p_4 p_6 + p_5 p_1 + p_6 p_2) \\
P(3) & = p_1 p_4 + p_2 p_5 + p_3 p_6 + p_4 p_1 + p_5 p_2 + p_6 p_3 \\
     & = 2(p_1 p_4 + p_2 p_5 + p_3 p_6).
\end{align*}
When $1 \leq d \leq 3$, $P(d)$ is of the form (\cref{eq:average_swap_distance})
\begin{equation} 
P(d) = \alpha \mathbf{p} \mathbf{p}', \label{eq:distance_probability_template}
\end{equation}
where $\alpha \in \{1, 2\}$,  
\begin{equation*} 
\mathbf{p} = (p_1,...,p_i,...,p_6),
\end{equation*}
and $\mathbf{p}'$ is a suitable rearrangement of $\mathbf{p}$.  
For $P(1)$, $\alpha = 2$ and $\mathbf{p}' = (p_2, p_3, p_4, p_5, p_6, p_1)$.
For $P(2)$, $\alpha = 2$ and $\mathbf{p}' = (p_3, p_4, p_5, p_6, p_1, p_2)$.
For $P(3)$, $\alpha = 1$ and $\mathbf{p}' = (p_4, p_5, p_6, p_1, p_2, p_3)$.
Applying Theorem \ref{theo:Hardy_et_al} to \cref{eq:distance_probability_template}, we obtain 
\begin{equation*}
2\alpha Z \leq P(d) \leq \alpha P(0) =  \alpha S,
\end{equation*}
which gives
\begin{align*}
4 Z & \leq P(1), P(2) \\
2 Z & \leq P(3).
\end{align*}
Combining the results obtained so far and $P(d) \leq 1 - P(0)$, we obtain 
\begin{align*}
P(1), P(2) & \leq \min(2 P(0), 1 - P(0)) = \min(2S, 1 - S) \\
P(3)       & \leq \min(P(0), 1 - P(0)) = \min(S, 1 - S).
\end{align*}
A simple graphical analysis shows
\begin{align*}
\min(2S, 1 - S) & \leq \frac{2}{3} \\
\min(S, 1 - S)  & \leq \frac{1}{2}.
\end{align*} 
\end{proof}

Thanks to \cref{eq:averageswapdistance_for_bounding_1_time_dominance_index}, it is obvious that $\averageswapdistance \geq \dominanceindex$ and hence we can refine \cref{eq:Franco_Sanchez_et_al_bounds} with upper and lower bounds of $\averageswapdistance$ that are both functions of $\dominanceindex$ as
\begin{align}
\averageswapdistance_{low}(\dominanceindex) & = \dominanceindex \nonumber \\ 
\averageswapdistance_{up}(\dominanceindex)  & = \min\left(\averageswapdistance_{max}, d_{max} \dominanceindex\right).
\label{eq:Franco_Sanchez_et_al_bounds_refined}
\end{align}

The following property shows the dependence on the Simpson index of the range of variation of $\averageswapdistance$ (\cref{eq:Franco_Sanchez_et_al_bounds}) when $n = 3$.

\begin{property}
When $n = 3$, 
\begin{align}
\averageswapdistance_{low}(\dominanceindex) & = \max\left(\dominanceindex, 2\dominanceindex - 2/3 \right) \nonumber \\ 
\averageswapdistance_{up}(\dominanceindex)  & = \min\left(\dominanceindex + 1, 2\dominanceindex + 1/2, \frac{3}{2}\right).
\label{eq:Franco_Sanchez_et_al_bounds_refined_n_3}
\end{align}
\end{property}
\begin{proof}
When $n = 3$, equations \cref{eq:averageswapdistance_for_bounding_2_times_dominance_index} 
and \cref{eq:averageswapdistance_for_bounding_1_time_dominance_index} give
\begin{align}
\averageswapdistance & = 2 \dominanceindex - P(1) + P(3) \label{eq:averageswapdistance_for_bounding_2_times_dominance_index_n_3} \\
\averageswapdistance & = \dominanceindex + P(2) + 2P(3) \label{eq:averageswapdistance_for_bounding_1_time_dominance_index_n_3}
\end{align}
while Property \ref{prop:probability_of_distance_upper_bound} gives
\begin{align}
P(1), P(2)   & \leq 2/3 \nonumber \\
P(3)         & \leq 1/2 \label{eq:easy_inequalities}
\end{align}
and
\begin{align*}
P(2) + 2P(3) & =    P(2) + P(3) + P(3) \\
             & \leq P(2) + P(3) + P(0) \\ 
             &      \mbox{       ($P(3) \leq P(0)$ by Property \ref{prop:probability_of_distance_upper_bound})} \\
             & =    1 - P(1). 
\end{align*}
Given \cref{eq:averageswapdistance_for_bounding_2_times_dominance_index_n_3}, $P(1) \geq 0$ and $P(3) \leq 1/2$ yield $\averageswapdistance \leq 2 \dominanceindex + 1/2$ whereas $P(1) \leq 2/3$ and $P(3) \geq 0$ yield 
$\averageswapdistance \geq 2 \dominanceindex - 2/3$.
Given \cref{eq:averageswapdistance_for_bounding_1_time_dominance_index_n_3}, $P(2) + 2P(3) \leq 1 - P(1)$ and $P(1) \geq 0$ yield $\averageswapdistance \leq \dominanceindex + 1$ whereas $P(1), P(3) \geq 0$ yield 
$\averageswapdistance \geq \dominanceindex$.
Combining the results above, we retrieve \cref{eq:Franco_Sanchez_et_al_bounds_refined_n_3} noting that the range of variation of $\dominanceindex$ is $0 \leq \dominanceindex \leq 5/6$ (\cref{eq:range_of_variation_of_dominance_index}). 

Notice it is possible to refine the bounds in \cref{eq:Franco_Sanchez_et_al_bounds_refined_n_3} by involving other expressions for $\averageswapdistance$ or replacing \cref{eq:easy_inequalities} by more precise bounds for $P(d)$ from Property \ref{prop:probability_of_distance_upper_bound}. However, the analysis by cases becomes unnecessarily complicated for our simple goal of showing that the variation of $\averageswapdistance$ is limited by simple functions of $S$. 
\end{proof}

\subsection{Lower bounds of $\Omega$}

\begin{figure}
\includegraphics[width = \linewidth]{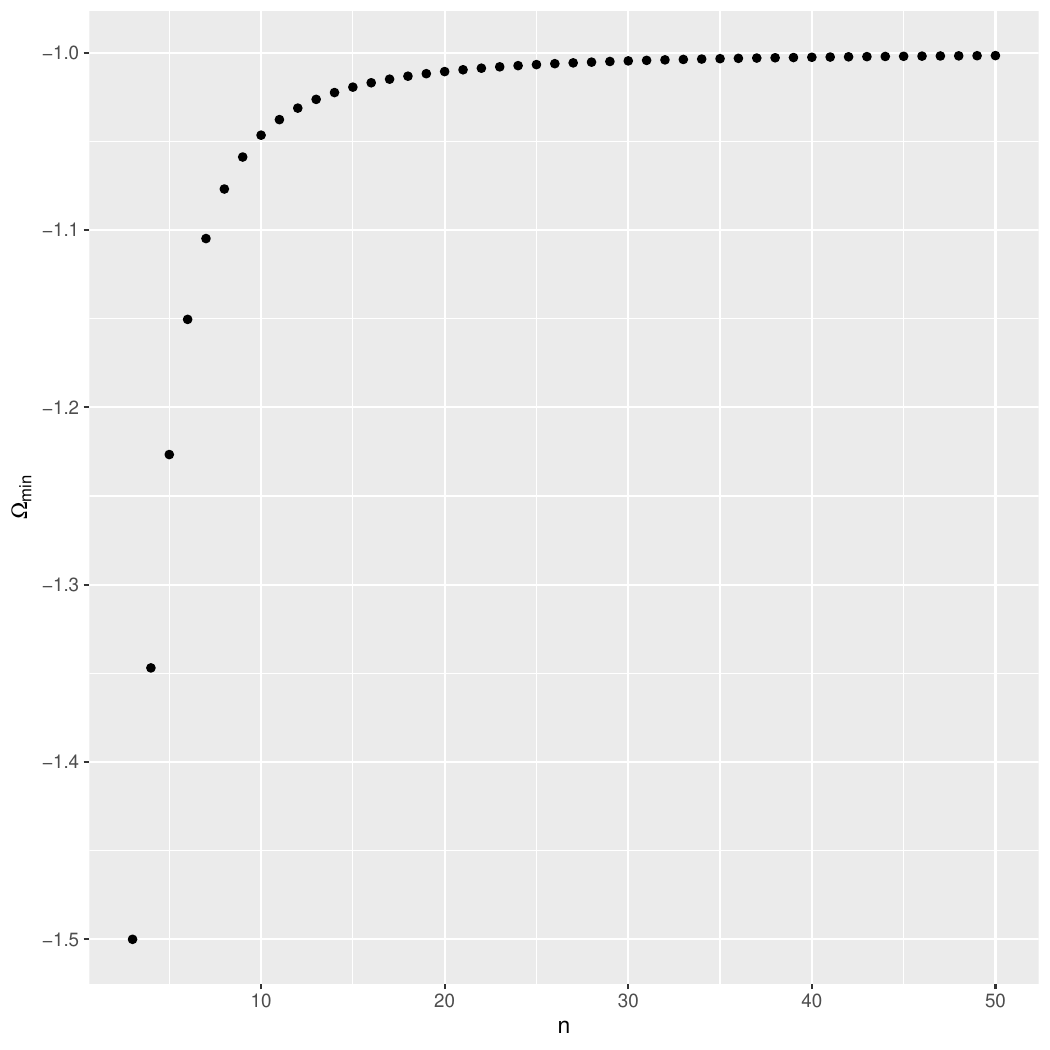}
\caption{\label{fig:Omega_min_m_2} $\Omega_{min}$, a tight lower bound of $\Omega$, when $m$, the number of non-zero probability orders, is $2$.}
\end{figure}

The following property shows sheds light on the minimum value of $\Omega$ in specific conditions.

\begin{property}
Consider $\Omega_{min}$(n), or shortly $\Omega_{min}$, as the minimum value of $\Omega$ that an arrangement can achieve on sequences of length $n$.
\begin{itemize}
\item
When $n \geq 3$ and $m = 2$, we have $\Omega = \Omega_{min}$ if and only if the two non-zero probability orders are at distance $d_{max}$, independently of the value of $\pi_1$ and $\pi_2$. In addition, $\Omega_{min}$ is a monotonically increasing function of $n$ (\cref{fig:Omega_min_m_2})
such that
\begin{equation*}
-\frac{3}{2} \leq \Omega_{min}(n) \leq - 1,
\end{equation*}
and 
\begin{align*}
\Omega_{min}(3) & = -\frac{3}{2}, \Omega_{min}(4) = -\frac{66}{49} \\ 
\Omega_{min}(5) & = -\frac{590}{481}.
\end{align*}
\item
When $n = 3$ and $m = 3$, $\Omega_{min}(3) \rightarrow -3/2^+$ at the bottom edge of the line $\pi_2 = 1 - \pi_1$ for $1/2 < \pi_1 < 1$ (\cref{fig:Omega_min_map}). 
\end{itemize}
\end{property}

\begin{proof}
By definition (\cref{eq:optimality_template}),  
\begin{equation}
\Omega \geq \Omega_{min} = \frac{\averageswapdistance_{r} - \averageswapdistance_{max}}{\averageswapdistance_{r} - \averageswapdistance_{min}}. 
\label{eq:optimality_lower_bound}
\end{equation}

When $n \geq 3$ and $m = 2$ (equations \cref{eq:average_swap_distance} and \cref{eq:Simpson_index}), 
\begin{align*}
\averageswapdistance & = 2\delta\pi_1(1 - \pi_1) \\
S                    & = \pi_1^2 + (1 - \pi_1)^2 \\
\dominanceindex      & = 2\pi_1(1 - \pi_1),
\end{align*}
where $\delta$ is the swap distance between the two non-zero probability orders. 
As $1\leq \delta \leq d_{max}$ (where $d_{max}$ is the diameter of the permutohedron), it follows that
\begin{align*}
\averageswapdistance_{min} & = 2\pi_1(1 - \pi_1) \\
\averageswapdistance_{max} & = 2 d_{max} \pi_1(1 - \pi_1).
\end{align*}
Besides, \cref{eq:expected_average_swap_distance_random_permutation} gives
\begin{equation*}
\averageswapdistance_{r} = \frac{N}{N - 1} d_{max}\pi_1(1 - \pi_1).
\end{equation*}
Then \cref{eq:optimality_lower_bound} with the shorthands 
\begin{align*}
c       & = \frac{N}{N - 1} \\
d_{max} & = \frac{n(n-1)}{2} \mbox{~~~(\cref{eq:diameter_of_permutohedron})}
\end{align*}
becomes
\begin{align*}
\Omega_{min}(n) & = \frac{\left(  c - 2 \right) d_{max} \pi_1(1 - \pi_1)}{\left( c d_{max} - 2 \right)  \pi_1(1 - \pi_1)} \\
                & = \frac{\left(  c - 2 \right) d_{max}}{c d_{max} - 2}
\end{align*}                         
independently of the value of $\pi_1$.
It is easy to see that $\Omega_{min}(n)$ is a monotonically increasing function of $n$ (\cref{fig:Omega_min_m_2}). 
$\Omega_{min}(n)$ is minimized at $n = 3$ ($\Omega_{min}(3) = -\frac{3}{2}$) and from there onward it converges to $-1$ as $n \rightarrow \infty$.

Consider the case $n = 3$. When $m = 3$, $\averageswapdistance$ is of the form
\begin{equation*}
\averageswapdistance =  2 \mathbf{d} \mathbf{q},
\end{equation*}
where the vector $\mathbf{d}$ is any permutation of the multiset of swap distances $\{d_1, d_2, d_3\}$ and the vector $\mathbf{q}$ is 
\begin{equation*}
\mathbf{q} = (\pi_1 \pi_2, \pi_1 \pi_3, \pi_2 \pi_3). 
\end{equation*}
It is easy to see that $\mathbf{q}$ is sorted in descending order, that is $\pi_1 \pi_2 \geq \pi_1 \pi_3$ and $\pi_1 \pi_3 \geq \pi_2 \pi_3$ by the definition of $\pi_i$.

To minimize $\averageswapdistance$, the arrangement must be contiguous and then $\{d_1, d_2, d_3\} = \{1, 1, 2\}$ (Appendix \ref{app:contiguous}) while $\mathbf{d}$ must be ascending since $q$ is descending (Theorem \ref{theo:Hardy_et_al}), hence 
\begin{align*}
\averageswapdistance_{min} & = 2 (1, 1, 2) (\pi_1 \pi_2, \pi_1 \pi_3, \pi_2 \pi_3) \\
                           & = 2 (\pi_1 \pi_2 + \pi_1 \pi_3 + 2\pi_2 \pi_3).
\end{align*} 
To maximize $\averageswapdistance$, the arrangement must be discontiguous and then $\{d_1, d_2, d_3\} = \{2, 2, 2\}$ or $\{d_1, d_2, d_3\} = \{1, 2, 3\}$ (Appendix \ref{app:contiguous}) while $\mathbf{d}$ must be descending since $q$ is descending (Theorem \ref{theo:Hardy_et_al}), hence we obtain two candidates for $\averageswapdistance_{max}$
\begin{align*}
\averageswapdistance_{max}^1 & = 2 (2, 2, 2) (\pi_1 \pi_2, \pi_1 \pi_3, \pi_2 \pi_3) \\
                             & = 4 (\pi_1 \pi_2 + \pi_1 \pi_3 + 2\pi_2 \pi_3) \\
\averageswapdistance_{max}^2 & = 2 (3, 2, 1) (\pi_1 \pi_2, \pi_1 \pi_3, \pi_2 \pi_3) \\
                             & = 2 (3\pi_1 \pi_2 + 2\pi_1 \pi_3 + \pi_2 \pi_3).                             
\end{align*} 
It is easy to see that $\averageswapdistance_{max}^2 \geq \averageswapdistance_{max}^1$. Notice 
\begin{equation*}
\averageswapdistance_{max}^2 - \averageswapdistance_{max}^1 \geq 2(\pi_1\pi_2 - \pi_2\pi_3) \geq 0
\end{equation*}
by the definition of $\pi_i$. Therefore, $\averageswapdistance_{max} = \averageswapdistance_{max}^2$. 

\begin{figure}
\includegraphics[width = \linewidth]{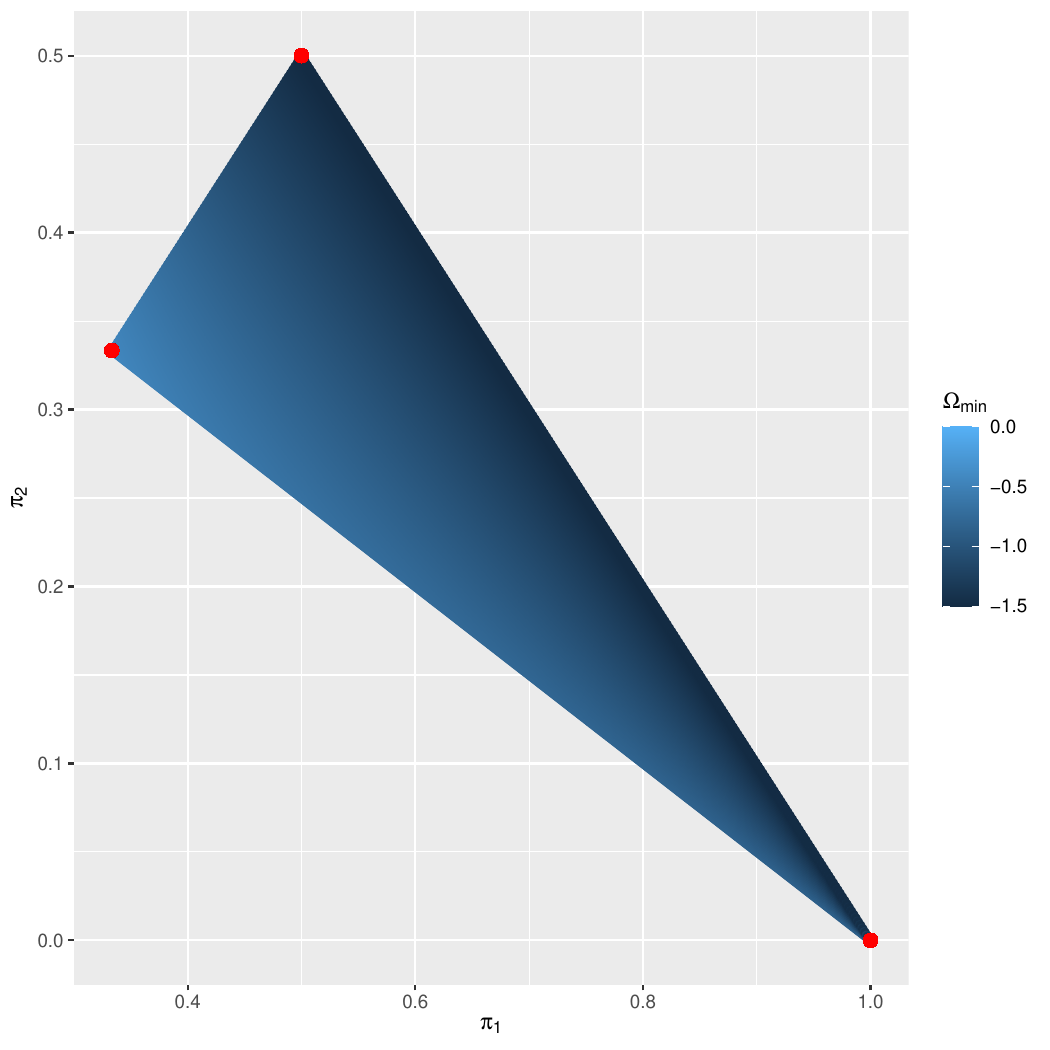}
\caption{\label{fig:Omega_min_map} $\Omega_{min}$ (\cref{eq:optimality_lower_bound}) as a function of $\pi_1$ and $\pi_2$. Red circles are used to indicate the corners of the triangle at $(1/3, 1/3)$, $(1/2, 1/2)$ and $(1, 0)$. Within the triangle, $\Omega_{min}$ was calculated by scanning exhaustively with a step of $10^{-3}$ for both variables. }
\end{figure}

As $\pi_3 = 1 - \pi_1 - \pi_2$, we can express $\averageswapdistance_{min}$, $\averageswapdistance_{max}$ and $\averageswapdistance_r$ as functions of $\pi_1$ and $\pi_2$ only, that is
\begin{align*}
\averageswapdistance_{min} & = 2\left(\pi_1 \pi_2 + (1 - \pi_1 - \pi_2)(\pi_1 + 2\pi_2)\right) \\
\averageswapdistance_{max} & = 2\left(3\pi_1 \pi_2 + (1 - \pi_1 - \pi_2)(2\pi_1 + \pi_2)\right) \\
\averageswapdistance_r     & = \frac{9}{5}\left(1 - \pi_1^2 - \pi_2^2 - (1 - \pi_1 - \pi_2)^2\right).
\end{align*}  
Then we plug the previous results into $\Omega_{min}$ (\cref{eq:optimality_lower_bound}) and minimize $\Omega_{min}(3)$ within the domain defined by the following inequalities
\begin{enumerate}
\item
$1/3 \leq \pi_1 < 1$. The lower bound follows from \cref{eq:maximum_probability_lower_bound}. The upper bound follows from the fact that $\pi_1 = 1$ would contradict $m = 3$. $\Omega$ is not defined for $m = 1$ (section \ref{subsec:how}).
\item
$(1 - \pi_1)/2 \leq \pi_2 < \min(\pi_1, 1 - \pi_1)$. $(1 - \pi_1)/2 \leq \pi_2$ follows from $\pi_3 = 1 - \pi_1 - \pi_2 \leq \pi_2$. $\pi_2 < \min(\pi_1, 1 - \pi_1)$ follows from combining $\pi_2 \leq \pi_1$ according to the definition of $\pi_i$ and $\pi_1 + \pi_2 < 1$. Notice that $\pi_1 + \pi_2 = 1$ would contradict $m = 3$. 
\end{enumerate}
Then, it is easy to see that the variation of $(\pi_1, \pi_2)$ is confined within a triangle (\cref{fig:Omega_min_map}).
$\Omega_{min}(3)$ is minimized at the bottom edge of the segment that joins the corners $(1/2, 1/2)$ and $(1, 0)$ of the triangle, more precisely the line $\pi_2 = 1 - \pi_1$ for $1/2 < \pi_1 < 1$ (\cref{fig:Omega_min_map}). On that line, one has $\pi_3 = 0$ which implies $m = 2$. Therefore $\Omega_{min}(3) = -3/2$ on that line according to analysis of the case $m = 2$. On the below edge of that line, that is in turn defined by the line, $\pi_2 = 1 - \pi_1 - \epsilon$, where $\epsilon$ is a small positive number (e.g. $\epsilon = 10^{-6}$), it is easy to check numerically that $\Omega_{min}(3) \rightarrow -3/2^+$.


\end{proof}

Consider the case $n = 3$. In light of the results above, its easy to conclude that negative values of $\Omega$ close to $-3/2 + \epsilon$, where $\epsilon$ is some small positive number, can be obtained for any $m$, not only $m = 2$ or $m = 3$. For $m=6$, it suffices to assign two non-zero probabilities totaling $1-\zeta$, where $\zeta$ is a small positive number, to a couple of vertices at distance $3$ in the permutohedron and then set the probabilities of the remainder of the vertices to $\zeta/4$ and choose a sufficiently small $\zeta$. Similar arguments can be made for $m=4$ or $m=5$. Thus the open problem is if $\Omega$ can be smaller than $-3/2$ when $n = 3$ and $m > 3$.   

\section{Optimal arrangements}
\label{app:optimal_arrangements}

\subsection{Preliminaries}

$p_i$ is the probability of the order corresponding to vertex $i$. We also assume that the distribution of order probabilities is fixed, i.e. the probability vector 
\begin{equation*}
\mathbf{p}=(p_1,..., p_i,...,p_N)
\end{equation*} 
can only be one of the permutations of some probability vector $\pi$ such that 
\begin{equation*}
\boldsymbol{\pi}=(\pi_1,..., \pi_i,..., \pi_N)
\end{equation*}
and $\pi_i \geq \pi_{i+1}$ for $1 \leq i \leq N - 1$.  
Put differently, the multiset of probabilities of any vector $\mathbf{p}$ is constant, it is always $\{\pi_1,..., \pi_i,..., \pi_N\}$. An arrangement is some permutation of $\boldsymbol{\pi}$, that is
\begin{equation*}
(\pi_{\sigma(1)},..., \pi_{\sigma(i)},..., \pi_{\sigma(6)}).
\end{equation*}
We will use $\mathbf{p}$, $\mathbf{p}'$, $\mathbf{p}''$ to refer to arrangements.  

\subsection{Local optimality}

In this subsection we are interested in the local optima, namely the optimal arrangements with respect to $\localaverageswapdistance$ (\cref{eq:local_average_swap_distance}) given some $i$. 

We define $\localaverageswapdistance(\sigma)$ as the value of $\localaverageswapdistance$ for some permutation $\sigma$ as
\begin{equation*}
\localaverageswapdistance = \sum_{j=1}^N p_\sigma(j) d_{ij}.
\end{equation*}
Then, the minimum and the maximum of $\localaverageswapdistance$ over all possible permutations are
\begin{align*}
\localaverageswapdistance_{min} & = \min_{\sigma \in {\cal S}} \localaverageswapdistance(\sigma) \\
\localaverageswapdistance_{max} & = \max_{\sigma \in {\cal S}} \localaverageswapdistance(\sigma).
\end{align*}

The next property presents tight bounds for $\localaverageswapdistance$ and loose bounds for $\averageswapdistance$.
\begin{property}
\label{prop:local_optimality}
\begin{equation*}
\localaverageswapdistance_{min} = \boldsymbol{\pi}\overrightarrow{\boldsymbol{d}} \leq \averageswapdistance, \localaverageswapdistance \leq \localaverageswapdistance_{max} = \boldsymbol{\pi}\overleftarrow{\boldsymbol{d}}.  
\end{equation*}
When $n = 3$,
\begin{align}
\averageswapdistance, \localaverageswapdistance & \geq \localaverageswapdistance_{min} = \pi_2 + \pi_3 + 2(\pi_4 + \pi_5) + 3\pi_6 \nonumber \\ 
\averageswapdistance, \localaverageswapdistance & \leq \localaverageswapdistance_{max} = 3\pi_1 + 2(\pi_2 + \pi_3) + \pi_4 + \pi_5. \label{eq:local_average_swap_distance_min_max_n=3}     
\end{align}
Following the vertex labeling in \cref{fig:permutohedron_1_to_6}, and assuming that $i = 1$, the minimum arrangements satisfy the total preorder of vertices in \cref{fig:locally_optimal_probability_arrangement}, that is given by
\begin{align}
& p_1 \geq p_2 \geq p_6 \geq p_3 \geq p_5 \geq p_4 \nonumber \\  
& p_6 \geq p_2 \nonumber \\ 
& p_4 \geq p_5. \label{eq:total_preorder_local_minimum}
\end{align}
The maximum arrangements satisfy the total preorder of vertices that is given by
\begin{align}
& p_1 \leq p_2 \leq p_6 \leq p_3 \leq p_5 \leq p_4 \nonumber \\
& p_6 \leq p_2 \nonumber \\
& p_5 \leq p_3. \label{eq:total_preorder_local_maximum}
\end{align}
\end{property}
\begin{proof}
We define $\mathbf{d}$ as vector containing a permutation of the multiset of the swap distances of a vertex, say $i$, to any other vertex.
$\localaverageswapdistance$ can be expressed as $\localaverageswapdistance = \boldsymbol{\pi}\mathbf{d}$.
By theorem \ref{theo:Hardy_et_al},
\begin{equation*}
\localaverageswapdistance_{min} = \boldsymbol{\pi}\overrightarrow{\boldsymbol{d}} \leq \localaverageswapdistance \leq \localaverageswapdistance_{max} = \boldsymbol{\pi}\overleftarrow{\boldsymbol{d}}. 
\end{equation*}
When $n = 3$, the multiset is $\{0, 1, 1, 2, 2, 3\}$ and then equation \cref{eq:local_average_swap_distance_min_max_n=3} follows immediately. Then \cref{eq:average_swap_distance_using_local_average_swap_distance} gives
\begin{align*}
\averageswapdistance & \geq \sum_{i=1}^N p_i \localaverageswapdistance_{min} \\
                     & \geq \localaverageswapdistance_{min} \sum_{i=1}^N p_i \\ 
                     & \geq \localaverageswapdistance_{min}.                      
\end{align*}
Analogously, $\averageswapdistance \leq \localaverageswapdistance_{max}$. 
\cref{eq:total_preorder_local_minimum} and \cref{eq:total_preorder_local_maximum} follow by noting that the multiset induces a ranking of the vertices with ties which in turn induces a total preorder of the vertices of the permutohedron. The directed graph that defines the total preorder has cycles, that is the edges $(2,6)$ and $(6,2)$ form a cycle; the edges (3,5) and (5,3) form another cycle (\cref{fig:locally_optimal_probability_arrangement} (a)) and then the transitive reduction is not unique \cite{Aho1972a}. \Cref{fig:locally_optimal_probability_arrangement} (b) shows only one of the Hasse diagrams. The transitive reduction was computed with the tool {\tt tred} of the Graphviz toolset (\url{https://graphviz.org/}).
\end{proof}

The previous result has some immediate implications summarized in the following corollary. 
\begin{corollary}{Consequences of local optimality.}
When $n = 3$, the arrangements that minimize $\localaverageswapdistance$ satisfy the following conditions
\begin{enumerate}
\item
{\em Radiation from vertex $i$}. Probability remains the same or decreases as one moves away from vertex $i$ (\cref{fig:locally_optimal_probability_arrangement}). More precisely, vertex $i$ is assigned $\pi_1$. The vertices at distance 1 are assigned probabilities $\pi_2$ and $\pi_3$, the vertex at distance $2$ are assigned probabilities $\pi_4$ and $\pi_5$ and the vertex at distance $3$ is assigned $\pi_6$.
\item 
{\em Adjacency of the most likely orders}. The two most likely orders are linked in the permutohedron (\cref{fig:locally_optimal_probability_arrangement}).
\item
{\em Contiguity of the non-zero probability orders}. The $m$ non-zero probability orders are contiguous in the permutohedron, namely they form a path $m$ vertices in the permutohedron (\cref{fig:contiguity}).
\end{enumerate}
\label{cor:local_optimality}
\end{corollary}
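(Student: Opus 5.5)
The plan is to derive all three consequences directly from Property \ref{prop:local_optimality}, in particular from the total preorder \cref{eq:total_preorder_local_minimum} satisfied by the minimizing arrangements. First, recall from the proof of that property that $\localaverageswapdistance = \boldsymbol{\pi}\mathbf{d}$, where $\mathbf{d}$ is a rearrangement of the multiset $\{0,1,1,2,2,3\}$. By Theorem \ref{theo:Hardy_et_al}, the minimum is attained exactly when $\mathbf{d}$ is sorted ascending against the descending $\boldsymbol{\pi}$. When $\boldsymbol{\pi}$ has ties, it is attained up to exchanges among tied values. Both cases amount to assigning $\pi_1$ to the vertex at distance $0$, $\{\pi_2,\pi_3\}$ to the two vertices at distance $1$, $\{\pi_4,\pi_5\}$ to the two at distance $2$, and $\pi_6$ to the antipodal vertex. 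One step needs care: showing that \emph{every} minimizer has this form (possibly after relabeling tied probabilities). I would argue this by a swap argument. If some pair of vertices $u,v$ had $d_{iu}<d_{iv}$ but $p_u<p_v$, exchanging their probabilities changes $\localaverageswapdistance$ by $(p_v-p_u)(d_{iu}-d_{iv})<0$, which contradicts minimality. Radiation (item 1) is then immediate. Taking $i=1$ with the labeling of \cref{fig:permutohedron_1_to_6}, it is exactly \cref{eq:total_preorder_local_minimum}.

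For adjacency (item 2), the two largest probabilities $\pi_1$ and $\pi_2$ sit on vertex $i$ and on one of its neighbours, respectively, so the corresponding orders are linked by an edge. If $\pi_2$ is tied with other values, some choice of second most likely order is adjacent, and the statement is read in this sense.

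For contiguity (item 3), I would use the fact that the nonzero probabilities are $\pi_1,\dots,\pi_m$. By radiation they occupy the $m$ vertices closest to $i$ in the preorder: vertex $i$, then its neighbours, then the distance-$2$ vertices, and so on. I would then check case by case on the hexagon $1\!-\!2\!-\!3\!-\!4\!-\!5\!-\!6\!-\!1$, with $i=1$, that these vertex sets induce paths:
\begin{itemize}
\item $m=1$: $\{1\}$;
\item $m=2$: $\{1,2\}$ or $\{1,6\}$;
\item $m=3$: $\{6,1,2\}$;
\item $m=4$: $\{6,1,2,3\}$ or $\{5,6,1,2\}$;
\item $m=5$: $\{5,6,1,2,3\}$;
\item $m=6$: the whole cycle, which contains a Hamiltonian path.
\end{itemize}

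The main obstacle is the tie handling in the swap argument when zero probabilities are tied. For example, with $m=4$ we need $\pi_5=\pi_6=0$, so the zeros could be placed on vertices $4$ and $5$, or on $3$ and $4$. In either case the support is an arc of consecutive vertices of the hexagon containing $1$, and is therefore a path. A minimizer could in principle put a zero on a distance-$2$ vertex while a positive value sits at distance $3$, but the strict swap inequality above rules this out, because it only requires $p_u<p_v$ with $d_{iu}<d_{iv}$. Hence the support is always a union of full distance shells plus part of the next shell, and the enumeration above covers every case.
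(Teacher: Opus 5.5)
Your proposal is correct and follows essentially the same route as the paper. The paper derives items 1 and 2 directly from $\localaverageswapdistance_{min} = \boldsymbol{\pi}\overrightarrow{\boldsymbol{d}}$ in Property \ref{prop:local_optimality}, and item 3 from item 1 because a discontiguity would violate one of the inequalities of the total preorder; your exchange argument (every minimizer respects the preorder even with ties, so the support is a union of complete distance shells plus part of the next shell) and your enumeration of the possible supports make explicit the steps the paper leaves implicit.
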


\begin{proof}
1 and 2 follow trivially from $\localaverageswapdistance_{min} = \boldsymbol{\pi}\overrightarrow{\boldsymbol{d}}$ (Property \ref{prop:local_optimality}). 3 follows from 1 because a discontiguity would violate one of the inequalities described in \cref{fig:locally_optimal_probability_arrangement}.
\end{proof}

Figure \ref{fig:optimal_probability_arrangement} shows the four total orders of vertices that minimize $\localaverageswapdistance$ assuming that $i$ is the left-most vertex.
Later on (Appendix \ref{subsec:structure_of_optimal_arrangements}), we will show that the arrangements that minimize $\averageswapdistance$ correspond to the two total orders in \cref{fig:optimal_probability_arrangement} (a) and (b). Thus Corollary \ref{cor:local_optimality} also applies to arrangements that minimize $\averageswapdistance$. A critical difference is that the minima of $\localaverageswapdistance$ define total preorders (\cref{eq:total_preorder_local_minimum}) while the minima of $\averageswapdistance$ define total orders.
 
\subsection{Swapping probabilities}

We assume $n=3$. We label the vertices of the permutohedron with numbers from 1 to 6 in a clockwise sense as in \cref{fig:permutohedron_1_to_6}. 

\begin{figure}
\centering
\includegraphics[width = \lonelypermutohedron \linewidth]{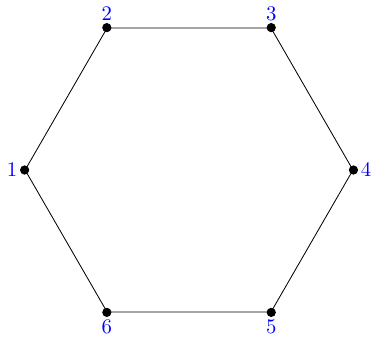}
\caption{\label{fig:permutohedron_1_to_6} The permutohedron of order 3. Vertices are labeled with consecutive numbers in a clockwise sense.} 
\end{figure}

An arrangement $\mathbf{p}$ can be transformed into a another arrangement $\mathbf{p}'$ by swapping the probabilities of a pair of vertices $x$ and $y$. 
$p_i'$, the probability of $i$ after that swap, is 
\begin{equation}
p_i' = \left\{
          \begin{array}{ll}
          p_y & \mbox{if } i = x \\ 
          p_x & \mbox{if } i = y \\ 
          p_i & \mbox{if } i \notin \{x, y \}.
          \end{array}
       \right.
       \label{eq:new_probability_after_swap}
\end{equation}           
We also use $\averageswapdistance'$ to denote the new value of $\averageswapdistance$ after a swap of a pair of probabilities and $\averageswapdistance''$ to denote the increment in $\averageswapdistance$ after two swaps of a pair of probabilities. The following property gives the increment in $\averageswapdistance$ after one or two of such swaps. 

\begin{property}
\label{prop:increment_in_average_swap_distance_after_swap}
If the probabilities of vertex $x$ and vertex $y$ are swapped, the variation in $\averageswapdistance$ after the swap is
\begin{equation}
\averageswapdistance' - \averageswapdistance = 2 (p_y - p_x)\sum_{j \in [1,6]\setminus\{x,y\}}  p_j (d_{xj} - d_{yj})
\label{eq:increment_in_average_swap_distance_after_swap} 
\end{equation}
\end{property}

\begin{proof}   
Recall \cref{eq:average_swap_distance}. We can see $\averageswapdistance$ as the sum of the square matrix $A = \left\{ d_{ij} p_i p_j \right\}$ and $\averageswapdistance'$ as the sum of another square matrix $A'$.
Then
\begin{eqnarray*}
\averageswapdistance' - \averageswapdistance & = & \underbrace{\sum_{j \in [1,6]} (p_x'p_j' - p_xp_j)d_{xj}}_{i=x} + \\ 
                                             &   & \underbrace{\sum_{j \in [1,6]} (p_y'p_j' - p_yp_j)d_{yj}}_{i=y} + \\ 
                                             &   & \underbrace{\sum_{i \in [1,6]\setminus\{x,y\}} (p_i'p_x' - p_ip_x)d_{ix}}_{j=x} + \\
                                             &   & \underbrace{\sum_{i \in [1,6]\setminus\{x,y\}} (p_i'p_y' - p_ip_y)d_{iy}}_{j=y} \\
                                             & = & \sum_{j \in [1,6]\setminus\{x\}} (p_x'p_j' - p_xp_j)d_{xj} + \\
                                             &   & \sum_{j \in [1,6]\setminus\{y\}} (p_y'p_j' - p_yp_j)d_{yj} + \\ 
                                             &   & \sum_{j \in [1,6]\setminus\{x,y\}} (p_x'p_j' - p_xp_i)d_{xj} + \\
                                             &   & \sum_{j \in [1,6]\setminus\{x,y\}} (p_y'p_j' - p_yp_j)d_{yj} \\
                                             &   & \mbox{ ($d_{jj} = 0$ and change of index} \\
                                             &   & \mbox{  in 3rd and 4th summations)} \\  
                                             & = & 2\sum_{j \in [1,6]\setminus\{x,y\}} \left[ (p_x'p_j' - p_xp_j)d_{xj} + \right. \\
                                             &   & \left. (p_y'p_j' - p_yp_j)d_{yj} \right] + \\
                                             &   & 2(p_x'p_y' - p_xp_y)d_{xy} \\
                                             &   & \mbox{~~~~~~(regrouping and factoring out).}
\end{eqnarray*}
Thanks to \cref{eq:new_probability_after_swap}, $p_i' p_j' = p_i p_j$ if $|\{i, j\} \cap \{x, y \}| \neq 1$. 
Then
\begin{eqnarray*}                                               
\averageswapdistance' - \averageswapdistance & = & 2\sum_{j \in [1,6]\setminus\{x,y\}} \left[ (p_y p_j - p_xp_j)d_{xj} + \right. \\
                                             &   & \left(p_x p_j - p_yp_j)d_{yj} \right] \\
                                             &   & \mbox{~~~~~~($p_x'=p_y$, $p_y'=p_x$ and $p_j' = p_j$} \\
                                             &   & \mbox{~~~~~~~for $j \notin \{x,y\}$)} \\
                                             & = & 2(p_y - p_x)\sum_{j \in [1,6]\setminus\{x,y\}} p_j (d_{xj} - d_{yj}) \\ 
                                             &   & \mbox{~~~~~~(factoring out).}                                                                                                                                                                              
\end{eqnarray*} 
\end{proof}

\begin{property}
\label{prop:increment_in_average_swap_distance_after_2_swaps}
Consider two pairs of vertices of the permutohedron, $(x, y)$ and $(w, z)$, such that 
\begin{enumerate}
\item
$x \neq y$, $w \neq z$ 
\item
They are independent, i.e. $\{x, y\} \cap \{w, z\} = \emptyset$. 
\end{enumerate}
Let $\averageswapdistance''$ be the new value of $\averageswapdistance$ after the probabilities of vertices of the pair $(x, y)$ are swapped and the probabilities of the vertices in the pair $(w, z)$ are also swapped.
The vertices that are not swapped are $\gamma$ and $\delta$, that is $\{\gamma, \delta\} = [1, 6]\setminus\{w, x, y, z\}$.
Then
\begin{eqnarray*}
\frac{\averageswapdistance'' - \averageswapdistance}{2} & = & (p_y - p_x) \left[ p_\gamma (d_{x\gamma} - d_{y\gamma}) + p_\delta (d_{x\delta} - d_{y\delta}) + \right. \\
                                                        &   & \left. p_w (d_{xw} - d_{yw}) + p_z (d_{xz} - d_{yz}) \right] + \\   
                                                        &   & (p_z - p_w) \left[ p_\gamma (d_{w\gamma} - d_{z\gamma}) + p_\delta (d_{w\delta} - d_{z\delta}) + \right. \\
                                                        &   & \left. p_x (d_{yw} - d_{yz}) + p_y (d_{xw} - d_{xz}) \right] 
\end{eqnarray*} 
regardless of the order of the swaps. 
\end{property}

\begin{proof} 
We define $\Delta_i$ as the increment in $\averageswapdistance$ after the $i$-th swap.
Then 
\begin{eqnarray*}
\averageswapdistance' & = & \averageswapdistance + \Delta_1 \\
\averageswapdistance'' & = & \averageswapdistance + \Delta_2 \\
\averageswapdistance'' - \averageswapdistance & = & \Delta_1 + \Delta_2.
\end{eqnarray*} 
Without loss of generality, suppose that the 1st swap is on $(x,y)$. We define $p_i'$ as the new value of $p_i$ after the 1st swap. 
Then Property \ref{prop:increment_in_average_swap_distance_after_swap} yields
\begin{widetext}
\begin{eqnarray*}
\frac{\Delta_1}{2} & = & (p_y  - p_x)  \sum_{j \in [1,6]\setminus\{x,y\}}     p_j  (d_{xj} - d_{yj}) \\
                   & = & (p_y  - p_x)  \left[\sum_{j \in \{\gamma, \delta\}}  p_j  (d_{xj} - d_{yj}) + p_w(d_{xw} - d_{yw}) + p_z(d_{xz} - d_{yz})\right] \\
\frac{\Delta_2}{2} & = & (p_z' - p_w') \sum_{j \in [1,6]\setminus\{w,z\}}     p_j' (d_{wj} - d_{zj}) \\ 
                   & = & (p_z  - p_w)  \left[\sum_{j \in \{\gamma, \delta\}}  p_j  (d_{wj} - d_{zj}) + p_x(d_{wy} - d_{zy}) + p_y(d_{wx} - d_{zx})\right] \\ 
                   &   & \mbox{(\cref{eq:new_probability_after_swap}).}
\end{eqnarray*}
Finally,
\begin{eqnarray*}
\frac{\Delta_1 + \Delta_2}{2} & = & \sum_{j \in \{\gamma, \delta\}}  p_j \left[ (p_y - p_x) (d_{xj} - d_{yj}) + (p_z - p_w) (d_{wj} - d_{zj}) \right] +\\
                              &   & (p_y - p_x) \left[ p_w (d_{xw} - d_{yw}) + p_z (d_{xz} - d_{yz}) \right] + (p_z - p_w) \left[ p_x (d_{yw} - d_{yz}) + p_y (d_{xw} - d_{xz}) \right] \\
                              & = & (p_y - p_x) p_\gamma (d_{x\gamma} - d_{y\gamma}) + (p_z - p_w) p_\gamma (d_{w\gamma} - d_{z\gamma}) + (p_y - p_x) p_\delta (d_{x\delta} - d_{y\delta}) + (p_z - p_w) p_\delta (d_{w\delta} - d_{z\delta}) +\\                              
                              &   & (p_y - p_x) \left[ p_w (d_{xw} - d_{yw}) + p_z (d_{xz} - d_{yz}) \right] + (p_z - p_w) \left[ p_x (d_{yw} - d_{yz}) + p_y (d_{xw} - d_{xz}) \right] \\                              
                              & = & (p_y - p_x) \left[ p_\gamma (d_{x\gamma} - d_{y\gamma}) + p_\delta (d_{x\delta} - d_{y\delta}) + p_w (d_{xw} - d_{yw}) + p_z (d_{xz} - d_{yz}) \right] + \\
                              &   & (p_z - p_w) \left[ p_\gamma (d_{w\gamma} - d_{z\gamma}) + p_\delta (d_{w\delta} - d_{z\delta}) + p_x (d_{yw} - d_{yz}) + p_y (d_{xw} - d_{xz}) \right].                                     
\end{eqnarray*}
\end{widetext}
\end{proof}

\subsection{Creation of $/$-structures and $\wedge$-structures}

Assuming $n = 3$, we examine the optimality of two structures that can be formed in an arrangement by swapping the probabilities of vertices: $/$-structures and $\wedge$-structures. 
 
An arrangement has a slash-structure, shortly $/$-structure, formed by vertices $u$ and $v$ if 
\begin{enumerate}
\item
$u$ and $v$ are adjacent in the permutohedron.
\item
$v$ follows $u$ in a clockwise sense, that is $v = (u + 1) \bmod 6$.
\item
$u$ and $v$ have the two largest probabilities, that is 
\begin{equation*}
p_u, p_v \geq \pi_2.
\end{equation*}
If there are no probability ties, $u$ and $v$ are the two vertices with the two highest probabilities. 
\end{enumerate}
Languages that have a pair of dominant orders that are adjacent in the permutohedron have a $/$-structure \citep{Ferrer2016c}.

An arrangement has a $\wedge$-structure formed by vertices $t$, $u$, $v$ if   
\begin{enumerate}
\item
$t$, $u$, $v$ form a path in the permutohedron. That is $t$ and $v$ are neighbors of $u$ in the permutohedron.  
\item
They are consecutive in a clockwise sense, that is $u = (t + 1) \bmod 6$ and $v = (u + 1) \bmod 6$.  
\item
These three vertices have the three largest probabilities,
\begin{equation*}
\min(p_t, p_u, p_v) \geq \pi_3. 
\end{equation*}
If there are no probability ties, $s$, $u$ and $v$ are the three vertices with the three highest probabilities.
\item
$u$, the vertex in the middle, has maximum probability ($p_u = \pi_1$).
\item
The vertex in the middle is at least as likely as the two vertices at the ends,
$p_u \geq p_t, p_v$. Hence the term wedge-up structure, since probability may increase (but never decrease) from $t$ to $u$ and may drop (but never increase) from $u$ to $v$. 
\end{enumerate}
If an arrangement has a $\wedge$-structure formed by $t$, $u$ and $v$, then $t$ and $u$ or $u$ and $v$ form a $/$-structure. 

The following lemma shows that an arrangement that does not have a $/$-structure can be transformed into an arrangement that has a $/$-structure and lower or equal $\averageswapdistance$ in just one swap provided that certain conditions are satisfied. 

\begin{lemma}[Creation of a $/$-structure with lower or equal $\averageswapdistance$ in one swap]
\label{lem:creation_of_slash_structure_with_1_swap}
Without loss of generality, suppose that $1$ is a vertex with maximum probability ($p_1 = \pi_1$). Consider a vertex $x$, a neighbor of $1$ in the permutohedron. That is $x \in \{2, 6\}$. 
Consider another vertex $y$ that has the second largest probability and is not a neighbor of $1$. That is $p_y = \pi_2$ and $y \in [1, 6]\setminus\{1, 2, 6\}$.
By swapping the probabilities of the pair $(x, y)$ one will produce a $/$-structure formed by $1$ and $x$ such that $\averageswapdistance' \leq \averageswapdistance$ if and only if one the following conditions is met
\begin{enumerate}
\item
$y = 4$.
\item
$y = 3$. [$x = 2$ and $p_4 + p_5 \leq p_1 + p_6$] or [$x = 6$ and $p_2 + p_4 \leq p_1 + p_5$].
\item
$y = 5$. [$x = 2$ and $p_4 + p_6 \leq p_1 + p_3$] or [$x = 6$ and $p_3 + p_4 \leq p_1 + p_2$].
\end{enumerate}
We have $\averageswapdistance' = \averageswapdistance$ if and only if
\begin{enumerate}
\item
$y = 4$. [$x = 2$ and ($p_1 = p_5$ or $p_2 = p_4$)] or [$x = 6$ and ($p_1 = p_3$ or $p_4 = p_6$)].
\item
$y = 3$. [$x = 2$ and ($p_2 = p_3$ or $p_4 + p_5 = p_1 + p_6$)] or [$x = 6$ and ($p_3 = p_6$ or $p_2 + p_4 = p_1 + p_5$)].
\item
$y = 5$. [$x = 2$ and ($p_2 = p_5$ or $p_4 + p_6 = p_1 + p_3$)] or [$x = 6$ and ($p_5 = p_6$ or $p_3 + p_4 = p_1 + p_2$)].
\end{enumerate}
\end{lemma}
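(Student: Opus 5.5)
The plan is a direct computation based on Property \ref{prop:increment_in_average_swap_distance_after_swap}. Swapping the probabilities of $x$ and $y$ changes $\averageswapdistance$ by
\begin{equation*}
\averageswapdistance' - \averageswapdistance = 2 (p_y - p_x)\sum_{j \in [1,6]\setminus\{x,y\}} p_j (d_{xj} - d_{yj}).
\end{equation*}
With the labeling of \cref{fig:permutohedron_1_to_6}, the permutohedron for $n=3$ is a hexagon. Hence $d_{ij} = \min(|i-j|, 6-|i-j|)$, and every coefficient $d_{xj} - d_{yj}$ is a small integer that can be read off directly.

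The first factor has a fixed sign. Since $p_y = \pi_2$ and $x \neq 1$, we have $p_x \leq \pi_2$, so $p_y - p_x \geq 0$. Consequently $\averageswapdistance' \leq \averageswapdistance$ exactly when $p_y = p_x$ or the bracketed sum is $\leq 0$. Likewise $\averageswapdistance' = \averageswapdistance$ exactly when $p_y = p_x$ or the bracketed sum vanishes. After the swap, $p_1 = \pi_1$ and $p_x = \pi_2$. Because $x$ is adjacent to $1$, the pair $\{1,x\}$ forms a $/$-structure (ordered clockwise as $(1,2)$ or $(6,1)$). So the only real content is the sign analysis.

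I would do the cases $x = 2$ explicitly and then obtain $x = 6$ from the reflection $i \mapsto 2 - i \pmod 6$. This reflection is an automorphism of the hexagon fixing $1$ and $4$ and exchanging $2 \leftrightarrow 6$ and $3 \leftrightarrow 5$. It carries each $x=2$ condition to the corresponding $x=6$ condition in the statement.
\begin{itemize}
\item For $(x,y) = (2,4)$, the remaining vertices $j \in \{1,3,5,6\}$ give differences $-2, 0, 2, 0$. Hence $\averageswapdistance' - \averageswapdistance = 4(p_4 - p_2)(p_5 - p_1)$. This is always $\leq 0$ because $p_1 = \pi_1$, and it vanishes iff $p_2 = p_4$ or $p_1 = p_5$.
\item For $(x,y) = (2,3)$, the vertices $j \in \{1,4,5,6\}$ give differences $-1, 1, 1, -1$. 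Hence $\averageswapdistance' - \averageswapdistance = 2(p_3 - p_2)(p_4 + p_5 - p_1 - p_6)$.
\item For $(x,y) = (2,5)$, the vertices $j \in \{1,3,4,6\}$ give $d_{2j} = 1,1,2,2$ and $d_{5j} = 2,2,1,1$. Hence $\averageswapdistance' - \averageswapdistance = 2(p_5 - p_2)(p_4 + p_6 - p_1 - p_3)$.
\end{itemize}
Reading off the sign of the second factor gives the stated conditions for $y=3$ and $y=5$. Reading off when either factor is zero gives the stated equality conditions.

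I expect the main obstacle to be the "if and only if" in the inequality part when ties are present. If $p_x = p_y$, then $\averageswapdistance' = \averageswapdistance$ regardless of the sum, so strictly the inequality conditions for $y \in \{3,5\}$ should read "$p_x = p_y$ or the sum condition holds". I would handle this by stating that degenerate case explicitly: when $p_x = p_y$ the swap leaves the arrangement's probability multiset at each vertex unchanged, so it is trivial. The remaining work is bookkeeping: checking each coefficient against the hexagon distances and checking that the reflection maps every $x=2$ condition exactly onto its $x=6$ counterpart.
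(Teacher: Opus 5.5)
Your proof is correct and is essentially the paper's. The paper also substitutes each case into Property \ref{prop:increment_in_average_swap_distance_after_swap} and gets the same three factored increments for $x=2$; the only difference is that it computes the three $x=6$ increments explicitly instead of using the reflection that fixes $1$ and $4$.

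Your caveat about ties is also correct. When $p_x = p_y$ the swap is trivial, so $\averageswapdistance' = \averageswapdistance$ even if the bracket is positive; for example $p_1=\cdots=p_5=1/5$, $p_6=0$, $(x,y)=(2,3)$. Hence the ``only if'' of the first part for $y \in \{3,5\}$ needs the extra disjunct $p_x = p_y$. The paper's proof passes over this when it asserts ``$\averageswapdistance' - \averageswapdistance \leq 0$ if and only if $p_4 + p_5 \leq p_1 + p_6$''.
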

\begin{proof}
We aim to swap the probabilities of a pair of vertices $(x, y)$ to obtain a $/$-structure formed by vertices $6$, $1$ or by vertices $1$ and $2$. 
We examine all possible vertices that can act as $y$.
If $y$ is $2$ or $6$ then $6$ and $1$ or $1$ and $2$ already form a $/$-structure. 

First, we consider the case $y = 4$ and hence $d_{1y} = 3$.
If $x = 2$, \cref{eq:increment_in_average_swap_distance_after_swap} gives
\begin{eqnarray*}                                               
\averageswapdistance' - \averageswapdistance & = & 4\underbrace{(p_4 - p_2)}_{\geq 0} \underbrace{(p_5 - p_1)}_{\leq 0}.                                             
\end{eqnarray*}
Since $p_2 \leq p_4$ and $p_1 \geq p_5$, we have $\averageswapdistance' - \averageswapdistance \leq 0$ with equality if and only if $p_1 = p_5$ or $p_2 = p_4$.
If $x = 6$, we obtain
\begin{eqnarray*}   
\averageswapdistance' - \averageswapdistance & = & 4\underbrace{(p_4 - p_6)}_{\geq 0} \underbrace{(p_3 - p_1)}_{\leq 0}.                                             
\end{eqnarray*}
Since $p_6 \leq p_4$ and $p_1 \geq p_3$ we have $\averageswapdistance' - \averageswapdistance \leq 0$ with equality if and only if $p_1 = p_3$ or $p_4 = p_6$.
Therefore, a $/$-structure with smaller or equal $\averageswapdistance$ is always formed by swapping the probability of $4$ with that of one of the neighbors of $1$.

Second, we consider the case $y = 3$.
If $x=2$, \cref{eq:increment_in_average_swap_distance_after_swap} gives
\begin{eqnarray*}                                               
\averageswapdistance' - \averageswapdistance & = & 2\underbrace{(p_3 - p_2)}_{\geq 0} (p_4 + p_5 - p_1 - p_6).                                             
\end{eqnarray*}
Since $p_2 \leq p_3$, we have $\averageswapdistance' - \averageswapdistance \leq 0$ if and only if $p_4 + p_5 \leq p_1 + p_6$.
We have $\averageswapdistance = \averageswapdistance'$ if and only if $p_2 = p_3$ or $p_4 + p_5 = p_1 + p_6$.
If $x=6$, \cref{eq:increment_in_average_swap_distance_after_swap} gives
\begin{eqnarray*}                                               
\averageswapdistance' - \averageswapdistance & = & 2\underbrace{(p_3 - p_6)}_{\geq 0}(p_4 + p_2 - p_1 - p_5).                                              
\end{eqnarray*}
Since $p_6 \leq p_3$, we have $\averageswapdistance' - \averageswapdistance \leq 0$ if and only if $p_4 + p_2 \leq p_1 + p_5$.  
We have $\averageswapdistance = \averageswapdistance'$ if and only if $p_3 = p_6$ or $p_4 + p_2 = p_1 + p_5$.


Third, we consider the case $y = 5$. If $x = 2$, \cref{eq:increment_in_average_swap_distance_after_swap} gives
\begin{eqnarray*}
\averageswapdistance' - \averageswapdistance & = & 2\underbrace{(p_5 - p_2)}_{\geq 0}(p_4 + p_6 - p_1 - p_3).
\end{eqnarray*}
Since $p_2 \leq p_5$, we have $\averageswapdistance' - \averageswapdistance \leq 0$ if and only if $p_4 + p_6 \leq p_1 + p_3$.
We have $\averageswapdistance = \averageswapdistance'$ if and only if $p_2 = p_5$ or $p_4 + p_6 = p_1 + p_3$.
If $x = 6$, \cref{eq:increment_in_average_swap_distance_after_swap} gives
\begin{eqnarray*}
\averageswapdistance' - \averageswapdistance & = & 2\underbrace{(p_5 - p_6)}_{\geq 0} (p_3 + p_4 - p_1 - p_2 ).                                             
\end{eqnarray*}
Since $p_6 \leq p_5$, we have $\averageswapdistance' - \averageswapdistance \leq 0$ if and only if $p_3 + p_4 \leq p_1 + p_2$.
We have $\averageswapdistance = \averageswapdistance'$ if and only if $p_5 = p_6$ or $p_3 + p_4 = p_1 + p_2$.
\end{proof}

The following corollary provides an interpretation for the previous Lemma \ref{lem:creation_of_slash_structure_with_1_swap}.

\begin{corollary}[Creation of a $/$-structure with lower or equal $\averageswapdistance$ in one swap]
\label{cor:creation_of_slash_structure_with_1_swap}
Consider the setting of Lemma \ref{lem:creation_of_slash_structure_with_1_swap}. 
Suppose that vertex $1$ does not form a $/$-structure with any of its neighbors ($2$ or $6$). Then we have $\averageswapdistance' = \averageswapdistance$ if and only if
\begin{enumerate}
\item
$y = 4$. [$x = 2$ and $p_1 = p_5$] or [$x = 6$ and $p_1 = p_3$].
\item
$y = 3$. [$x = 2$ and $p_4 + p_5 = p_1 + p_6$] or [$x = 6$ and $p_2 + p_4 = p_1 + p_5$].
\item
$y = 5$. [$x = 2$ and $p_4 + p_6 = p_1 + p_3$)] or [$x = 6$ and $p_3 + p_4 = p_1 + p_2$)].
\end{enumerate}
In addition, suppose that $y$ does not form a $/$-structure with any of its neighbors. Then we have 
\begin{enumerate}
\item
$\averageswapdistance' < \averageswapdistance$ for $y = 4$ 
\item
$\averageswapdistance' = \averageswapdistance$ if and only if
  \begin{enumerate}
  \item
  $y = 3$. [$x = 2$ and $p_4 + p_5 = p_1 + p_6$] or [$x = 6$ and $p_2 + p_4 = p_1 + p_5$].
  \item
  $y = 5$. [$x = 2$ and $p_4 + p_6 = p_1 + p_3$)] or [$x = 6$ and $p_3 + p_4 = p_1 + p_2$)].
  \end{enumerate}
\end{enumerate}  
\end{corollary}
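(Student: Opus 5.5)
The plan is to specialise the equality conditions of Lemma \ref{lem:creation_of_slash_structure_with_1_swap} using the extra hypotheses. Each of those conditions is a disjunction. One disjunct is an equality between probabilities of the form $p_a = p_b$, and the other is a ``genuine'' equality: $p_1 = p_5$, $p_1 = p_3$, or one of the linear relations $p_4+p_5 = p_1+p_6$, and so on. The strategy is to show that the hypotheses rule out exactly the disjuncts that the corollary drops. Throughout I use $p_1 = \pi_1$ and $p_y = \pi_2$ with $y \in \{3,4,5\}$, as in the Lemma.

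\textbf{First hypothesis.} Assume vertex $1$ forms no $/$-structure with $2$ or $6$. Then neither $p_2 \geq \pi_2$ nor $p_6 \geq \pi_2$ can hold, since either would make $1$ and that neighbour the carriers of the two largest probabilities. Hence $p_2, p_6 < \pi_2 = p_y$. This immediately kills $p_2 = p_4$ (for $x=2$, $y=4$), $p_4 = p_6$ (for $x=6$, $y=4$), $p_2 = p_3$ and $p_3 = p_6$ (for $y=3$), and $p_2 = p_5$ and $p_5 = p_6$ (for $y=5$). In each case the dropped equality pairs $p_x$ with $p_y$, and $p_x < p_y$. What remains is precisely the first list of the corollary.

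\textbf{Second hypothesis.} Now assume in addition that $y$ forms no $/$-structure with its neighbours. For $y=4$ the neighbours are $3$ and $5$. A $/$-structure needs both endpoints to have probability at least $\pi_2$, and $p_4 = \pi_2$. Therefore $p_3 < \pi_2$ and $p_5 < \pi_2$, and in particular $p_3, p_5 < \pi_2 \leq \pi_1 = p_1$. This excludes the remaining conditions $p_1 = p_5$ and $p_1 = p_3$. Combined with the strict factor signs in the Lemma's proof, the increment $4(p_4-p_x)(p_{\cdot}-p_1)$ becomes a product of a strictly positive and a strictly negative term, so $\averageswapdistance' < \averageswapdistance$.

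For $y = 3$ and $y = 5$ the surviving conditions are already the linear equalities. I would simply restate them as the characterisation of equality: the factor $(p_y - p_x)$ is nonzero, so equality holds exactly when the bracket vanishes.

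\textbf{Main obstacle.} The delicate point is getting the definition of the $/$-structure right, including the clockwise orientation $v = (u+1) \bmod 6$. I would interpret ``does not form a $/$-structure with any neighbour'' symmetrically, in either orientation, so that it yields the strict inequalities $p_{\text{neighbour}} < \pi_2$. I would also handle ties with $\pi_1 = \pi_2$ carefully: in that case $p_1 = p_5$ would force $p_5 \geq \pi_2$, which is exactly the excluded situation, so the argument still goes through.
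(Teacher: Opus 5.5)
Your proposal is correct and follows essentially the same route as the paper: both start from the equality conditions in Lemma~\ref{lem:creation_of_slash_structure_with_1_swap} and discard the tie disjuncts that the hypotheses make impossible. The paper does this by marking each tie condition with the $/$-structure it would create (e.g.\ $p_2=p_4$ gives one on $(1,2)$, and $p_1=p_5$ gives one on $(4,5)$), which is just the contrapositive of your strict inequalities $p_2,p_6<p_y$ and $p_3,p_5<p_1$.
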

\begin{proof}
The proof consists of examining the conditions for $\averageswapdistance' = \averageswapdistance$ in the 2nd part of \cref{lem:creation_of_slash_structure_with_1_swap} and annotating the pair of vertices that form a $/$-structure when a certain condition is satisfied by means of a curly brace below the condition. Then
\begin{enumerate}
\item
$y = 4$. [$x = 2$ and ($\underbrace{p_1 = p_5}_{(4, 5)}$ or $\underbrace{p_2 = p_4}_{(1, 2)}$)] or [$x = 6$ and ($\underbrace{p_1 = p_3}_{(3,4)}$ or $\underbrace{p_4 = p_6}_{(1, 6)}$)].
\item
$y = 3$. [$x = 2$ and ($\underbrace{p_2 = p_3}_{(1, 2), (2, 3)}$ or $p_4 + p_5 = p_1 + p_6$)] or [$x = 6$ and ($\underbrace{p_3 = p_6}_{(1, 6)}$ or $p_2 + p_4 = p_1 + p_5$)].
\item
$y = 5$. [$x = 2$ and ($\underbrace{p_2 = p_5}_{(1, 2)}$ or $p_4 + p_6 = p_1 + p_3$)] or [$x = 6$ and ($\underbrace{p_5 = p_6}_{(1, 6), (5, 6)}$ or $p_3 + p_4 = p_1 + p_2$)].
\end{enumerate}
The previous information allows one to generate the conditions in the 1st part or the 2nd part of the statement of this corollary by dropping the false conditions.
\end{proof}

The next lemma analyses the impact on $\averageswapdistance$ of destroying a $\wedge$-structure by swapping the probabilities of the neighbors of its central vertex with those of vertices outside the $\wedge$-structure, or equivalently, the consequences of creating a $\wedge$-structure by 
\begin{enumerate}
\item
Locating a vertex with maximum probability, say $u$, that will become the center of the $\wedge$-structure 
\item
Swapping the probabilities of the neighbors of $u$ with those of two vertices with the 2nd and 3rd largest probability that are not adjacent to $u$.     
\end{enumerate}

\begin{lemma}[Creation or destruction of a $\wedge$-structure in two independent swaps]
\label{lem:creation_or_destruction_of_wedge_structure_with_2_swaps}
Consider an arrangement that has a $\wedge$-structure formed by vertices $t$, $u$ and $v$ and average swap distance $\averageswapdistance_\wedge$.
The pairs of swaps of probabilities of the form $(t, y)$ and $(v, z)$ such that $y \neq z$ and $y, z \in [1, 6]\setminus \{t, u, v\}$ yield an arrangement with average swap distance $\averageswapdistance$ such that $\averageswapdistance \geq \averageswapdistance_\wedge$. 
\end{lemma}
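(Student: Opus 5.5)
Without loss of generality we relabel the vertices of the permutohedron (\cref{fig:permutohedron_1_to_6}) so that the center of the $\wedge$-structure is $u = 1$. Then $t = 6$ and $v = 2$, and the vertices outside the structure are $\{3,4,5\}$. The hypotheses of the $\wedge$-structure give
\begin{equation*}
p_1 \geq p_2, p_6 \geq p_3, p_4, p_5 .
\end{equation*}
These hold because $p_1 = \pi_1$ and $\min(p_6,p_1,p_2)\geq \pi_3$, while the remaining three vertices carry $\pi_4,\pi_5,\pi_6$. The two swaps are $(6,y)$ and $(2,z)$ with $y \neq z$ and $y,z\in\{3,4,5\}$. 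This gives six ordered cases: $(y,z)\in\{(3,4),(3,5),(4,3),(4,5),(5,3),(5,4)\}$. The reflection of the hexagon that fixes $1$ and $4$ exchanges $2\leftrightarrow 6$ and $3\leftrightarrow 5$. It preserves all swap distances and maps the $\wedge$-structure onto itself, with $t$ and $v$ exchanged. Under it, the case $(y,z)$ becomes the case $(z',y')$ with primes denoting the reflected labels. Hence it suffices to treat three representative cases: $(y,z)=(4,3)$, $(3,5)$ and $(5,3)$, say. The remaining three follow by relabeling.

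In each representative case we apply Property \ref{prop:increment_in_average_swap_distance_after_2_swaps} with $(x,y)\to(6,y)$, $(w,z)\to(2,z)$. The two fixed vertices $\gamma,\delta$ are $1$ and the element of $\{3,4,5\}\setminus\{y,z\}$. We fill in the distances $d_{ij}=\min(|i-j|,6-|i-j|)$. This turns $(\averageswapdistance-\averageswapdistance_\wedge)/2$ into an explicit quadratic polynomial in $p_1,\dots,p_6$. Each polynomial has the form
\begin{equation*}
(p_y-p_6)\,L_1 + (p_z-p_2)\,L_2 ,
\end{equation*}
with $L_1,L_2$ linear combinations with coefficients in $\{-2,\dots,2\}$. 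The prefactors $p_y-p_6$ and $p_z-p_2$ are $\leq 0$. So the goal in each case is to regroup the polynomial as a sum of products of two factors of opposite sign, or as products of two non-positive factors. One example is $(p_y-p_6)(p_w-p_1)$ with $p_w\leq p_1$. Another is $(p_y-p_6)(p_z-p_2)$, both factors being $\leq 0$ and the product appearing with a positive coefficient. Every factor must be signed by the ordering displayed above. For instance, in the case $y=4$ the single-swap computation in Lemma \ref{lem:creation_of_slash_structure_with_1_swap} already produced terms like $4(p_4-p_6)(p_3-p_1)$. I expect analogous factorizations here, plus a cross term coming from the interaction between the two swaps.

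The main obstacle is the regrouping step in the cases where $L_1$ or $L_2$ contains terms of both signs. An example is a combination such as $p_3+p_4-p_1-p_5$, which is not signed by the ordering alone. My first attempt is to pair such a mixed term with the matching term in the other bracket: for example, move $p_z$-dependent pieces of $L_1$ into the second product. The aim is to obtain factors of the form $(p_1-p_k)$ or $(p_2-p_k)$, $(p_6-p_k)$ with $k\in\{3,4,5\}$, all $\geq 0$. If a clean factorization fails in some case, I will fall back on bounding. I will use $p_y,p_z\leq \min(p_2,p_6)$ to replace $p_y$ and $p_z$ inside $L_1$ and $L_2$ by $p_2$ or $p_6$. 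I will check that each such replacement moves the expression in the right direction, since the multiplying prefactor is non-positive. Then I will verify that the resulting expression is nonnegative term by term. This yields $\averageswapdistance\geq\averageswapdistance_\wedge$ in all six cases.
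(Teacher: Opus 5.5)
You take the same route as the paper: normalize to $u=1$, $t=6$, $v=2$, expand with Property~\ref{prop:increment_in_average_swap_distance_after_2_swaps}, and sign each case by regrouping. The proposal has a genuine gap, and it sits exactly where the statement is false.

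First, your symmetry reduction is miscounted. The reflection $2\leftrightarrow 6$, $3\leftrightarrow 5$ sends your case $(y,z)$ to $(z',y')$, as you say. It fixes $(3,5)$ and $(5,3)$, so the six cases form four orbits: $\{(4,3),(5,4)\}$, $\{(3,5)\}$, $\{(5,3)\}$ and $\{(3,4),(4,5)\}$. Your three representatives never reach the last orbit, which consists of the swap pairs $(6,3),(2,4)$ and $(6,4),(2,5)$. So ``the remaining three follow by relabeling'' does not hold.

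Second, and decisively, no regrouping or bounding can handle that orbit, because the inequality fails there. For the swaps $(6,4)$ and $(2,5)$ one gets
\begin{equation*}
\frac{\averageswapdistance-\averageswapdistance_\wedge}{2}=(p_5-p_2)(p_4+p_6-p_1-p_3)+2(p_4-p_6)(p_3-p_1).
\end{equation*}
The second term is nonnegative. The factor $p_4+p_6-p_1-p_3$, however, is not signed by the $\wedge$ ordering. It is positive when $p_4$ and $p_6$ are close to $p_1$ and $p_3$ is small, and in that regime the second term is negligible if $p_4\approx p_6$.

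Concretely, use the labels of \cref{fig:permutohedron_1_to_6} and take $\mathbf{p}=(11,10,1,8,2,9)/41$. It has the tie-free $\wedge$-structure $(6,1,2)$. The two swaps produce $(11,2,1,9,10,8)/41$. Evaluating \cref{eq:average_swap_distance} directly gives $\averageswapdistance_\wedge=2332/1681$ and $\averageswapdistance=2292/1681$, a strict decrease. By the reflection, $(6,3),(2,4)$ fails as well.

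The other four cases are true. For example, for $(6,3),(2,5)$ and for $(6,5),(2,3)$ the increment is $2(p_1-p_4)(p_2+p_6-p_3-p_5)\ge 0$. So the lemma as stated needs an extra hypothesis, or it must exclude these two swap pairs.

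The paper's own proof reaches the same expression in this case (its $y=5$, $z=4$). It obtains nonnegativity only under the additional condition $p_4+p_6\le p_1+p_3$, so it does not close this case either.
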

\begin{proof}
We define $\gamma$ and $\delta$ as the only two vertices that are not involved in any swap, that is $\{\gamma, \delta\} = [1, 6]\setminus \{t, v, y, z\}$. 
Without any loss of generality, we set $\gamma = u = 1$, hence $t = 6$ and $v = 2$. By swapping the probabilities of the pair $(2, y)$ and those of the pair $(6, z)$ with $y \neq z$, Property \ref{prop:increment_in_average_swap_distance_after_2_swaps} gives
\begin{align*}
\frac{\averageswapdistance - \averageswapdistance_\wedge }{2} = & (p_y - p_2) \left[ p_1 (1 - d_{1y}) + p_\delta (d_{2\delta} - d_{y\delta}) + \right. \nonumber \\ 
                                 & \left. p_6 (2 - d_{6y}) + p_z (d_{2z} - d_{yz}) \right] + \nonumber \\   
                                 & (p_z - p_6) \left[ p_1 (1 - d_{1z}) + p_\delta (d_{6\delta} - d_{z\delta}) + \right. \nonumber \\ 
                                 & \left. p_2 (d_{6y} - d_{yz}) + p_y (2 - d_{2z}) \right]. 
\end{align*}
Now we consider all possible values of $\delta$. Given $\gamma$ and $\delta$, the values of $y$ and $z$ follow. 
Suppose that $\delta = 3$. $y=4$ and $z = 5$ lead to
\begin{eqnarray*}
\frac{\averageswapdistance - \averageswapdistance_\wedge }{2} & = & 2(p_4 - p_2)(p_5 - p_1) + \\
                              &   & (p_5 - p_6)(p_2 + p_3 - p_1 - p_4) \\
                              & = & 2(p_4 - p_2)(p_5 - p_1) + (p_5 - p_6)(p_3 - p_1) + \\ 
                              &   & (p_5 - p_6)(p_2 - p_4) \\
                              & = & \underbrace{(p_4 - p_2)}_{\leq 0}\underbrace{(p_5 + p_6 - 2p_1)}_{\leq 0} + \\
                              &   & \underbrace{(p_5 - p_6)}_{\leq 0}\underbrace{(p_3 - p_1)}_{\leq 0} \\
                              & \geq & 0
\end{eqnarray*} 
with equality if and only if
\begin{equation*}
(p_2 = p_4 \mbox{ or } p_5 + p_6 = 2p_1) \mbox{ and } (p_5 = p_6 \mbox{ or } p_1 = p_3).
\end{equation*}
$y=5$ and $z = 4$ lead to
\begin{eqnarray*}
\frac{\averageswapdistance - \averageswapdistance_\wedge }{2} & = & \underbrace{(p_5 - p_2)}_{\leq 0}(p_4 + p_6 - p_1 - p_3) + \\ 
                              & = & 2\underbrace{(p_4 - p_6)}_{\leq 0}\underbrace{(p_3 - p_1)}_{\leq 0} \\
                              & \geq & 0
\end{eqnarray*}
if and only if $p_4 + p_6 \leq p_1 + p_3$. Then we have $(\averageswapdistance - \averageswapdistance_\wedge )/2 = 0$ if and only if 
\begin{equation*}
(p_2 = p_5 \mbox{ or } p_4 + p_6 = p_1 + p_3) \mbox{ and } (p_4 = p_6 \mbox{ or } p_1 = p_3).
\end{equation*}

Suppose that $\delta = 4$. $y = 3$ and $z = 5$ lead to
\begin{eqnarray*}
\frac{\averageswapdistance - \averageswapdistance_\wedge }{2} & = & (p_3 - p_2)(p_4 - p_1 - p_6) + \\ 
                              &   & (p_5 - p_6)(p_2 + p_4 - p_1 - p_3) \\
                              & = & (p_3 - p_2)(p_4 - p_1 - p_6) +  \\ 
                              &   & (p_5 - p_6)(p_2 - p_3) + (p_5 - p_6)(p_4 - p_1) \\
                              & = & \underbrace{(p_3 - p_2)}_{\leq 0}\underbrace{(p_4 - p_1 - p_5)}_{\leq 0} + \\ 
                              &   & \underbrace{(p_5 - p_6)}_{\leq 0}\underbrace{(p_4 - p_1)}_{\leq 0} \\
                              & \geq & 0.
\end{eqnarray*}
$y = 5$ and $z = 3$ lead to
\begin{eqnarray*}
\frac{\averageswapdistance - \averageswapdistance_\wedge }{2} & = & (p_5 - p_2)(p_4 + p_6 - p_1 - p_3) + \\ 
                              &   & (p_3 - p_6)(p_4 + p_5 - p_1 - p_2) \\
                              & = & (p_5 - p_2)(p_4 - p_1) + (p_5 - p_2)(p_6 - p_3) + \\ 
                              &   & (p_3 - p_6)(p_4 - p_1) + (p_3 - p_6)(p_5 - p_2) \\
                              & = & \underbrace{(p_5 + p_3 - p_2 - p_6)}_{\leq 0}\underbrace{(p_4 - p_1)}_{\leq 0} \\
                              & \geq & 0
\end{eqnarray*}
with equality if and only if 
\begin{equation*}
p_3 + p_5 = p_2 + p_6 \mbox{ or } p_1 = p_4.
\end{equation*}

Suppose that $\delta = 5$. We have $\averageswapdistance - \averageswapdistance_\wedge \geq 0$ by symmetry with respect to the case $\delta = 3$. 

\end{proof}

\begin{theorem}[Creation of a $\wedge$-structure with lower or equal $\averageswapdistance$ in one or two swaps]
\label{theo:creation_of_wedge_structure_with_1_or_2_swaps}
Every arrangement with average swap distance $\averageswapdistance$ that does not have a $\wedge$-structure can be transformed into an arrangement that has a $\wedge$-structure and average swap distance $\averageswapdistance_\wedge$ such that $\averageswapdistance_\wedge \leq \averageswapdistance$ 
by swapping the probabilities of one or two pairs of vertices. 
\end{theorem}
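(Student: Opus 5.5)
Fix, without loss of generality, vertex $1$ as a vertex of maximum probability ($p_1 = \pi_1$), so that its neighbours are $2$ and $6$. The aim is to make $1$ the centre of a $\wedge$-structure $(6,1,2)$, which requires $\{p_2, p_6\} = \{\pi_2, \pi_3\}$ in the sense of ties, i.e. $\min(p_2,p_6) \geq \pi_3$. The plan is to split into cases according to how many of the two values $\pi_2,\pi_3$ already sit on the neighbours of $1$. Labelling is by rotation and reflection of the hexagon (\cref{fig:permutohedron_1_to_6}), so all distance computations reduce to those already done in Property \ref{prop:increment_in_average_swap_distance_after_swap}, Lemma \ref{lem:creation_of_slash_structure_with_1_swap} and Lemma \ref{lem:creation_or_destruction_of_wedge_structure_with_2_swaps}.

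\emph{Case 0: both missing.} Suppose neither $2$ nor $6$ carries $\pi_2$ or $\pi_3$. Then the vertices $y,z$ carrying $\pi_2,\pi_3$ lie in $\{3,4,5\}$ and are distinct. Swap $(2,y)$ and $(6,z)$, or the opposite pairing. The resulting arrangement has a $\wedge$-structure $(6,1,2)$. Read backwards, the original arrangement is obtained from this $\wedge$-arrangement by two independent swaps of exactly the type in Lemma \ref{lem:creation_or_destruction_of_wedge_structure_with_2_swaps}. That lemma then gives $\averageswapdistance \geq \averageswapdistance_\wedge$ directly, for either pairing, so this case is immediate.

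\emph{Case 1: exactly one missing.} Suppose, say, $p_2 \geq \pi_3$ already holds (so $2$ carries $\pi_2$ or $\pi_3$) but $p_6 < \pi_3$. Let $z \in \{3,4,5\}$ carry the remaining one of $\pi_2,\pi_3$. Swapping $(6,z)$ creates the $\wedge$-structure $(6,1,2)$. To control the sign I will use \cref{eq:increment_in_average_swap_distance_after_swap} with $x=6$:
\[
\averageswapdistance'-\averageswapdistance = 2(p_z-p_6)\sum_{j\neq 6,z} p_j(d_{6j}-d_{zj}).
\]
Here $p_z - p_6 \geq 0$, so it suffices that the sum be $\leq 0$. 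For $z=4$ the sum factorises as in Lemma \ref{lem:creation_of_slash_structure_with_1_swap}, giving $2(p_3-p_1)\leq 0$, since $p_3 \le p_1$. For $z=5$ (adjacent to $6$) the sum is $p_3 + p_4 - p_1 - p_2$. This is $\leq 0$ because $p_1 \geq p_3$ and $p_2 \geq p_4$, the latter since $p_2 \geq \pi_3$ while $p_4 \leq \pi_3$ in this configuration. For $z=3$ the sum is $p_2 + p_4 - p_1 - p_5$. This need not be $\leq 0$ a priori, and I expect it to be the main obstacle.

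For the problematic $z=3$ subcase I would try an alternative move that also yields a $\wedge$-structure. One option is to centre the wedge at $2$ rather than at $1$: swap the value at $3$ with the value at $1$, or use the reflected labelling. Another is to compare with the two-swap move that sends $\pi_2,\pi_3$ to $\{2,6\}$ via $3$ and $2$. I would then show that at least one of the two candidate increments is nonpositive. Concretely, I expect the increments of two complementary candidates to sum to a manifestly nonpositive expression, for instance a combination like $(p_3-p_6)(p_4-p_1)$ as in the $\delta=4$ computation of Lemma \ref{lem:creation_or_destruction_of_wedge_structure_with_2_swaps}. That would force one of them to be $\leq 0$. If this fails, I would invoke Lemma \ref{lem:creation_of_slash_structure_with_1_swap} to first create a $/$-structure at no cost and then reduce to the one-missing case with a better-placed $z$.

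\emph{Case 2: none missing.} If both neighbours of $1$ already carry $\pi_2,\pi_3$, a $\wedge$-structure already exists, and the statement's hypothesis excludes this case. Ties among $\pi_1,\pi_2,\pi_3$ need only a remark: every inequality above is non-strict, so ties merely allow several valid choices of $y,z$.
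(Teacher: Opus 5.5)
There is a genuine gap: the subcase $z=3$ of your Case 1 is left open, and that is exactly where the theorem needs an idea. Your split by how many of $\pi_2,\pi_3$ already sit on the neighbours of $1$ is sound. It even covers directly the configuration in which a neighbour of $1$ carries $\pi_3$ but neither carries $\pi_2$. Your increments for $z=4$ and $z=5$ are also correct.

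For $z=3$, the swap $(6,3)$ has increment $2(p_3-p_6)(p_2+p_4-p_1-p_5)$. This is positive whenever $p_4-p_5>p_1-p_2$, so that swap fails in general. Your concrete fallback, exchanging the values at $3$ and $1$, does not create a $\wedge$-structure in general: vertex $3$ would carry $\pi_1$ while its neighbour $4$ carries at most $\pi_4$. What closes the case is not a cancellation between two increments but a sign dichotomy.

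Exchange instead the values of $1$ and $2$. Since $2$ and $3$ carry $\{\pi_2,\pi_3\}$, this yields the wedge $(1,2,3)$ centred at $2$, with increment $2(p_2-p_1)(p_3+p_4-p_5-p_6)$.
\begin{itemize}
\item If $p_4\le p_5$, the swap $(6,3)$ works, because $p_3-p_6\ge 0$, $p_2-p_1\le 0$ and $p_4-p_5\le 0$.
\item If $p_4\ge p_5$, the swap $(1,2)$ works, because $p_2-p_1\le 0$, $p_3\ge\pi_3\ge p_6$ and $p_4-p_5\ge 0$.
\end{itemize}
This is the paper's own device in the subcase $y=5$ of its second case, which is the mirror image of your $z=3$. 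There it splits on $p_4$ versus $p_3$ and, in the bad branch, swaps the maximum with its $/$-partner ($1\leftrightarrow 6$), so that the wedge is centred at the partner.

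A second, smaller problem is in Case 0, where you claim the two-swap lemma covers either pairing. It does not. Take frequencies $(8,0,0,6,7,5)$ on vertices $1,\dots,6$: there is no $\wedge$-structure, and $\pi_2,\pi_3$ sit at $5$ and $4$. The quantity $\sum_{i<j}f_if_jd_{ij}$, which is proportional to $\averageswapdistance$, equals $433$.
\begin{itemize}
\item The crossing pairing ($2\leftrightarrow 5$, $6\leftrightarrow 4$) gives the $\wedge$-arrangement $(8,7,0,5,0,6)$, with value $438$.
\item The other pairing ($2\leftrightarrow 4$, $6\leftrightarrow 5$) gives $(8,6,0,0,5,7)$, with value $393$.
\end{itemize}
So the lemma's statement is too strong for the crossing pairing when $\{y,z\}=\{4,5\}$ or $\{3,4\}$. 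Its own proof obtains nonnegativity there only under a side condition. You must therefore choose the pairing that sends the value at $5$, if used, to $6$, and the value at $3$, if used, to $2$. The remaining cases of the lemma's computation then do give $\averageswapdistance_\wedge\le\averageswapdistance$.
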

\begin{proof}
Without loss of generality, suppose that $1$ is a vertex with maximum probability. Its neighbors in the permutohedron are $2$ and $6$. Since there is no $\wedge$-structure, $p_2 \leq \pi_4$ or $p_6 \leq \pi_4$.
We consider the following cases 
\begin{enumerate}  
\item
Neither $6$ and $1$ nor $1$ and $2$ form a $/$-structure, i.e. $p_2 < \pi_2$ and $p_6 < \pi_2$. Then we apply Lemma \ref{lem:creation_or_destruction_of_wedge_structure_with_2_swaps} to obtain an arrangement with a $\wedge$-structure such that $\averageswapdistance_\wedge \leq \averageswapdistance$ by swapping the probabilities of two pairs of vertices, i.e. $(2, y)$ and $(6, z)$ where $y$ and $z$ are two vertices such that $y, z \in \{3, 4, 5\}$, $y \neq z$ and $\pi_3 \leq p_y, p_z \leq \pi_2$.
\item
$6$ and $1$ form a $/$-structure but $1$ and $2$ do not. Then $p_6 = \pi_2$ but $p_2 \leq \pi_3$. We will build a $\wedge$-structure
by swapping the probabilities of a pair of vertices of the form $(2, y)$ where $y$ is a vertex such that $y = \pi_3$ and $y \in \{3, 4, 5\}$. 
We consider the following cases
  \begin{enumerate}
  \item
  $y = 4$. Then Lemma \ref{lem:creation_of_slash_structure_with_1_swap} indicates that $\averageswapdistance$ cannot increase.
  \item
  $y = 3$. Then the condition required by Lemma \ref{lem:creation_of_slash_structure_with_1_swap} to not increase $\averageswapdistance$ is $p_4 + p_5 \leq p_1 + p_6$, which holds trivially since $p_1 + p_6 = \pi_1 + \pi_2$ while $p_4 + p_5 \leq \pi_1 + \pi_2$. 
  \item 
  $y = 5$. Then the condition required by Lemma \ref{lem:creation_of_slash_structure_with_1_swap} to not increase $\averageswapdistance$ is $p_4 + p_6 \leq p_1 + p_3$, which becomes $p_4 + \pi_2 \leq \pi_1 + p_3$ in this context.
  If $p_4 \leq p_3$, we are done. If $p_4 \geq p_3$, we look for an alternative way of producing a $\wedge$-structure. The path formed by vertices $(5, 6, 1)$ is such that the corresponding probabilities for each vertex in the path are $(\pi_3, \pi_2, \pi_1)$. Therefore, we build a $\wedge$-structure formed by vertices $5$, $6$, and $1$ by swapping the probabilities of vertices $1$ and $6$. Then the increment in $\averageswapdistance$ is
  \begin{align*}
  \averageswapdistance' - \averageswapdistance & = 2(p_1 - p_6) (p_2 + p_3 - p_4 - p_5) \\
                                               & = 2(\underbrace{\pi_1 - \pi_2}_{\geq 0}) (\underbrace{p_2 + p_3 - p_4 - \pi_3}_{\leq 0} \leq 0
  \end{align*}  
  since $p_2 \leq \pi_4$ and $p_3 \leq p_4$.
  \end{enumerate}  
\item
$1$ and $2$ form a $/$-structure but $6$ and $1$ do not. This case is the symmetric of the previous case.
\end{enumerate}
\end{proof}

\subsection{The structure of optimal arrangements}

\label{subsec:structure_of_optimal_arrangements}

The following theorem characterizes the minimum $\averageswapdistance$ arrangements, that is arrangements that minimize $\averageswapdistance$ over all possible arrangements.
An eccentric vertex of a vertex $u$ is a vertex $v$ that is at maximum distance of $u$ ($d_{uv}$ is maximum).
 
\begin{theorem}
\label{theo:optimal_arrangements_theorem}
When $n = 3$, the minimum $\averageswapdistance$ arrangements are such that
\begin{enumerate}
\item
They contain a $\wedge$-structure formed by vertices $t, u, v$ and hence $p_u = \pi_1$.
\item
$w$, the eccentric vertex of $u$ (i.e. $d_{u,w} = 3$) has minimum probability, i.e. $p_w = \pi_6$. 
\item
If $p_v = \pi_2$, then $p_t = \pi_3$, the neighbors of $v$ have probabilities $\pi_1$ and $\pi_4$ and the neighbors of $t$ have probabilities $\pi_1$ and $\pi_5$.
If $p_v = \pi_3$, then $p_t = \pi_2$, the neighbors of $v$ have probabilities $\pi_1$ and $\pi_5$ and the neighbors of $t$ have probabilities $\pi_1$ and $\pi_4$.
\end{enumerate}
Suppose that the vertices of the permutohedron are labeled as in \cref{fig:permutohedron_1_to_6}. 
In any minimum  $\averageswapdistance$ arrangement, there is a vertex, say vertex 1, such that satisfies the total order of vertices
\begin{equation}
(1,2,6,3,5,4)
\label{eq:optimal_total_order1}
\end{equation}
given by
\begin{equation*}
p_1 \geq p_2 \geq p_6 \geq p_3 \geq p_5 \geq p_4,
\end{equation*}
or its symmetric
\begin{equation}
(1,6,2,5,3,4)
\label{eq:optimal_total_order2}
\end{equation}
given by
\begin{equation*}
p_1 \geq p_6 \geq p_2 \geq p_5 \geq p_3 \geq p_4
\end{equation*}
as illustrated in \cref{fig:optimal_probability_arrangement} (a) and (b).
\end{theorem}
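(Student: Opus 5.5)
The plan is to reduce an arbitrary minimum $\averageswapdistance$ arrangement to one of a handful of canonical candidates using the swap machinery already developed, and then compare those candidates directly. First, by Theorem \ref{theo:creation_of_wedge_structure_with_1_or_2_swaps}, any arrangement without a $\wedge$-structure can be turned into one with a $\wedge$-structure without increasing $\averageswapdistance$. So the minimum value $\averageswapdistance_{min}$ is attained by some arrangement with a $\wedge$-structure formed by $t,u,v$. Using the rotational and reflection symmetry of the hexagon (\cref{fig:permutohedron_1_to_6}), I will take $u=1$, $t=6$, $v=2$, with $p_1=\pi_1$ and $\{p_2,p_6\}=\{\pi_2,\pi_3\}$. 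By reflection I may further assume $p_2=\pi_2$ and $p_6=\pi_3$. The three remaining probabilities $\pi_4,\pi_5,\pi_6$ are then distributed over the vertices $3,4,5$, which leaves only $3!=6$ candidate arrangements.

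Second, I will order these six candidates using Property \ref{prop:increment_in_average_swap_distance_after_swap} applied to swaps inside $\{3,4,5\}$.
\begin{itemize}
\item Swapping the probabilities of $3$ and $4$ gives an increment of $2(p_3-p_4)(p_1+p_2-p_5-p_6)$. The same computation for swapping $4$ and $5$ gives $2(p_5-p_4)(p_1+p_6-p_2-p_3)$. In both cases the second factor is nonnegative, because $p_1,p_2,p_6$ carry $\pi_1,\pi_2,\pi_3$. So moving the smaller probability onto the eccentric vertex $4$ never increases $\averageswapdistance$. This gives item 2, $p_4=\pi_6$.
\item Swapping $3$ and $5$ gives an increment of $4(p_5-p_3)(p_2-p_6)$, which is nonpositive exactly when the larger of $\pi_4,\pi_5$ sits at $3$, the neighbor of the $\pi_2$ vertex $2$.
\end{itemize}
Together these yield item 3 and the total order $p_1\ge p_2\ge p_6\ge p_3\ge p_5\ge p_4$, i.e. \cref{eq:optimal_total_order1}. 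The reflected case gives \cref{eq:optimal_total_order2}. Plugging this arrangement into \cref{eq:average_swap_distance} also produces \cref{eq:average_swap_distance_min_introduction} as a by-product.

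The main obstacle is the word \emph{every}. The argument above shows that \emph{some} minimum arrangement has the stated form. I still have to show that an arbitrary minimum arrangement $\mathbf{p}$ already satisfies the stated order relative to a suitable vertex labeled $1$. My plan is to run the transformations (Theorem \ref{theo:creation_of_wedge_structure_with_1_or_2_swaps}, then the swaps inside $\{3,4,5\}$) starting from $\mathbf{p}$ itself. Because $\mathbf{p}$ is already minimum, each step must have zero increment. I then read off the equality conditions recorded in Lemma \ref{lem:creation_of_slash_structure_with_1_swap}, Corollary \ref{cor:creation_of_slash_structure_with_1_swap}, Lemma \ref{lem:creation_or_destruction_of_wedge_structure_with_2_swaps} and the factorized increments above. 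In the product-form cases, equality forces ties such as $p_2=p_4$ or $p_1=p_3$, so the swap only exchanges equal values and $\mathbf{p}$ itself satisfies the non-strict chain. The delicate cases are the sum-type equalities, such as $p_4+p_5=p_1+p_6$ in Lemma \ref{lem:creation_of_slash_structure_with_1_swap}. There I would first try to show, using $p_1=\pi_1$ and the ranks of the involved vertices, that such an equality again forces enough ties to be absorbed into the $\geq$ relations. If that fails, I would fall back on a direct case check, compared with \cref{eq:average_swap_distance_min_introduction}, of the few tie patterns that remain.
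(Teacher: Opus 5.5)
Your existence argument is the paper's argument. You create a $\wedge$-structure with Theorem \ref{theo:creation_of_wedge_structure_with_1_or_2_swaps}, move the smallest of the remaining three probabilities onto the eccentric vertex $4$ with the swap $(3,4)$ or $(4,5)$, and settle the rest with one more swap. The paper keeps both orientations and compares four total orders via $(2,6)$, $(3,5)$ and both together; you fix $p_2=\pi_2$, $p_6=\pi_3$ by reflection, so $(3,5)$ suffices. There is one slip. Property \ref{prop:increment_in_average_swap_distance_after_swap} with $x=3$, $y=5$ gives $\averageswapdistance'-\averageswapdistance=4(p_5-p_3)(p_6-p_2)$, not $4(p_5-p_3)(p_2-p_6)$. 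With your sign, the non-increasing swap would be the one that moves the larger of $\pi_4,\pi_5$ \emph{away} from vertex $3$, which contradicts your (correct) conclusion.

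The real gap is the one you flag and leave open. Your argument only shows that \emph{some} minimizer has the stated form, which is enough for \cref{eq:average_swap_distance_min_introduction}. Items 1--3 and ``in any minimum arrangement'', however, concern every minimizer, and the later count $N_o(m)=12\,(6-m)!$ relies on exactly that. The paper's own proof has the same limitation, since its steps 1--2 only prove non-increase, so you cannot borrow a closure from it.

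Your equality-tracking repair is fragile for two reasons. First, a zero-increment swap under a sum-type condition such as $p_4+p_5=p_1+p_6$ can exchange unequal values, so it tells you nothing about $\mathbf{p}$ itself. Second, the equality conditions recorded in Lemma \ref{lem:creation_or_destruction_of_wedge_structure_with_2_swaps} cannot be trusted. For $\delta=3$, $(y,z)=(5,4)$, the increment $(p_5-p_2)(p_4+p_6-p_1-p_3)+2(p_4-p_6)(p_3-p_1)$ is not sign-definite: the frequencies $(p_1,\dots,p_6)=(31,29,0,25,1,30)$ give $-362$. Theorem \ref{theo:creation_of_wedge_structure_with_1_or_2_swaps} still holds only because, for $\delta=3$, the other pairing $(2,4),(6,5)$ never increases $\averageswapdistance$.

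A clean fix works when the nonzero probabilities are distinct.
\begin{itemize}
\item Write $\averageswapdistance(\sigma)=\boldsymbol{\pi}^T D_\sigma\boldsymbol{\pi}$ with $(D_\sigma)_{kl}=d_{\sigma^{-1}(k)\sigma^{-1}(l)}$, a matrix with zero diagonal.
\item By homogeneity, your existence result gives $\boldsymbol{\pi}^T(D_\sigma-D_c)\boldsymbol{\pi}\geq 0$ on the whole cone $\pi_1\geq\dots\geq\pi_6\geq 0$, where $c$ is the canonical arrangement.
\item If this form vanishes at a point with $\pi_1>\dots>\pi_6>0$, that point is an interior minimum. Hence $D_\sigma-D_c$ is positive semidefinite with zero diagonal, so it is zero.
\item Therefore $\sigma^{-1}\circ c$ is an automorphism of the hexagon, i.e., $\sigma$ is the canonical arrangement moved by a symmetry.
\end{itemize}
The same argument on the face $\pi_{m+1}=\dots=\pi_6=0$ covers $m<6$. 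You also need to check that the resulting distance-preserving map on the occupied path extends to the hexagon. Ties among nonzero values still need a separate argument.
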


\begin{proof}
We wish to find the arrangements that minimize $\averageswapdistance$. There are $6! = 720$ possible arrangements. Instead of working with arrangements, we work with total orders of vertices of the permutohedron.  
If there are no probability ties, there is a one-to-one correspondence between total orders and arrangements. In general, every total order may correspond to more than one arrangement. 

The strategy of the proof consists of incrementally defining the total orders that minimize $\averageswapdistance$ until only two total orders with same $\averageswapdistance$ remain (equations \cref{eq:optimal_total_order1} and \cref{eq:optimal_total_order2}). The process consists of (a) building partial orders of increasing strength until only four total orders are possible and then (b) selecting the only total orders that minimize $\averageswapdistance$. The derivation follows. 

\begin{enumerate}
\item
{\em Creation of a $\wedge$-structure between vertices $t$, $u$ and $v$.} If the arrangement does not have a $\wedge$-structure, we locate a vertex with maximum probability, say $u$, and create a $\wedge$-structure formed by vertices $t$, $u$, $v$ by swapping the probabilities of one or two pairs of vertices following the procedure described in Theorem \ref{theo:creation_of_wedge_structure_with_1_or_2_swaps}. Then, without loss of generality, suppose that $u = 1$. Then, by the definition of the $\wedge$-structure, we have a partial order of the vertices given by
\begin{equation}
p_1 \geq p_2, p_6 \geq p_3, p_4, p_5.
\label{eq:relationship_among_probabilities_in_wedge_structure}
\end{equation}
\item
{\em Assigning the minimum probability to the eccentric vertex of $u$.} We consider each of the possible least likely vertices, that is $3$, $4$ or $5$. If $4$ is a least likely vertex, we are done because $d_{1,4} = 3$. We will show that $\averageswapdistance$ cannot increase if $3$ or $5$ are swapped with $4$ so that vertex 4 becomes a least likely vertex.
If we swap 3 and 4, Property \ref{prop:increment_in_average_swap_distance_after_swap} gives that the increment in $\averageswapdistance$ after the swap is
\begin{equation*}
\averageswapdistance' - \averageswapdistance = 2(\underbrace{p_4 - p_3}_{\geq 0})(\underbrace{p_5 + p_6 - p_1 - p_2}_{\leq 0}) \leq 0
\end{equation*}
since $p_3 \leq p_4$, $p_6 \leq p_1$ and $p_5 \leq p_2$ (\cref{eq:relationship_among_probabilities_in_wedge_structure}). If we swap 5 and 4, Property \ref{prop:increment_in_average_swap_distance_after_swap} gives that the increment in $\averageswapdistance$ after the swap is
\begin{equation*}
\averageswapdistance' - \averageswapdistance = 2(\underbrace{p_4 - p_5}_{\geq 0})(\underbrace{p_2 + p_3 - p_1 - p_6}_{\leq 0}) \leq 0
\end{equation*}
since $p_5 \leq p_4$, $p_2 \leq p_1$ and $p_3 \leq p_6$ (\cref{eq:relationship_among_probabilities_in_wedge_structure}). Therefore the minimum $\averageswapdistance$ arrangements are given by 
\begin{equation*}
p_1 \geq p_2, p_6 \geq p_3, p_5 \geq p_4.
\end{equation*}
\item
{\em Selection of the total orders that minimize $\averageswapdistance$.}
The previous equation defines a partial order of vertices that is compatible with four total orders. Two of the total orders are defined in \cref{eq:optimal_total_order1} and \cref{eq:optimal_total_order2}. Two other total orders are given by 
\begin{align}
& p_1 \geq p_6 \geq p_2 \geq p_3 \geq p_5 \geq p_4 \label{eq:optimal_total_order3} \\
& p_1 \geq p_2 \geq p_6 \geq p_5 \geq p_3 \geq p_4. \label{eq:optimal_total_order4}
\end{align}
We wish to determine which of the four total orders minimizes $\averageswapdistance$. To do so, we treat the total order in \cref{eq:optimal_total_order1} as canonical and produce the other total orders by swapping pairs of probabilities. If we swap both probabilities of the pairs $(2, 6)$ and $(3, 5)$ on the canonical total order (\cref{eq:optimal_total_order1}), we obtain an arrangement equivalent to the total order (\cref{eq:optimal_total_order2}). It is easy to see that both arrangements have the same $\averageswapdistance$ by symmetry. 
The argument can be validated mechanically applying Property \ref{prop:increment_in_average_swap_distance_after_2_swaps} to the pairs $(2, 6)$ and $(3, 5)$.
To obtain a probability arrangement equivalent to the total order in \cref{eq:optimal_total_order3}, we swap the probabilities of the pair $(2, 6)$. Then Property 
\ref{prop:increment_in_average_swap_distance_after_swap} gives
\begin{eqnarray*}
\averageswapdistance' - \averageswapdistance & = & 4 \underbrace{(p_2 - p_6)}(p_3 - p_5) \\
                                             & = & 4 \underbrace{(\pi_2 - \pi_3)}_{\geq 0}(\pi_4 - \pi_5)\geq 0 
\end{eqnarray*}
with equality if and only if $p_2 = p_6$. To obtain a probability arrangement equivalent to the total order in \cref{eq:optimal_total_order4}, we swap the probabilities of the pair $(3, 5)$. Proceeding similarly, we obtain
\begin{eqnarray*}
\averageswapdistance' - \averageswapdistance & = & 4 \underbrace{(p_5 - p_3)}\underbrace{(p_6 - p_2)} \\
\averageswapdistance' - \averageswapdistance & = & 4 \underbrace{(\pi_5 - \pi_4)}_{\leq 0}\underbrace{(\pi_3 - \pi_2)}_{\leq 0} \geq 0
\end{eqnarray*} 
with equality if and only if $p_3 = p_5$ and $p_2 = p_6$.
\end{enumerate}

\end{proof}

The previous theorem has various interesting consequences about the structure of optimal arrangements. 

\begin{corollary}
In minimum $\averageswapdistance$ arrangements,
\begin{align}
\min(2P(0), 1 - P(0)) \geq P(1) \geq P(2) \geq 2P(3) \geq 2Z \nonumber \\
P(3) \leq \min\left(P(0), 1 - P(0), \frac{P(2)}{2}\right), \label{eq:probability_of_distance_3_n_3}
\end{align}
where 
\begin{equation*}
Z = \pi_1 \pi_6 + \pi_2 \pi_5 + \pi_3 \pi_4.
\end{equation*}
Let $\pi_i$ be the $i$-th largest value among the $p_i$'s. Then 
\begin{align}
\averageswapdistance_{min} = & 3\dominanceindex - 2\left[ \right. \nonumber \\
                             & \pi_1(2\pi_2 + \pi_4) + \pi_2(2\pi_4 + \pi_6) + \nonumber \\ 
                             & \pi_3(2\pi_1 + \pi_2) + \pi_4(2\pi_6 + \pi_5) + \nonumber \\ 
                             & \left. \pi_5(2\pi_3 + \pi_1) + \pi_6(2\pi_5 + \pi_3) \right] \label{eq:average_swap_distance_min}
\end{align} 
\end{corollary}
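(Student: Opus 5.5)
The plan is to use Theorem \ref{theo:optimal_arrangements_theorem}, which fixes the shape of every minimum $\averageswapdistance$ arrangement up to symmetry. With the labeling of \cref{fig:permutohedron_1_to_6} we may take the canonical total order \cref{eq:optimal_total_order1}. In it the assignment is $p_1=\pi_1$, $p_2=\pi_2$, $p_6=\pi_3$, $p_3=\pi_4$, $p_5=\pi_5$, $p_4=\pi_6$. The symmetric order \cref{eq:optimal_total_order2} gives the same $\averageswapdistance$, and by the reflection symmetry of the hexagon also the same $P(d)$'s, so one case suffices. I then substitute this assignment into the explicit expressions for $P(1)$, $P(2)$ and $P(3)$ from the proof of Property \ref{prop:probability_of_distance_upper_bound}. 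This gives
\begin{align*}
P(1) &= 2(\pi_1\pi_2+\pi_2\pi_4+\pi_4\pi_6+\pi_6\pi_5+\pi_5\pi_3+\pi_3\pi_1),\\
P(2) &= 2(\pi_1\pi_4+\pi_2\pi_6+\pi_4\pi_5+\pi_6\pi_3+\pi_5\pi_1+\pi_3\pi_2),\\
P(3) &= 2(\pi_1\pi_6+\pi_2\pi_5+\pi_3\pi_4) = 2Z .
\end{align*}

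For the closed form \cref{eq:average_swap_distance_min}, I write $\averageswapdistance=P(1)+2P(2)+3P(3)$. With \cref{eq:normalization} and $P(0)=S$ we have $P(1)+P(2)+P(3)=\dominanceindex$, so $\averageswapdistance = 3\dominanceindex - 2P(1) - P(2)$. Plugging in the expressions above, $2P(1)+P(2)$ equals twice the bracket in \cref{eq:average_swap_distance_min}. This is a bookkeeping check: for each $\pi_i$, collect the adjacent partner, which gets coefficient $2$, and the distance-$2$ partner, which gets coefficient $1$. For example, $\pi_1(2\pi_2+\pi_4)$ together with the $2\pi_1$ term inside $\pi_3(2\pi_1+\pi_2)$ reproduces $2\cdot 2\pi_1\pi_2 + 2\cdot 2\pi_1\pi_3 + 2\pi_1\pi_4+\dots$.

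For the inequality chain I proceed link by link.
\begin{itemize}
\item The leftmost bound $\min(2P(0),1-P(0))\ge P(1)$ is exactly Property \ref{prop:probability_of_distance_upper_bound}.
\item Since $P(3)=2Z$, the link $2P(3)\ge 2Z$ holds, in fact with equality.
\item The link $P(2)\ge 2P(3)=4Z$ is the lower bound $P(2)\ge 4Z$ of Property \ref{prop:probability_of_distance_upper_bound}.
\item The second display, $P(3)\le\min(P(0),1-P(0),P(2)/2)$, then combines $P(3)\le\min(S,1-S)$ from the same property with $P(2)\ge 2P(3)$.
\end{itemize}

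The main obstacle is $P(1)\ge P(2)$. It does not follow from Property \ref{prop:probability_of_distance_upper_bound}, because each of the two quantities lies between $4Z$ and $2S$. I would first try a swap argument. The canonical arrangement is optimal, so $2P(1)+P(2)$ is maximal among all arrangements. Compare it with the arrangement obtained by the relabeling that exchanges the roles of distance-$1$ and distance-$2$ pairs; on the hexagon this is $i\mapsto 2i \bmod 6$ composed suitably, or else realized by an explicit permutation of the $\pi$'s. That arrangement has $P(1)$ and $P(2)$ interchanged and $P(3)$ unchanged. Optimality then gives $2P(1)+P(2)\ge 2P(2)+P(1)$, which is $P(1)\ge P(2)$. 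If no such relabeling preserves $P(3)$, the fallback is direct: expand $\tfrac12(P(1)-P(2))$ and group it into a sum of products of nonnegative differences such as $(\pi_1-\pi_2)(\pi_3-\pi_5)$ and $(\pi_1-\pi_5)(\pi_2-\pi_4)$, plus a remainder in $\pi_2,\dots,\pi_6$ to be handled the same way using $\pi_i\ge\pi_{i+1}$.
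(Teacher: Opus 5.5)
Apart from the step $P(1)\ge P(2)$, your argument is the paper's. You substitute the canonical arrangement of Theorem \ref{theo:optimal_arrangements_theorem}, read off $P(3)=2Z$, take the outer bounds from Property \ref{prop:probability_of_distance_upper_bound}, and obtain \cref{eq:average_swap_distance_min} from $\averageswapdistance=3\dominanceindex-2P(1)-P(2)$. One slip: $P(3)=2Z$ gives $2P(3)=4Z$. So the link $2P(3)\ge 2Z$ holds, but it is not an equality; it is strict whenever $Z>0$.

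The genuine gap is your primary argument for $P(1)\ge P(2)$: the relabeling it needs does not exist. The map $i\mapsto 2i \bmod 6$ is not a bijection, and no other choice can work. On the hexagon, the pairs at distance $1$ form a $6$-cycle, while the pairs at distance $2$ form two disjoint triangles, $\{1,3,5\}$ and $\{2,4,6\}$. Since these graphs are not isomorphic, no rearrangement interchanges $P(1)$ and $P(2)$ for generic $\pi_i$. The step therefore rests entirely on your fallback, which you left unexecuted. That fallback is exactly the paper's route, and your grouping does close:
\begin{align*}
\tfrac12\left(P(1)-P(2)\right) = {} & (\pi_1-\pi_2)(\pi_3-\pi_5)+(\pi_1-\pi_5)(\pi_2-\pi_4) \\
 & +(\pi_2-\pi_5)(\pi_4-\pi_6)+(\pi_3-\pi_4)(\pi_5-\pi_6) \ge 0 .
\end{align*}
The paper groups the same expression as $(\pi_2-\pi_5)(\pi_1-\pi_6+\pi_4-\pi_3)+(\pi_1-\pi_6)(\pi_3-\pi_4)$.

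If you want an optimality argument, average instead of interchanging. The three antipodal swaps $(1,4)$, $(2,5)$, $(3,6)$ leave $\dominanceindex$ and $P(3)$ unchanged. By \cref{eq:averageswapdistance_for_bounding_2_times_dominance_index_n_3}, each therefore changes $\averageswapdistance$ by $P(1)-P'(1)$, which is $\ge 0$ at a minimum. Summing these three changes gives exactly $2\left(P(1)-P(2)\right)$. This proves $P(1)\ge P(2)$ for every minimum arrangement, without using the explicit structure from Theorem \ref{theo:optimal_arrangements_theorem}.
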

\begin{proof}
Thanks to Property \ref{prop:probability_of_distance_upper_bound}, we know that
\begin{align*}
P(1), P(2) \leq \min(2P(0), 1 - P(0)) \\
2Z \leq P(3) \leq \min(P(0), 1 - P(0)).
\end{align*}
Now we aim to show that $2P(3) \leq P(1), P(2)$. 
Applying Theorem \ref{theo:optimal_arrangements_theorem} to \cref{eq:probability_of_distance} using \cref{fig:optimal_probability_arrangement} as a guide, we obtain 
\begin{align}
P(1) & = 2\boldsymbol{\pi}(\pi_2,\pi_4, \pi_1, \pi_6, \pi_3, \pi_5) \nonumber \\
     & = 2\boldsymbol{\pi}(\pi_3,\pi_1, \pi_5, \pi_2, \pi_6, \pi_4) \nonumber \\
P(2) & = 2\boldsymbol{\pi}(\pi_4,\pi_6, \pi_2, \pi_5, \pi_1, \pi_3) \nonumber \\
     & = 2\boldsymbol{\pi}(\pi_5,\pi_3, \pi_6, \pi_1, \pi_4, \pi_2) \nonumber \\
P(3) & = \boldsymbol{\pi} \overrightarrow{\boldsymbol{\pi}}. \label{eq:probability_distances_minimum_n_3}
\end{align}
By Theorem \ref{theo:Hardy_et_al}, $2P(3) \leq P(1), P(2)$.

Expanding \cref{eq:probability_distances_minimum_n_3}, we obtain
\begin{align}
\frac{P(1)}{2} = & \pi_1 \pi_2 + \pi_2 \pi_4 + \pi_4 \pi_6 + \pi_5 \pi_6 + \pi_3 \pi_5 + \nonumber \\
                 & \pi_1 \pi_3 \label{eq:probability_distance_1_minimum_n_3} \\
\frac{P(2)}{2} = & \pi_1 \pi_4 + \pi_2 \pi_6 + \pi_4 \pi_5 + \pi_3 \pi_6 + \pi_1 \pi_5 + \nonumber \\
                 & \pi_2 \pi_3 \label{eq:probability_distance_2_minimum_n_3}\\
\frac{P(3)}{2} = & \pi_1 \pi_6 + \pi_2 \pi_5 + \pi_3 \pi_4. \nonumber
\end{align}

To prove that $P(1) \geq P(2)$, 
we define 
\begin{equation*}
\Delta = \frac{1}{2}(P(1) - P(2)).
\end{equation*}
We will show that $\Delta \geq 0$. Plugging the expansions in \cref{eq:probability_distance_1_minimum_n_3} and \cref{eq:probability_distance_2_minimum_n_3} into the definition of $\Delta$, we get
\begin{align*}
\Delta & = (\pi_1 - \pi_6)(\pi_2 - \pi_4 + \pi_3 - \pi_5) + (\pi_2 - \pi_5)(\pi_4 - \pi_3) \\
       & = (\underbrace{\pi_2 - \pi_5}_{\geq 0})(\underbrace{\pi_1 - \pi_6 + \pi_4 - \pi_3}_{\geq 0}) + (\underbrace{\pi_1 - \pi_6}_{\geq 0})(\underbrace{\pi_3 - \pi_4}_{\geq 0}) \\
       & \geq 0
\end{align*}
Knowing that $2P(3) \leq P(1), P(2)$ and $P(1) \geq P(2)$, we get $2P(3) \leq \min(P(1), P(2)) = P(2)$.

Combining the results obtained so far, we retrieve \cref{eq:probability_of_distance_3_n_3}.

\Cref{eq:alternative2_average_swap_distance} with $n = 3$ yields 
\begin{equation*}
\averageswapdistance = 3\dominanceindex - 2P(1) - P(2).
\end{equation*}
The application of equations \cref{eq:probability_distance_1_minimum_n_3} and \cref{eq:probability_distance_2_minimum_n_3} eventually produces \cref{eq:average_swap_distance_min} after some algebra. 
\end{proof}

The following Corollary is the correlate of Corollary \ref{cor:local_optimality} for global optimality.

\begin{corollary}{Consequences of global optimality.}
Assume $n=3$. In a minimum $\averageswapdistance$ arrangement, there is a vertex $u$ with maximum probability, such that arrangement satisfies the following conditions
\begin{enumerate}
\item
{\em Radiation from $u$.} Probability remains the same or decreases as one moves away from $u$ on the permutohedron (\cref{fig:optimal_probability_arrangement} (a-b)). 
Consider $u$ as the reference point. The vertices at distance 1 are assigned probabilities $\pi_2$ and $\pi_3$, the vertex at distance $2$ are assigned probabilities $\pi_4$ and $\pi_5$ and the vertex at distance $3$ is assigned $\pi_6$.
\item
{\em Adjacency of the most likely orders}. Same definition as in Corollary \ref{cor:local_optimality}. 
\item
{\em Contiguity of the non-zero probability orders}. Same definition as in Corollary \ref{cor:local_optimality}. 
\end{enumerate}
\label{cor:optimality}
\end{corollary}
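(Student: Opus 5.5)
The plan is to derive everything from Theorem \ref{theo:optimal_arrangements_theorem}, in the same way Corollary \ref{cor:local_optimality} was derived from Property \ref{prop:local_optimality}. By that theorem, any minimum $\averageswapdistance$ arrangement has, after relabeling (rotations and reflections of the hexagon are automorphisms of the permutohedron of order 3), a vertex $u=1$ with $p_1=\pi_1$. The arrangement then follows one of the two total orders \cref{eq:optimal_total_order1} or \cref{eq:optimal_total_order2}, i.e. $p_1\geq p_2\geq p_6\geq p_3\geq p_5\geq p_4$ or its mirror image. Because each is a total order on all six vertices with $p_1$ first, we may take $p_1=\pi_1,\ p_2=\pi_2,\ p_6=\pi_3,\ p_3=\pi_4,\ p_5=\pi_5,\ p_4=\pi_6$ (or the mirror). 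With ties the values still sit in the positions dictated by the order.

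\emph{Radiation.} With the labeling of \cref{fig:permutohedron_1_to_6}, the distances from vertex 1 are $d_{12}=d_{16}=1$, $d_{13}=d_{15}=2$ and $d_{14}=3$. So the vertices at distance 1 receive $\pi_2,\pi_3$, those at distance 2 receive $\pi_4,\pi_5$, and the one at distance 3 receives $\pi_6$. Hence probability is nonincreasing in $d_{1j}$. Equivalently, the arrangement realizes $\boldsymbol{\pi}\overrightarrow{\boldsymbol{d}}$, so it is also a minimizer of $\localaverageswapdistance[1]$ and all of Corollary \ref{cor:local_optimality} applies.

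\emph{Adjacency.} The top two probabilities $\pi_1,\pi_2$ sit at vertices $1$ and $2$ (or $1$ and $6$), which are joined by an edge. The same conclusion holds under ties, since some choice of the two most likely orders is adjacent.

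\emph{Contiguity.} This is the step needing the most care. Since $m$ probabilities are positive, $\pi_1,\dots,\pi_m>0$ and $\pi_{m+1}=\dots=\pi_6=0$. The order from the theorem makes the support the first $m$ vertices of the sequence $(1,2,6,3,5,4)$ (or $(1,6,2,5,3,4)$). I will check, for each $m$ from 1 to 6, that these prefixes are paths in the hexagon:
\begin{itemize}
\item $\{1\}$ and $\{1,2\}$ are trivially paths;
\item $\{1,2,6\}$ is the path $6$--$1$--$2$;
\item adding $3$ gives $6$--$1$--$2$--$3$;
\item adding $5$ gives $5$--$6$--$1$--$2$--$3$;
\item all six vertices form the cycle, which contains a Hamiltonian path.
\end{itemize}
The mirrored order is symmetric.

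The obstacle is ties among zero probabilities: a zero-probability vertex might be listed before a positive one. This cannot happen, because the total order is nonincreasing, so positive values always precede zeros in the sequence. Ties among positive values do not change the support set. This argument mirrors item 3 of Corollary \ref{cor:local_optimality}, except that here each prefix is exactly a path, not merely compatible with the preorder.
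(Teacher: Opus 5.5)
Your proposal is correct and follows essentially the same route as the paper: items 1 and 2 are read off directly from the two total orders of Theorem \ref{theo:optimal_arrangements_theorem}, and contiguity is derived from those same orders. Your explicit check that every prefix of $(1,2,6,3,5,4)$ (and of its mirror) is a path, together with your remark that zero probabilities can only form a suffix of a nonincreasing sequence, spells out in detail the paper's one-line claim that a discontiguity would violate one of the inequalities of \cref{fig:optimal_probability_arrangement} (a-b).
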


\begin{proof}
1 and 2 follow trivially from Theorem \ref{theo:optimal_arrangements_theorem}. 3 follows from 1 because a discontiguity would violate one of the inequalities described in \cref{fig:optimal_probability_arrangement} (a-b).
\end{proof}

We have shown that an optimal arrangement cannot be non-contiguous. Appendix \ref{app:contiguous} examines contiguity more closely showing that contiguity can be obtained without imposing optimality, simply looking for arrangements with smaller $\averageswapdistance$. 

\subsection{Counting the number of optimal arrangements}
\label{app:number_of_optimal_arrangements}

Consider $N_o(m)$, the number of probability arrangements that are optimal knowing $m$, the number of non-zero probability orders. The case $m = 1$ is special. We have $N_o(1) = 6!$, since all probability arrangements have the same $\averageswapdistance$ when $m=1$. 
When $1 < m \leq 6$ and assuming that there are no probability ties among non-zero probability orders, we have 
\begin{equation}
N_o(m) = 2 \cdot 6 \cdot (6 - m)!,
\label{eq:number_of_optimal_arrangements}
\end{equation}
where the $2$ factor comes from the two total orders that minimize $\averageswapdistance$ (Theorem \ref{theo:optimal_arrangements_theorem}), 6 accounts for the 6 vertices that can be assigned a maximum probability and $(6 - m)!$ accounts for the arrangements such that the zero probabilities are assigned to the same subset of vertices. Therefore,
the probability that a random permutation of probabilities is optimal as function of $m$ is 
$p_o(m) = N_o(m)/6!$, which gives \cref{eq:probability_of_optimal_by_chance}
if there are no probability ties among the non-zero probability vertices. Table \ref{tab:probability_n3} shows the values of $p_o(m)$. Notice that $p_o(5) = p_o(6)$.

\section{Contiguous arrangements}

\label{app:contiguous}

\subsection{Counting the number of contiguous arrangements}

We define $p_c(m)$ as the probability that a random permutation of probabilities produces a contiguous arrangement knowing that the number of non-zero probabilities is $m$. $p_c(m)$ is equivalent to the probability that $m$ randomly chosen vertices form a path in the permutohedron.
Consider $N_c(m)$, the number of permutations where the non-zero probability orders are contiguous. We treat $N_c(m)$ as the ways of distributing $m$ elements on a sequence of $6$ elements forming a subsequence of $m$ consecutive elements with boundary conditions, that is, the next element of the element in position $6$ is the element in position 1 and the previous element of the element in position 1 is the element in position $6$.
The number of ways of distributing the $m$ elements forming such a subsequence of $m$ elements is as follows.  
The subsequence has $6$ possible start positions in the sequence. For each start position, the elements in the segment are interchangeable and also the elements out of the segment are interchangeable.
Therefore, we have
\begin{equation}
N_c(m) = 6 m! (6 - m)!,
\label{eq:number_of_contiguous_arrangements}
\end{equation}
and then $p_c(m) = N_c(m)/6!$ gives
\begin{eqnarray*}
p_c(m) & = \frac{6 m! (6 - m)!}{6!} \\
       & = \frac{m! (6- m)!}{5!}
\end{eqnarray*}
for $1\leq m \leq 5$.
When $m = 6$, the notion of start position does not apply but it is easy to infer that $p_c(m) = 1$ in that case. Finally, 
\begin{equation}
p_c(m) = \left\{ 
             \begin{array}{ll} 
                 \frac{m! (6 - m)!}{120} & \mbox{if } 1\leq m \leq 5 \\
                 1 & \mbox{if } m = 6. 
             \end{array}    
          \right.
          \label{eq:probability_of_contiguous}   
\end{equation} 
Notice that $p_c(5) = p_c(6)$. It is easy to see that $p_c(m)$ satisfies a symmetry property, that is $p_c(m) = p_c(6 - m)$ for $1 \leq m \leq 5$. Table \ref{tab:probability_n3} shows the values of $p_c(m)$.

\begin{table}
\caption{\label{tab:probability_n3} $p_o(m)$, the probability that random arrangement is optimal assuming that there are no probability ties among the $m$ non-zero probability orders (\cref{eq:probability_of_optimal_by_chance}), and $p_c(m)$, the probability that a random arrangement is contiguous when the number non-zero probability orders is $m$ (\cref{eq:probability_of_contiguous}).}
\centering
\begin{tabular}{lll}
$m$ & $p_o(m)$ & $p_c(m)$ \\
\hline
1   & $1$    & $1$ \\
2   & $2/5$  & $2/5$ \\
3   & $1/10$ & $3/10$ \\
4   & $1/30$ & $2/5$ \\
5   & $1/60$ & $1$ \\
6   & $1/60$ & $1$ \\
\end{tabular}
\end{table}

\subsection{Contiguity versus optimality}

We will show that contiguity does not imply optimality in general. 

\begin{property}
\label{prop:when_optimality_implies_contiguity}
If all non-zero probabilities are distinct, contiguity implies optimality ($\averageswapdistance = \averageswapdistance_{min}$) if and only if $m \leq 2$.
\end{property}
\begin{proof}
Contiguity implies optimality if and only if $N_c(m) = N_o(m)$. 
This is trivially true for $m = 1$.
For $m > 1$, we recall \cref{eq:number_of_optimal_arrangements} and \cref{eq:number_of_contiguous_arrangements}. Hence, the ratio 
\begin{equation*}
\frac{N_c(m)}{N_o(m)} = \frac{2}{m!}
\end{equation*}
hits $1$ if and only if $m = 2$.
\end{proof}

\subsection{The advantage of transforming a non-contiguous arrangement into a contiguous one}

We will show that any non-contiguous arrangement has a contiguous rearrangement with smaller $\averageswapdistance$. The existence of non-contiguous arrangements requires $m > 1$. The point is that such contiguous arrangement does not need to be optimal when $m > 2$ (Property \ref{prop:when_optimality_implies_contiguity}). 

We define $\averageswapdistance_c$ as the value of $\averageswapdistance$ of a contiguous arrangement, namely all non-zero probability orders are consecutive in the permutohedron. 
We use $\averageswapdistance_{nc}$ to refer to the value of $\averageswapdistance$ of a non-contiguous arrangement.

Notice that, when $m = 1$, $m = 5$ or $m = 6$, all arrangements are contiguous. 
The next Property deals with the values of $m$ where non-contiguous arrangements are possible. 

\begin{property}
\label{prop:contiguity_of_non_zero_probability_orders}
~\\ 
\begin{enumerate}
\item
When $m = 2$, for any contiguous arrangement and for any non-contiguous arrangement, $\averageswapdistance_c < \averageswapdistance_{nc}$.
\item
When $m = 3$ or $m = 4$, for any non-contiguous arrangement, there is always a contiguous arrangement with a value of $\averageswapdistance$ that is $\averageswapdistance_c^*$ such that
$\averageswapdistance_c^* < \averageswapdistance_{nc}$.
\item
When $m = 3$ and the non-contiguous arrangement is one where all pairs of non-zero probability orders are at swap distance 2 (\cref{fig:arrangements} (e)) then, 
for any contiguous arrangement, $\averageswapdistance_c < \averageswapdistance_{nc}$.
\end{enumerate}
\end{property}

\begin{proof}

We use letters $w$, $x$, $y$, $z$ to refer to vertices of the permutohedron that have non-zero probability, that is $p_w, p_x, p_y, p_z > 0$. We assume that the distribution of probabilities is constant. 

When $m=2$, $p_y = 1 - p_y$ and \cref{eq:average_swap_distance} becomes the function
\begin{equation*}
\averageswapdistance(d_{xy}) = 2p_x(1 - p_x) d_{xy}.
\end{equation*}
We have 
$\averageswapdistance_c = \averageswapdistance(1)$ and $\averageswapdistance_{nc} = \averageswapdistance(\delta)$ with $\delta \in \{2, 3\}$.
Obviously, $\averageswapdistance_c < \averageswapdistance_{nc}$. It has been shown that $\averageswapdistance(3)$ with $p_x = 1/2$ maximizes $\averageswapdistance$ for any $m$ when $n = 3$ \citep[Appendix C.1]{Franco2024a}. 

\begin{figure}
\centering
\includegraphics[width = \linewidth]{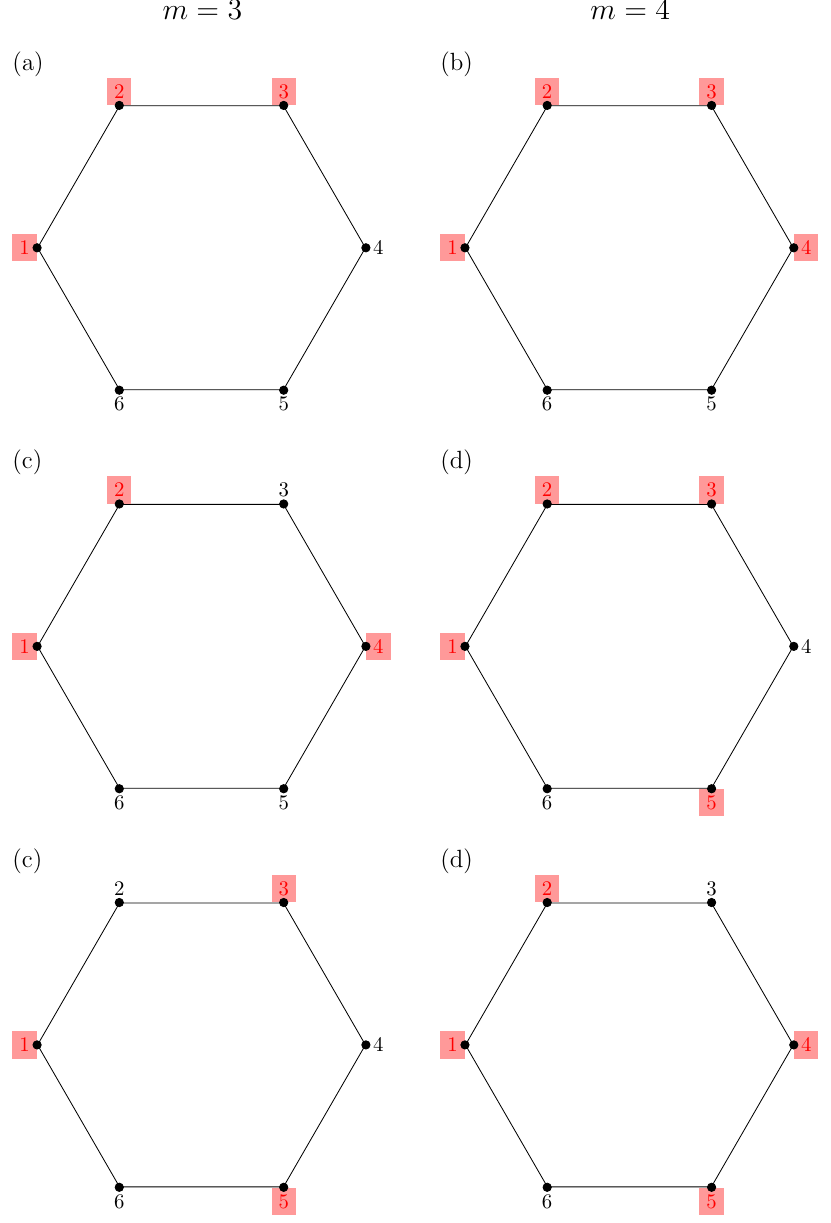}
\caption{\label{fig:arrangements} Distinct ways of arranging $m$ non-zero probability orders on the permutohedron ($m = 3$ on the left and $m=4$ on the right). The orders with non-zero probability are marked in red. 
(a) A contiguous arrangement where the multiset of swap distances is $\{1, 1, 2\}$.
(b) A contiguous arrangement where the multiset of swap distances is $\{1, 1, 1, 2, 2, 3\}$.
(c) A non-contiguous arrangement where the multiset of swap distances is $\{1, 2, 3\}$.
(d) A non-contiguous arrangement where the multiset of swap distances is $\{1, 1, 2, 2, 2, 3\}$. 
(e) A non-contiguous arrangement where the multiset of swap distances is $\{2, 2, 2\}$.
(f) Another non-contiguous arrangement where the multiset of swap distances is $\{1, 1, 2, 2, 3, 3\}$. 
} 
\end{figure}

When $m=3$, \cref{eq:average_swap_distance} becomes 
\begin{equation*}
\averageswapdistance = 2(p_x p_y d_{xy} + p_x p_z d_{xz} + p_y p_z d_{yz}).
\label{eq:average_swap_distance_n_3}
\end{equation*}
In a contiguous arrangement, the multiset of swap distances formed by $d_{xy}$, $d_{xz}$ and $d_{yz}$, i.e. $\{d_{xy}, d_{xz}, d_{yz}\}$ equals the multiset $\{1, 1, 2\}$ (\cref{fig:arrangements} (a)).
In the non-contiguous arrangements, one has that $\{d_{xy}, d_{xz}, d_{yz}\} = \{1, 2, 3\}$ (\cref{fig:arrangements} (c)) or $\{d_{xy}, d_{xz}, d_{yz}\} = \{2, 2, 2\}$ (\cref{fig:arrangements} (e)). 

Suppose a non-contiguous arrangement with multiset $\{1, 2, 3\}$. Without loss of generality suppose that such arrangement has a value of $\averageswapdistance$ that is (\cref{eq:average_swap_distance_n_3})
\begin{equation*}
\averageswapdistance_{nc} = 2(p_x p_y + 2p_x p_z + 3p_y p_z).
\end{equation*}
Then there is a contiguous arrangement such that its value of $\averageswapdistance$ is
\begin{equation}
\averageswapdistance_{c}^* = 2(p_x p_y + p_x p_z + 2p_y p_z) \label{eq:contiguous_arrangement_n_3}
\end{equation}  
and $\averageswapdistance_{c}^* < \averageswapdistance_{nc}$. Thus, any non-contiguous arrangement with multiset $\{1, 2, 3\}$ has a contiguous arrangement with smaller $\averageswapdistance$.

Suppose a non-contiguous arrangement with multiset $\{2, 2, 2\}$. Such arrangement has a value of $\averageswapdistance$ that is 
\begin{equation*}
\averageswapdistance_{nc} = 2(2p_x p_y + 2p_x p_z + 2p_y p_z).
\end{equation*}
Without loss of generality, suppose a contiguous arrangement such that its value of $\averageswapdistance$ is $\averageswapdistance_{c}^*$ (\cref{eq:contiguous_arrangement_n_3}) and then
\begin{equation*}
\averageswapdistance_{nc} = \averageswapdistance_{c}^* + 2(p_x p_y + p_x p_z).
\end{equation*} 
Obviously, $\averageswapdistance_{c}^* < \averageswapdistance_{nc}$. Thus, any non-contiguous arrangement with multiset $\{2, 2, 2\}$ has a value $\averageswapdistance$ that is greater than that of any contiguous arrangement. 

When $m=4$, \cref{eq:average_swap_distance} becomes 
\begin{eqnarray*}
\frac{\averageswapdistance}{2} & = & p_x p_y d_{xy} + p_x p_z d_{xz} + p_x p_z d_{xw} + p_y p_z d_{yz} +  \\	
                               &   & p_y p_w d_{yw} + p_z p_w d_{zw}.
\end{eqnarray*}
In a contiguous arrangement, the multiset of swap distances among pairs of non-zero probability orders is $\{1, 1, 1, 2, 2, 3\}$ (\cref{fig:arrangements} (b)).
In the non-contiguous arrangements, the multiset is either $\{1, 1, 2, 2, 2, 3\}$ (\cref{fig:arrangements} (d)) or $\{1, 1, 2, 2, 3, 3\}$ (\cref{fig:arrangements} (f)). 

Suppose a non-contiguous arrangement with multiset $\{1, 1, 2, 2, 2, 3\}$. Without loss of generality suppose that such arrangement has a value of $\averageswapdistance$ that is 
\begin{equation*}
\frac{\averageswapdistance_{nc}}{2} = p_x p_y + p_x p_z + 2 p_x p_z + 2 p_y p_z + 2 p_y p_w + 3 p_z p_w.
\end{equation*}
Then there is a contiguous arrangement such that its value of $\averageswapdistance$ is
\begin{equation*}
\frac{\averageswapdistance_{c}^*}{2} = p_x p_y + p_x p_z + p_x p_z + 2 p_y p_z + 2 p_y p_w + 3 p_z p_w
\end{equation*}  
and $\averageswapdistance_{c}^* < \averageswapdistance_{nc}$. Thus, any non-contiguous arrangement with multiset $\{1, 1, 2, 2, 2, 3\}$ has a contiguous arrangement with smaller $\averageswapdistance$. Reasoning analogously, we reach the same conclusion for any non-contiguous arrangement with multiset $\{1, 1, 2, 2, 3, 3\}$.

\end{proof}

\bibliography{../../biblio/rferrericancho,../../biblio/complex,../../biblio/ling,../../biblio/cl,../../biblio/cs,../../biblio/ml,../../biblio/maths,../../biblio/Encyclopedia_of_Language_and_Linguistics}

\begin{thebibliography}{66}%
\makeatletter
\providecommand \@ifxundefined [1]{%
 \@ifx{#1\undefined}
}%
\providecommand \@ifnum [1]{%
 \ifnum #1\expandafter \@firstoftwo
 \else \expandafter \@secondoftwo
 \fi
}%
\providecommand \@ifx [1]{%
 \ifx #1\expandafter \@firstoftwo
 \else \expandafter \@secondoftwo
 \fi
}%
\providecommand \natexlab [1]{#1}%
\providecommand \enquote  [1]{``#1''}%
\providecommand \bibnamefont  [1]{#1}%
\providecommand \bibfnamefont [1]{#1}%
\providecommand \citenamefont [1]{#1}%
\providecommand \href@noop [0]{\@secondoftwo}%
\providecommand \href [0]{\begingroup \@sanitize@url \@href}%
\providecommand \@href[1]{\@@startlink{#1}\@@href}%
\providecommand \@@href[1]{\endgroup#1\@@endlink}%
\providecommand \@sanitize@url [0]{\catcode `\\12\catcode `\$12\catcode
  `\&12\catcode `\#12\catcode `\^12\catcode `\_12\catcode `\%12\relax}%
\providecommand \@@startlink[1]{}%
\providecommand \@@endlink[0]{}%
\providecommand \url  [0]{\begingroup\@sanitize@url \@url }%
\providecommand \@url [1]{\endgroup\@href {#1}{\urlprefix }}%
\providecommand \urlprefix  [0]{URL }%
\providecommand \Eprint [0]{\href }%
\providecommand \doibase [0]{https://doi.org/}%
\providecommand \selectlanguage [0]{\@gobble}%
\providecommand \bibinfo  [0]{\@secondoftwo}%
\providecommand \bibfield  [0]{\@secondoftwo}%
\providecommand \translation [1]{[#1]}%
\providecommand \BibitemOpen [0]{}%
\providecommand \bibitemStop [0]{}%
\providecommand \bibitemNoStop [0]{.\EOS\space}%
\providecommand \EOS [0]{\spacefactor3000\relax}%
\providecommand \BibitemShut  [1]{\csname bibitem#1\endcsname}%
\let\auto@bib@innerbib\@empty
\bibitem [{\citenamefont {{Franco-Sánchez}}\ \emph {et~al.}(2026)\citenamefont
  {{Franco-Sánchez}}, \citenamefont {{Martí-Llobet}},\ and\ \citenamefont
  {{Ferrer-i-Cancho}}}]{Franco2024a}%
  \BibitemOpen
  \bibfield  {author} {\bibinfo {author} {\bibfnamefont {V.}~\bibnamefont
  {{Franco-Sánchez}}}, \bibinfo {author} {\bibfnamefont {A.}~\bibnamefont
  {{Martí-Llobet}}},\ and\ \bibinfo {author} {\bibfnamefont {R.}~\bibnamefont
  {{Ferrer-i-Cancho}}},\ }\bibfield  {title} {\bibinfo {title} {Swap distance
  minimization beyond entropy minimization in word order variation},\ }\href
  {https://doi.org/10.1080/09296174.2025.2585611} {\bibfield  {journal}
  {\bibinfo  {journal} {Journal of Quantitative Linguistics}\ }\textbf
  {\bibinfo {volume} {33}},\ \bibinfo {pages} {305} (\bibinfo {year}
  {2026})}\BibitemShut {NoStop}%
\bibitem [{\citenamefont {{Ferrer-i-Cancho}}\ and\ \citenamefont
  {Namboodiripad}(2023)}]{Ferrer2023a}%
  \BibitemOpen
  \bibfield  {author} {\bibinfo {author} {\bibfnamefont {R.}~\bibnamefont
  {{Ferrer-i-Cancho}}}\ and\ \bibinfo {author} {\bibfnamefont {S.}~\bibnamefont
  {Namboodiripad}},\ }\bibfield  {title} {\bibinfo {title} {Swap distance
  minimization in {SOV} languages. {Cognitive} and mathematical foundations},\
  }\href {https://doi.org/10.53482/2023_55_412} {\bibfield  {journal} {\bibinfo
   {journal} {Glottometrics}\ }\textbf {\bibinfo {volume} {55}},\ \bibinfo
  {pages} {59} (\bibinfo {year} {2023})}\BibitemShut {NoStop}%
\bibitem [{\citenamefont {{Ferrer-i-Cancho}}(2016)}]{Ferrer2016c}%
  \BibitemOpen
  \bibfield  {author} {\bibinfo {author} {\bibfnamefont {R.}~\bibnamefont
  {{Ferrer-i-Cancho}}},\ }\bibfield  {title} {\bibinfo {title} {Kauffman's
  adjacent possible in word order evolution},\ }in\ \href
  {https://arxiv.org/pdf/1512.05582} {\emph {\bibinfo {booktitle} {The
  evolution of language: Proceedings of the 11th International Conference
  (EVOLANG11)}}},\ \bibinfo {editor} {edited by\ \bibinfo {editor}
  {\bibfnamefont {S.}~\bibnamefont {Roberts}}, \bibinfo {editor} {\bibfnamefont
  {C.}~\bibnamefont {Cuskley}}, \bibinfo {editor} {\bibfnamefont
  {L.}~\bibnamefont {McCrohon}}, \bibinfo {editor} {\bibfnamefont
  {L.}~\bibnamefont {Barcel\'o-Coblijn}}, \bibinfo {editor} {\bibfnamefont
  {O.}~\bibnamefont {Feher}},\ and\ \bibinfo {editor} {\bibfnamefont
  {T.}~\bibnamefont {Verhoef}}}\ (\bibinfo {address} {New Orleans, USA},\
  \bibinfo {year} {2016})\ \bibinfo {note} {{Evolution} of {Language}
  {Conference} ({Evolang} 2016), March 21-24}\BibitemShut {NoStop}%
\bibitem [{\citenamefont {{Ferrer-i-Cancho}}(2015)}]{Ferrer2013e}%
  \BibitemOpen
  \bibfield  {author} {\bibinfo {author} {\bibfnamefont {R.}~\bibnamefont
  {{Ferrer-i-Cancho}}},\ }\bibfield  {title} {\bibinfo {title} {The placement
  of the head that minimizes online memory. {A} complex systems approach},\
  }\href {https://doi.org/10.1163/22105832-00501007} {\bibfield  {journal}
  {\bibinfo  {journal} {Language Dynamics and Change}\ }\textbf {\bibinfo
  {volume} {5}},\ \bibinfo {pages} {114} (\bibinfo {year} {2015})}\BibitemShut
  {NoStop}%
\bibitem [{\citenamefont {Somerfield}\ \emph {et~al.}(2008)\citenamefont
  {Somerfield}, \citenamefont {Clarke},\ and\ \citenamefont
  {Warwick}}]{Sommerfield2008a}%
  \BibitemOpen
  \bibfield  {author} {\bibinfo {author} {\bibfnamefont {P.}~\bibnamefont
  {Somerfield}}, \bibinfo {author} {\bibfnamefont {K.}~\bibnamefont {Clarke}},\
  and\ \bibinfo {author} {\bibfnamefont {R.}~\bibnamefont {Warwick}},\
  }\bibfield  {title} {\bibinfo {title} {Simpson index},\ }in\ \href
  {https://doi.org/10.1016/B978-008045405-4.00133-6} {\emph {\bibinfo
  {booktitle} {Encyclopedia of Ecology}}},\ \bibinfo {editor} {edited by\
  \bibinfo {editor} {\bibfnamefont {S.~E.}\ \bibnamefont {Jørgensen}}\ and\
  \bibinfo {editor} {\bibfnamefont {B.~D.}\ \bibnamefont {Fath}}}\ (\bibinfo
  {publisher} {Academic Press},\ \bibinfo {address} {Oxford},\ \bibinfo {year}
  {2008})\ pp.\ \bibinfo {pages} {3252--3255}\BibitemShut {NoStop}%
\bibitem [{\citenamefont {{Rios-El-Yazidi}}\ and\ \citenamefont
  {{Ferrer-i-Cancho}}(2026)}]{Rios-El-Yazidi2026a}%
  \BibitemOpen
  \bibfield  {author} {\bibinfo {author} {\bibfnamefont {J.}~\bibnamefont
  {{Rios-El-Yazidi}}}\ and\ \bibinfo {author} {\bibfnamefont {R.}~\bibnamefont
  {{Ferrer-i-Cancho}}},\ }\bibfield  {title} {\bibinfo {title} {Swap distance
  minimization shapes the order of subject, object and verb in languages of the
  world},\ }\href {http://arxiv.org/abs/2604.26726} {\bibfield  {journal}
  {\bibinfo  {journal} {Europhysics Letters}\ ,\ \bibinfo {pages} {in press}}
  (\bibinfo {year} {2026})}\BibitemShut {NoStop}%
\bibitem [{\citenamefont {{Rios-El-Yazidi}}(2026)}]{Rios-El-Yazidi2026b}%
  \BibitemOpen
  \bibfield  {author} {\bibinfo {author} {\bibfnamefont {J.}~\bibnamefont
  {{Rios-El-Yazidi}}},\ }\emph {\bibinfo {title} {The Quadratic Assignment
  Problem in word order}},\ \href {https://hdl.handle.net/2117/473378}
  {Master's thesis},\ \bibinfo  {school} {Barcelona School of Informatics},
  \bibinfo {address} {Barcelona} (\bibinfo {year} {2026})\BibitemShut {NoStop}%
\bibitem [{\citenamefont {{Ferrer-i-Cancho}}\ \emph
  {et~al.}(2022{\natexlab{a}})\citenamefont {{Ferrer-i-Cancho}}, \citenamefont
  {{G\'omez-Rodr\'iguez}}, \citenamefont {Esteban},\ and\ \citenamefont
  {{Alemany-Puig}}}]{Ferrer2020b}%
  \BibitemOpen
  \bibfield  {author} {\bibinfo {author} {\bibfnamefont {R.}~\bibnamefont
  {{Ferrer-i-Cancho}}}, \bibinfo {author} {\bibfnamefont {C.}~\bibnamefont
  {{G\'omez-Rodr\'iguez}}}, \bibinfo {author} {\bibfnamefont {J.~L.}\
  \bibnamefont {Esteban}},\ and\ \bibinfo {author} {\bibfnamefont
  {L.}~\bibnamefont {{Alemany-Puig}}},\ }\bibfield  {title} {\bibinfo {title}
  {Optimality of syntactic dependency distances},\ }\href
  {https://doi.org/10.1103/PhysRevE.105.014308} {\bibfield  {journal} {\bibinfo
   {journal} {Physical Review E}\ }\textbf {\bibinfo {volume} {105}},\ \bibinfo
  {pages} {014308} (\bibinfo {year} {2022}{\natexlab{a}})}\BibitemShut
  {NoStop}%
\bibitem [{\citenamefont {Petrini}\ \emph {et~al.}(2026)\citenamefont
  {Petrini}, \citenamefont {{Casas-i-Muñoz}}, \citenamefont
  {{Cluet-i-Martinell}}, \citenamefont {Wang}, \citenamefont {Bentz},\ and\
  \citenamefont {{Ferrer-i-Cancho}}}]{Petrini2022a}%
  \BibitemOpen
  \bibfield  {author} {\bibinfo {author} {\bibfnamefont {S.}~\bibnamefont
  {Petrini}}, \bibinfo {author} {\bibfnamefont {A.}~\bibnamefont
  {{Casas-i-Muñoz}}}, \bibinfo {author} {\bibfnamefont {J.}~\bibnamefont
  {{Cluet-i-Martinell}}}, \bibinfo {author} {\bibfnamefont {M.}~\bibnamefont
  {Wang}}, \bibinfo {author} {\bibfnamefont {C.}~\bibnamefont {Bentz}},\ and\
  \bibinfo {author} {\bibfnamefont {R.}~\bibnamefont {{Ferrer-i-Cancho}}},\
  }\bibfield  {title} {\bibinfo {title} {The optimality of word lengths.
  {Theoretical} foundations and an empirical study},\ }\href
  {https://doi.org/10.53482/2026_60_434} {\bibfield  {journal} {\bibinfo
  {journal} {Glottometrics}\ }\textbf {\bibinfo {volume} {60}},\ \bibinfo
  {pages} {77} (\bibinfo {year} {2026})}\BibitemShut {NoStop}%
\bibitem [{\citenamefont {Hubert}\ and\ \citenamefont
  {Arabie}(1985)}]{Hubert1985a}%
  \BibitemOpen
  \bibfield  {author} {\bibinfo {author} {\bibfnamefont {L.}~\bibnamefont
  {Hubert}}\ and\ \bibinfo {author} {\bibfnamefont {P.}~\bibnamefont
  {Arabie}},\ }\bibfield  {title} {\bibinfo {title} {Comparing partitions},\
  }\href {https://doi.org/10.1007/bf01908075} {\bibfield  {journal} {\bibinfo
  {journal} {Journal of Classification}\ }\textbf {\bibinfo {volume} {2}},\
  \bibinfo {pages} {193–218} (\bibinfo {year} {1985})}\BibitemShut {NoStop}%
\bibitem [{\citenamefont {Vinh}\ \emph {et~al.}(2010)\citenamefont {Vinh},
  \citenamefont {Epps},\ and\ \citenamefont {Bailey}}]{Vinh2010a}%
  \BibitemOpen
  \bibfield  {author} {\bibinfo {author} {\bibfnamefont {N.~X.}\ \bibnamefont
  {Vinh}}, \bibinfo {author} {\bibfnamefont {J.}~\bibnamefont {Epps}},\ and\
  \bibinfo {author} {\bibfnamefont {J.}~\bibnamefont {Bailey}},\ }\bibfield
  {title} {\bibinfo {title} {Information theoretic measures for clusterings
  comparison: Variants, properties, normalization and correction for chance},\
  }\href {http://jmlr.org/papers/v11/vinh10a.html} {\bibfield  {journal}
  {\bibinfo  {journal} {Journal of Machine Learning Research}\ }\textbf
  {\bibinfo {volume} {11}},\ \bibinfo {pages} {2837} (\bibinfo {year}
  {2010})}\BibitemShut {NoStop}%
\bibitem [{\citenamefont {{Ferrer-i-Cancho}}(2004)}]{Ferrer2004b}%
  \BibitemOpen
  \bibfield  {author} {\bibinfo {author} {\bibfnamefont {R.}~\bibnamefont
  {{Ferrer-i-Cancho}}},\ }\bibfield  {title} {\bibinfo {title} {{Euclidean}
  distance between syntactically linked words},\ }\href
  {https://doi.org/10.1103/PhysRevE.70.056135} {\bibfield  {journal} {\bibinfo
  {journal} {Physical Review E}\ }\textbf {\bibinfo {volume} {70}},\ \bibinfo
  {pages} {056135} (\bibinfo {year} {2004})}\BibitemShut {NoStop}%
\bibitem [{\citenamefont {Liu}(2008)}]{Liu2008a}%
  \BibitemOpen
  \bibfield  {author} {\bibinfo {author} {\bibfnamefont {H.}~\bibnamefont
  {Liu}},\ }\bibfield  {title} {\bibinfo {title} {Dependency distance as a
  metric of language comprehension difficulty},\ }\href
  {https://doi.org/10.17791/jcs.2008.9.2.159} {\bibfield  {journal} {\bibinfo
  {journal} {Journal of Cognitive Science}\ }\textbf {\bibinfo {volume} {9}},\
  \bibinfo {pages} {159} (\bibinfo {year} {2008})}\BibitemShut {NoStop}%
\bibitem [{\citenamefont {Futrell}\ \emph
  {et~al.}(2015{\natexlab{a}})\citenamefont {Futrell}, \citenamefont
  {Mahowald},\ and\ \citenamefont {Gibson}}]{Futrell2015a}%
  \BibitemOpen
  \bibfield  {author} {\bibinfo {author} {\bibfnamefont {R.}~\bibnamefont
  {Futrell}}, \bibinfo {author} {\bibfnamefont {K.}~\bibnamefont {Mahowald}},\
  and\ \bibinfo {author} {\bibfnamefont {E.}~\bibnamefont {Gibson}},\
  }\bibfield  {title} {\bibinfo {title} {Large-scale evidence of dependency
  length minimization in 37 languages},\ }\href
  {https://doi.org/https://doi.org/10.1073/pnas.1502134112} {\bibfield
  {journal} {\bibinfo  {journal} {Proceedings of the National Academy of
  Sciences USA}\ }\textbf {\bibinfo {volume} {112}},\ \bibinfo {pages} {10336}
  (\bibinfo {year} {2015}{\natexlab{a}})}\BibitemShut {NoStop}%
\bibitem [{\citenamefont {Muñoz-Ortiz}\ \emph {et~al.}(2024)\citenamefont
  {Muñoz-Ortiz}, \citenamefont {Gómez-Rodríguez},\ and\ \citenamefont
  {Vilares}}]{Munoz-Ortiz2024a}%
  \BibitemOpen
  \bibfield  {author} {\bibinfo {author} {\bibfnamefont {A.}~\bibnamefont
  {Muñoz-Ortiz}}, \bibinfo {author} {\bibfnamefont {C.}~\bibnamefont
  {Gómez-Rodríguez}},\ and\ \bibinfo {author} {\bibfnamefont
  {D.}~\bibnamefont {Vilares}},\ }\bibfield  {title} {\bibinfo {title}
  {Contrasting linguistic patterns in human and {LLM}-generated news text},\
  }\bibfield  {journal} {\bibinfo  {journal} {Artificial Intelligence Review}\
  }\textbf {\bibinfo {volume} {57}},\ \href
  {https://doi.org/10.1007/s10462-024-10903-2} {10.1007/s10462-024-10903-2}
  (\bibinfo {year} {2024})\BibitemShut {NoStop}%
\bibitem [{\citenamefont {Futrell}\ \emph
  {et~al.}(2015{\natexlab{b}})\citenamefont {Futrell}, \citenamefont {Hickey},
  \citenamefont {Lee}, \citenamefont {Lim}, \citenamefont {Luchkina},\ and\
  \citenamefont {Gibson}}]{Futrell2015b}%
  \BibitemOpen
  \bibfield  {author} {\bibinfo {author} {\bibfnamefont {R.}~\bibnamefont
  {Futrell}}, \bibinfo {author} {\bibfnamefont {T.}~\bibnamefont {Hickey}},
  \bibinfo {author} {\bibfnamefont {A.}~\bibnamefont {Lee}}, \bibinfo {author}
  {\bibfnamefont {E.}~\bibnamefont {Lim}}, \bibinfo {author} {\bibfnamefont
  {E.}~\bibnamefont {Luchkina}},\ and\ \bibinfo {author} {\bibfnamefont
  {E.}~\bibnamefont {Gibson}},\ }\bibfield  {title} {\bibinfo {title}
  {Cross-linguistic gestures reflect typological universals: a subject-initial,
  verb-final bias in speakers of diverse languages},\ }\href
  {https://doi.org/10.1016/j.cognition.2014.11.022} {\bibfield  {journal}
  {\bibinfo  {journal} {Cognition}\ }\textbf {\bibinfo {volume} {136}},\
  \bibinfo {pages} {215} (\bibinfo {year} {2015}{\natexlab{b}})}\BibitemShut
  {NoStop}%
\bibitem [{\citenamefont {Goldin-Meadow}\ \emph {et~al.}(2008)\citenamefont
  {Goldin-Meadow}, \citenamefont {So}, \citenamefont {\"Ozy\"urek},\ and\
  \citenamefont {Mylander}}]{Goldin-Meadow2008a}%
  \BibitemOpen
  \bibfield  {author} {\bibinfo {author} {\bibfnamefont {S.}~\bibnamefont
  {Goldin-Meadow}}, \bibinfo {author} {\bibfnamefont {W.~C.}\ \bibnamefont
  {So}}, \bibinfo {author} {\bibfnamefont {A.}~\bibnamefont {\"Ozy\"urek}},\
  and\ \bibinfo {author} {\bibfnamefont {C.}~\bibnamefont {Mylander}},\
  }\bibfield  {title} {\bibinfo {title} {The natural order of events: how
  speakers of different languages represent events nonverbally},\ }\href@noop
  {} {\bibfield  {journal} {\bibinfo  {journal} {Proceedings of the National
  Academy of Sciences}\ }\textbf {\bibinfo {volume} {105}},\ \bibinfo {pages}
  {9163} (\bibinfo {year} {2008})}\BibitemShut {NoStop}%
\bibitem [{\citenamefont {Langus}\ and\ \citenamefont
  {Nespor}(2010)}]{Langus2010a}%
  \BibitemOpen
  \bibfield  {author} {\bibinfo {author} {\bibfnamefont {A.}~\bibnamefont
  {Langus}}\ and\ \bibinfo {author} {\bibfnamefont {M.}~\bibnamefont
  {Nespor}},\ }\bibfield  {title} {\bibinfo {title} {Cognitive systems
  struggling for word order},\ }\href@noop {} {\bibfield  {journal} {\bibinfo
  {journal} {Cognitive Psychology}\ }\textbf {\bibinfo {volume} {60}},\
  \bibinfo {pages} {291} (\bibinfo {year} {2010})}\BibitemShut {NoStop}%
\bibitem [{\citenamefont {Dryer}(2013)}]{wals-81}%
  \BibitemOpen
  \bibfield  {author} {\bibinfo {author} {\bibfnamefont {M.~S.}\ \bibnamefont
  {Dryer}},\ }\bibfield  {title} {\bibinfo {title} {Order of subject, object
  and verb},\ }in\ \href {http://wals.info/chapter/81} {\emph {\bibinfo
  {booktitle} {The World Atlas of Language Structures Online}}},\ \bibinfo
  {editor} {edited by\ \bibinfo {editor} {\bibfnamefont {M.~S.}\ \bibnamefont
  {Dryer}}\ and\ \bibinfo {editor} {\bibfnamefont {M.}~\bibnamefont
  {Haspelmath}}}\ (\bibinfo  {publisher} {Max Planck Institute for Evolutionary
  Anthropology},\ \bibinfo {address} {Leipzig},\ \bibinfo {year}
  {2013})\BibitemShut {NoStop}%
\bibitem [{\citenamefont {D\'iaz}\ \emph {et~al.}(2002)\citenamefont {D\'iaz},
  \citenamefont {Petit},\ and\ \citenamefont {Serna}}]{Diaz2002}%
  \BibitemOpen
  \bibfield  {author} {\bibinfo {author} {\bibfnamefont {J.}~\bibnamefont
  {D\'iaz}}, \bibinfo {author} {\bibfnamefont {J.}~\bibnamefont {Petit}},\ and\
  \bibinfo {author} {\bibfnamefont {M.}~\bibnamefont {Serna}},\ }\bibfield
  {title} {\bibinfo {title} {A survey of graph layout problems},\ }\href@noop
  {} {\bibfield  {journal} {\bibinfo  {journal} {ACM Computing Surveys}\
  }\textbf {\bibinfo {volume} {34}},\ \bibinfo {pages} {313} (\bibinfo {year}
  {2002})}\BibitemShut {NoStop}%
\bibitem [{\citenamefont {Petit}(2011)}]{Petit2011a}%
  \BibitemOpen
  \bibfield  {author} {\bibinfo {author} {\bibfnamefont {J.}~\bibnamefont
  {Petit}},\ }\bibfield  {title} {\bibinfo {title} {Addenda to the survey of
  layout problems},\ }\href@noop {} {\bibfield  {journal} {\bibinfo  {journal}
  {Bulletin of the European Association for Theoretical Computer Science}\
  }\textbf {\bibinfo {volume} {105}},\ \bibinfo {pages} {177} (\bibinfo {year}
  {2011})}\BibitemShut {NoStop}%
\bibitem [{\citenamefont {Gómez-Rodríguez}\ and\ \citenamefont
  {Alemany-Puig}(2026)}]{Gomez2026a_Encyclopedia}%
  \BibitemOpen
  \bibfield  {author} {\bibinfo {author} {\bibfnamefont {C.}~\bibnamefont
  {Gómez-Rodríguez}}\ and\ \bibinfo {author} {\bibfnamefont {L.}~\bibnamefont
  {Alemany-Puig}},\ }\bibfield  {title} {\bibinfo {title} {Formal constraints
  on dependency syntax},\ }in\ \href
  {https://doi.org/https://doi.org/10.1016/B978-0-323-95504-1.01024-3} {\emph
  {\bibinfo {booktitle} {International Encyclopedia of Language and Linguistics
  (Third Edition)}}},\ \bibinfo {editor} {edited by\ \bibinfo {editor}
  {\bibfnamefont {H.}~\bibnamefont {Nesi}}\ and\ \bibinfo {editor}
  {\bibfnamefont {P.}~\bibnamefont {Milin}}}\ (\bibinfo  {publisher}
  {Elsevier},\ \bibinfo {address} {London},\ \bibinfo {year} {2026})\ \bibinfo
  {edition} {third edition}\ ed.,\ pp.\ \bibinfo {pages} {221--232}\BibitemShut
  {NoStop}%
\bibitem [{Note1()}]{Note1}%
  \BibitemOpen
  \bibinfo {note} {Notice that $S = \protect \frac {1}{N}$ (\protect \cref
  {eq:Simpson_index}) and then (\protect \cref
  {eq:expected_average_swap_distance_random_permutation}) \begin {equation*}
  \left < d \right >_r = (1 - S)\protect \frac {N}{N-1}\left < d \right >_{max}
  = \left < d \right >_{max}. \end {equation*}}\BibitemShut {NoStop}%
\bibitem [{\citenamefont {Loiola}\ \emph {et~al.}(2007)\citenamefont {Loiola},
  \citenamefont {de~Abreu}, \citenamefont {Boaventura-Netto}, \citenamefont
  {Hahn},\ and\ \citenamefont {Querido}}]{Loiola2007a}%
  \BibitemOpen
  \bibfield  {author} {\bibinfo {author} {\bibfnamefont {E.~M.}\ \bibnamefont
  {Loiola}}, \bibinfo {author} {\bibfnamefont {N.~M.~M.}\ \bibnamefont
  {de~Abreu}}, \bibinfo {author} {\bibfnamefont {P.~O.}\ \bibnamefont
  {Boaventura-Netto}}, \bibinfo {author} {\bibfnamefont {P.}~\bibnamefont
  {Hahn}},\ and\ \bibinfo {author} {\bibfnamefont {T.}~\bibnamefont
  {Querido}},\ }\bibfield  {title} {\bibinfo {title} {A survey for the
  quadratic assignment problem},\ }\href
  {https://doi.org/10.1016/j.ejor.2005.09.032} {\bibfield  {journal} {\bibinfo
  {journal} {European Journal of Operational Research}\ }\textbf {\bibinfo
  {volume} {176}},\ \bibinfo {pages} {657–690} (\bibinfo {year}
  {2007})}\BibitemShut {NoStop}%
\bibitem [{\citenamefont {Wang}\ \emph {et~al.}(2021)\citenamefont {Wang},
  \citenamefont {Yang}, \citenamefont {Punnen}, \citenamefont {Tian},
  \citenamefont {Yin},\ and\ \citenamefont {L\"{u}}}]{Wang2021a}%
  \BibitemOpen
  \bibfield  {author} {\bibinfo {author} {\bibfnamefont {Y.}~\bibnamefont
  {Wang}}, \bibinfo {author} {\bibfnamefont {W.}~\bibnamefont {Yang}}, \bibinfo
  {author} {\bibfnamefont {A.~P.}\ \bibnamefont {Punnen}}, \bibinfo {author}
  {\bibfnamefont {J.}~\bibnamefont {Tian}}, \bibinfo {author} {\bibfnamefont
  {A.}~\bibnamefont {Yin}},\ and\ \bibinfo {author} {\bibfnamefont
  {Z.}~\bibnamefont {L\"{u}}},\ }\bibfield  {title} {\bibinfo {title} {The
  rank-one quadratic assignment problem},\ }\href
  {https://doi.org/10.1287/ijoc.2020.1003} {\bibfield  {journal} {\bibinfo
  {journal} {INFORMS Journal on Computing}\ }\textbf {\bibinfo {volume} {33}},\
  \bibinfo {pages} {979–996} (\bibinfo {year} {2021})}\BibitemShut {NoStop}%
\bibitem [{\citenamefont {Clough}\ \emph {et~al.}(2014)\citenamefont {Clough},
  \citenamefont {Gollings}, \citenamefont {Loach},\ and\ \citenamefont
  {Evans}}]{Clough2014a}%
  \BibitemOpen
  \bibfield  {author} {\bibinfo {author} {\bibfnamefont {J.~R.}\ \bibnamefont
  {Clough}}, \bibinfo {author} {\bibfnamefont {J.}~\bibnamefont {Gollings}},
  \bibinfo {author} {\bibfnamefont {T.~V.}\ \bibnamefont {Loach}},\ and\
  \bibinfo {author} {\bibfnamefont {T.~S.}\ \bibnamefont {Evans}},\ }\bibfield
  {title} {\bibinfo {title} {{Transitive reduction of citation networks}},\
  }\href {https://doi.org/10.1093/comnet/cnu039} {\bibfield  {journal}
  {\bibinfo  {journal} {Journal of Complex Networks}\ }\textbf {\bibinfo
  {volume} {3}},\ \bibinfo {pages} {189} (\bibinfo {year} {2014})}\BibitemShut
  {NoStop}%
\bibitem [{\citenamefont {Warshall}(1962)}]{Warshall1962a}%
  \BibitemOpen
  \bibfield  {author} {\bibinfo {author} {\bibfnamefont {S.}~\bibnamefont
  {Warshall}},\ }\bibfield  {title} {\bibinfo {title} {A theorem on boolean
  matrices},\ }\href {https://doi.org/10.1145/321105.321107} {\bibfield
  {journal} {\bibinfo  {journal} {Journal of the ACM}\ }\textbf {\bibinfo
  {volume} {9}},\ \bibinfo {pages} {11} (\bibinfo {year} {1962})}\BibitemShut
  {NoStop}%
\bibitem [{\citenamefont {Aho}\ \emph {et~al.}(1972)\citenamefont {Aho},
  \citenamefont {Garey},\ and\ \citenamefont {Ullman}}]{Aho1972a}%
  \BibitemOpen
  \bibfield  {author} {\bibinfo {author} {\bibfnamefont {A.~V.}\ \bibnamefont
  {Aho}}, \bibinfo {author} {\bibfnamefont {M.~R.}\ \bibnamefont {Garey}},\
  and\ \bibinfo {author} {\bibfnamefont {J.~D.}\ \bibnamefont {Ullman}},\
  }\bibfield  {title} {\bibinfo {title} {The transitive reduction of a directed
  graph},\ }\href {https://doi.org/10.1137/0201008} {\bibfield  {journal}
  {\bibinfo  {journal} {SIAM Journal on Computing}\ }\textbf {\bibinfo {volume}
  {1}},\ \bibinfo {pages} {131} (\bibinfo {year} {1972})}\BibitemShut {NoStop}%
\bibitem [{\citenamefont {Olivella}\ and\ \citenamefont
  {Shiraito}(2025)}]{Olivella2025a}%
  \BibitemOpen
  \bibfield  {author} {\bibinfo {author} {\bibfnamefont {S.}~\bibnamefont
  {Olivella}}\ and\ \bibinfo {author} {\bibfnamefont {Y.}~\bibnamefont
  {Shiraito}},\ }\href {https://doi.org/10.32614/CRAN.package.poisbinom}
  {\bibinfo {title} {A faster implementation of the {Poisson-binomial}
  distribution}} (\bibinfo {year} {2025}),\ \bibinfo {note} {version
  1.0.2}\BibitemShut {NoStop}%
\bibitem [{\citenamefont {Hong}(2013)}]{Hong2013a}%
  \BibitemOpen
  \bibfield  {author} {\bibinfo {author} {\bibfnamefont {Y.}~\bibnamefont
  {Hong}},\ }\bibfield  {title} {\bibinfo {title} {On computing the
  distribution function for the {Poisson} binomial distribution},\ }\href@noop
  {} {\bibfield  {journal} {\bibinfo  {journal} {Computational Statistics and
  Data Analysis}\ }\textbf {\bibinfo {volume} {59}},\ \bibinfo {pages} {41}
  (\bibinfo {year} {2013})}\BibitemShut {NoStop}%
\bibitem [{\citenamefont {Cysouw}(2010)}]{Cysouw2010a}%
  \BibitemOpen
  \bibfield  {author} {\bibinfo {author} {\bibfnamefont {M.}~\bibnamefont
  {Cysouw}},\ }\bibfield  {title} {\bibinfo {title} {Dealing with diversity:
  {Towards} an explanation of {NP-internal} word order frequencies},\ }\href
  {https://doi.org/10.1515/lity.2010.010} {\bibfield  {journal} {\bibinfo
  {journal} {Linguistic Typology}\ }\textbf {\bibinfo {volume} {14}},\ \bibinfo
  {pages} {253} (\bibinfo {year} {2010})}\BibitemShut {NoStop}%
\bibitem [{\citenamefont
  {{Ferrer-i-Cancho}}(2026{\natexlab{a}})}]{Ferrer2024b}%
  \BibitemOpen
  \bibfield  {author} {\bibinfo {author} {\bibfnamefont {R.}~\bibnamefont
  {{Ferrer-i-Cancho}}},\ }\bibfield  {title} {\bibinfo {title} {The exponential
  distribution of the order of demonstrative, numeral, adjective and noun},\
  }\bibfield  {journal} {\bibinfo  {journal} {Journal of Quantitative
  Linguistics}\ }\href {https://doi.org/10.1080/09296174.2026.2617705}
  {10.1080/09296174.2026.2617705} (\bibinfo {year}
  {2026}{\natexlab{a}})\BibitemShut {NoStop}%
\bibitem [{\citenamefont {Koopmans}\ and\ \citenamefont
  {Beckmann}(1957)}]{Koopmans1957a}%
  \BibitemOpen
  \bibfield  {author} {\bibinfo {author} {\bibfnamefont {T.~C.}\ \bibnamefont
  {Koopmans}}\ and\ \bibinfo {author} {\bibfnamefont {M.~J.}\ \bibnamefont
  {Beckmann}},\ }\bibfield  {title} {\bibinfo {title} {Assignment problems and
  the location of economic activities},\ }\href@noop {} {\bibfield  {journal}
  {\bibinfo  {journal} {Econometrica}\ }\textbf {\bibinfo {volume} {25}},\
  \bibinfo {pages} {53} (\bibinfo {year} {1957})}\BibitemShut {NoStop}%
\bibitem [{\citenamefont {{Ferrer-i-Cancho}}\ \emph
  {et~al.}(2022{\natexlab{b}})\citenamefont {{Ferrer-i-Cancho}}, \citenamefont
  {Bentz},\ and\ \citenamefont {Seguin}}]{Ferrer2019c}%
  \BibitemOpen
  \bibfield  {author} {\bibinfo {author} {\bibfnamefont {R.}~\bibnamefont
  {{Ferrer-i-Cancho}}}, \bibinfo {author} {\bibfnamefont {C.}~\bibnamefont
  {Bentz}},\ and\ \bibinfo {author} {\bibfnamefont {C.}~\bibnamefont
  {Seguin}},\ }\bibfield  {title} {\bibinfo {title} {Optimal coding and the
  origins of {Zipfian} laws},\ }\href
  {https://doi.org/10.1080/09296174.2020.1778387} {\bibfield  {journal}
  {\bibinfo  {journal} {Journal of Quantitative Linguistics}\ }\textbf
  {\bibinfo {volume} {29}},\ \bibinfo {pages} {165} (\bibinfo {year}
  {2022}{\natexlab{b}})}\BibitemShut {NoStop}%
\bibitem [{\citenamefont {{Ferrer-i-Cancho}}\ \emph {et~al.}(2013)\citenamefont
  {{Ferrer-i-Cancho}}, \citenamefont {Hern\'{a}ndez-Fern\'{a}ndez},
  \citenamefont {Lusseau}, \citenamefont {Agoramoorthy}, \citenamefont {Hsu},\
  and\ \citenamefont {Semple}}]{Ferrer2012d}%
  \BibitemOpen
  \bibfield  {author} {\bibinfo {author} {\bibfnamefont {R.}~\bibnamefont
  {{Ferrer-i-Cancho}}}, \bibinfo {author} {\bibfnamefont {A.}~\bibnamefont
  {Hern\'{a}ndez-Fern\'{a}ndez}}, \bibinfo {author} {\bibfnamefont
  {D.}~\bibnamefont {Lusseau}}, \bibinfo {author} {\bibfnamefont
  {G.}~\bibnamefont {Agoramoorthy}}, \bibinfo {author} {\bibfnamefont {M.~J.}\
  \bibnamefont {Hsu}},\ and\ \bibinfo {author} {\bibfnamefont {S.}~\bibnamefont
  {Semple}},\ }\bibfield  {title} {\bibinfo {title} {Compression as a universal
  principle of animal behavior},\ }\href@noop {} {\bibfield  {journal}
  {\bibinfo  {journal} {Cognitive Science}\ }\textbf {\bibinfo {volume} {37}},\
  \bibinfo {pages} {1565} (\bibinfo {year} {2013})}\BibitemShut {NoStop}%
\bibitem [{\citenamefont {Liu}\ \emph {et~al.}(2017)\citenamefont {Liu},
  \citenamefont {Xu},\ and\ \citenamefont {Liang}}]{Liu2017a}%
  \BibitemOpen
  \bibfield  {author} {\bibinfo {author} {\bibfnamefont {H.}~\bibnamefont
  {Liu}}, \bibinfo {author} {\bibfnamefont {C.}~\bibnamefont {Xu}},\ and\
  \bibinfo {author} {\bibfnamefont {J.}~\bibnamefont {Liang}},\ }\bibfield
  {title} {\bibinfo {title} {Dependency distance: a new perspective on
  syntactic patterns in natural languages},\ }\href
  {https://doi.org/10.1016/j.plrev.2017.03.002} {\bibfield  {journal} {\bibinfo
   {journal} {Physics of Life Reviews}\ }\textbf {\bibinfo {volume} {21}},\
  \bibinfo {pages} {171} (\bibinfo {year} {2017})}\BibitemShut {NoStop}%
\bibitem [{\citenamefont {Temperley}\ and\ \citenamefont
  {Gildea}(2018)}]{Temperley2018a}%
  \BibitemOpen
  \bibfield  {author} {\bibinfo {author} {\bibfnamefont {D.}~\bibnamefont
  {Temperley}}\ and\ \bibinfo {author} {\bibfnamefont {D.}~\bibnamefont
  {Gildea}},\ }\bibfield  {title} {\bibinfo {title} {Minimizing syntactic
  dependency lengths: {Typological/Cognitive} universal?},\ }\href
  {https://doi.org/10.1146/annurev-linguistics-011817-045617} {\bibfield
  {journal} {\bibinfo  {journal} {Annual Review of Linguistics}\ }\textbf
  {\bibinfo {volume} {4}},\ \bibinfo {pages} {67} (\bibinfo {year}
  {2018})}\BibitemShut {NoStop}%
\bibitem [{\citenamefont {Semple}\ \emph {et~al.}(2022)\citenamefont {Semple},
  \citenamefont {{Ferrer-i-Cancho}},\ and\ \citenamefont
  {Gustison}}]{Semple2021a}%
  \BibitemOpen
  \bibfield  {author} {\bibinfo {author} {\bibfnamefont {S.}~\bibnamefont
  {Semple}}, \bibinfo {author} {\bibfnamefont {R.}~\bibnamefont
  {{Ferrer-i-Cancho}}},\ and\ \bibinfo {author} {\bibfnamefont
  {M.}~\bibnamefont {Gustison}},\ }\bibfield  {title} {\bibinfo {title}
  {Linguistic laws in biology},\ }\href
  {https://doi.org/10.1016/j.tree.2021.08.012} {\bibfield  {journal} {\bibinfo
  {journal} {Trends in Ecology and Evolution}\ }\textbf {\bibinfo {volume}
  {37}},\ \bibinfo {pages} {53} (\bibinfo {year} {2022})}\BibitemShut {NoStop}%
\bibitem [{\citenamefont {{Ferrer-i-Cancho}}(2014)}]{Ferrer2014c}%
  \BibitemOpen
  \bibfield  {author} {\bibinfo {author} {\bibfnamefont {R.}~\bibnamefont
  {{Ferrer-i-Cancho}}},\ }\bibfield  {title} {\bibinfo {title} {A stronger null
  hypothesis for crossing dependencies},\ }\href@noop {} {\bibfield  {journal}
  {\bibinfo  {journal} {Europhysics Letters}\ }\textbf {\bibinfo {volume}
  {108}},\ \bibinfo {pages} {58003} (\bibinfo {year} {2014})}\BibitemShut
  {NoStop}%
\bibitem [{\citenamefont {G\'omez-Rodr\'iguez}\ and\ \citenamefont
  {{Ferrer-i-Cancho}}(2017)}]{Gomez2016a}%
  \BibitemOpen
  \bibfield  {author} {\bibinfo {author} {\bibfnamefont {C.}~\bibnamefont
  {G\'omez-Rodr\'iguez}}\ and\ \bibinfo {author} {\bibfnamefont
  {R.}~\bibnamefont {{Ferrer-i-Cancho}}},\ }\bibfield  {title} {\bibinfo
  {title} {Scarcity of crossing dependencies: a direct outcome of a specific
  constraint?},\ }\href {https://doi.org/10.1103/PhysRevE.96.062304} {\bibfield
   {journal} {\bibinfo  {journal} {Physical Review E}\ }\textbf {\bibinfo
  {volume} {96}},\ \bibinfo {pages} {062304} (\bibinfo {year}
  {2017})}\BibitemShut {NoStop}%
\bibitem [{\citenamefont {G\'omez-Rodr\'iguez}\ \emph
  {et~al.}(2022)\citenamefont {G\'omez-Rodr\'iguez}, \citenamefont
  {Christiansen},\ and\ \citenamefont {{Ferrer-i-Cancho}}}]{Gomez2019a}%
  \BibitemOpen
  \bibfield  {author} {\bibinfo {author} {\bibfnamefont {C.}~\bibnamefont
  {G\'omez-Rodr\'iguez}}, \bibinfo {author} {\bibfnamefont {M.}~\bibnamefont
  {Christiansen}},\ and\ \bibinfo {author} {\bibfnamefont {R.}~\bibnamefont
  {{Ferrer-i-Cancho}}},\ }\bibfield  {title} {\bibinfo {title} {Memory
  limitations are hidden in grammar},\ }\href
  {https://doi.org/10.53482/2022_52_397} {\bibfield  {journal} {\bibinfo
  {journal} {Glottometrics}\ }\textbf {\bibinfo {volume} {52}},\ \bibinfo
  {pages} {39 } (\bibinfo {year} {2022})}\BibitemShut {NoStop}%
\bibitem [{\citenamefont {Yadav}\ \emph {et~al.}(2022)\citenamefont {Yadav},
  \citenamefont {Husain},\ and\ \citenamefont {Futrell}}]{Yadav2022a}%
  \BibitemOpen
  \bibfield  {author} {\bibinfo {author} {\bibfnamefont {H.}~\bibnamefont
  {Yadav}}, \bibinfo {author} {\bibfnamefont {S.}~\bibnamefont {Husain}},\ and\
  \bibinfo {author} {\bibfnamefont {R.}~\bibnamefont {Futrell}},\ }\bibfield
  {title} {\bibinfo {title} {Assessing corpus evidence for formal and
  psycholinguistic constraints on nonprojectivity},\ }\href
  {https://doi.org/10.1162/coli_a_00437} {\bibfield  {journal} {\bibinfo
  {journal} {Computational Linguistics}\ }\textbf {\bibinfo {volume} {48}},\
  \bibinfo {pages} {375–401} (\bibinfo {year} {2022})}\BibitemShut {NoStop}%
\bibitem [{\citenamefont {Blasi}\ \emph {et~al.}(2022)\citenamefont {Blasi},
  \citenamefont {Henrich}, \citenamefont {Adamou}, \citenamefont {Kemmerer},\
  and\ \citenamefont {Majid}}]{Blasi2022a}%
  \BibitemOpen
  \bibfield  {author} {\bibinfo {author} {\bibfnamefont {D.~E.}\ \bibnamefont
  {Blasi}}, \bibinfo {author} {\bibfnamefont {J.}~\bibnamefont {Henrich}},
  \bibinfo {author} {\bibfnamefont {E.}~\bibnamefont {Adamou}}, \bibinfo
  {author} {\bibfnamefont {D.}~\bibnamefont {Kemmerer}},\ and\ \bibinfo
  {author} {\bibfnamefont {A.}~\bibnamefont {Majid}},\ }\bibfield  {title}
  {\bibinfo {title} {Over-reliance on {English} hinders cognitive science},\
  }\href {https://doi.org/10.1016/j.tics.2022.09.015} {\bibfield  {journal}
  {\bibinfo  {journal} {Trends in Cognitive Sciences}\ }\textbf {\bibinfo
  {volume} {26}},\ \bibinfo {pages} {1153} (\bibinfo {year}
  {2022})}\BibitemShut {NoStop}%
\bibitem [{\citenamefont {Cysouw}(2008)}]{Cysouw2008a}%
  \BibitemOpen
  \bibfield  {author} {\bibinfo {author} {\bibfnamefont {M.}~\bibnamefont
  {Cysouw}},\ }\bibfield  {title} {\bibinfo {title} {Linear order as a
  predictor of word order regularities},\ }\href@noop {} {\bibfield  {journal}
  {\bibinfo  {journal} {Advances in Complex Systems}\ }\textbf {\bibinfo
  {volume} {11}},\ \bibinfo {pages} {415} (\bibinfo {year} {2008})}\BibitemShut
  {NoStop}%
\bibitem [{\citenamefont {Hammarström}(2016)}]{Hammarstroem2016a}%
  \BibitemOpen
  \bibfield  {author} {\bibinfo {author} {\bibfnamefont {H.}~\bibnamefont
  {Hammarström}},\ }\bibfield  {title} {\bibinfo {title} {{Linguistic
  diversity and language evolution}},\ }\href
  {https://doi.org/10.1093/jole/lzw002} {\bibfield  {journal} {\bibinfo
  {journal} {Journal of Language Evolution}\ }\textbf {\bibinfo {volume} {1}},\
  \bibinfo {pages} {19} (\bibinfo {year} {2016})}\BibitemShut {NoStop}%
\bibitem [{\citenamefont {Corballis}(2012)}]{Corballis2012a}%
  \BibitemOpen
  \bibfield  {author} {\bibinfo {author} {\bibfnamefont {M.~C.}\ \bibnamefont
  {Corballis}},\ }\bibfield  {title} {\bibinfo {title} {How language evolved
  from manual gestures},\ }\href {https://doi.org/10.1075/gest.12.2.04cor}
  {\bibfield  {journal} {\bibinfo  {journal} {Gesture}\ }\textbf {\bibinfo
  {volume} {12}},\ \bibinfo {pages} {200} (\bibinfo {year} {2012})}\BibitemShut
  {NoStop}%
\bibitem [{\citenamefont {Hall}\ \emph {et~al.}(2014)\citenamefont {Hall},
  \citenamefont {Ferreira},\ and\ \citenamefont {Mayberry}}]{Hall2014a}%
  \BibitemOpen
  \bibfield  {author} {\bibinfo {author} {\bibfnamefont {M.~L.}\ \bibnamefont
  {Hall}}, \bibinfo {author} {\bibfnamefont {V.~S.}\ \bibnamefont {Ferreira}},\
  and\ \bibinfo {author} {\bibfnamefont {R.~I.}\ \bibnamefont {Mayberry}},\
  }\bibfield  {title} {\bibinfo {title} {Investigating constituent order change
  with elicited pantomime: a functional account of {SVO} emergence},\ }\href
  {https://doi.org/10.1111/cogs.12105} {\bibfield  {journal} {\bibinfo
  {journal} {Cognitive Science}\ }\textbf {\bibinfo {volume} {38}},\ \bibinfo
  {pages} {943} (\bibinfo {year} {2014})}\BibitemShut {NoStop}%
\bibitem [{\citenamefont {Schouwstra}\ \emph {et~al.}(2022)\citenamefont
  {Schouwstra}, \citenamefont {Naegeli},\ and\ \citenamefont
  {Kirby}}]{Schouwstra2022a}%
  \BibitemOpen
  \bibfield  {author} {\bibinfo {author} {\bibfnamefont {M.}~\bibnamefont
  {Schouwstra}}, \bibinfo {author} {\bibfnamefont {D.}~\bibnamefont
  {Naegeli}},\ and\ \bibinfo {author} {\bibfnamefont {S.}~\bibnamefont
  {Kirby}},\ }\bibfield  {title} {\bibinfo {title} {Investigating word order
  emergence: Constraints from cognition and communication},\ }\bibfield
  {journal} {\bibinfo  {journal} {Frontiers in Psychology}\ }\textbf {\bibinfo
  {volume} {13}},\ \href {https://doi.org/10.3389/fpsyg.2022.805144}
  {10.3389/fpsyg.2022.805144} (\bibinfo {year} {2022})\BibitemShut {NoStop}%
\bibitem [{\citenamefont {Christensen}\ \emph {et~al.}(2016)\citenamefont
  {Christensen}, \citenamefont {Fusaroli},\ and\ \citenamefont
  {Tylén}}]{Christensen2016a}%
  \BibitemOpen
  \bibfield  {author} {\bibinfo {author} {\bibfnamefont {P.}~\bibnamefont
  {Christensen}}, \bibinfo {author} {\bibfnamefont {R.}~\bibnamefont
  {Fusaroli}},\ and\ \bibinfo {author} {\bibfnamefont {K.}~\bibnamefont
  {Tylén}},\ }\bibfield  {title} {\bibinfo {title} {Environmental constraints
  shaping constituent order in emerging communication systems: Structural
  iconicity, interactive alignment and conventionalization},\ }\href
  {https://doi.org/10.1016/j.cognition.2015.09.004} {\bibfield  {journal}
  {\bibinfo  {journal} {Cognition}\ }\textbf {\bibinfo {volume} {146}},\
  \bibinfo {pages} {67–80} (\bibinfo {year} {2016})}\BibitemShut {NoStop}%
\bibitem [{\citenamefont {Schouwstra}\ and\ \citenamefont {{de
  Swart}}(2014)}]{Schouwstra2014a}%
  \BibitemOpen
  \bibfield  {author} {\bibinfo {author} {\bibfnamefont {M.}~\bibnamefont
  {Schouwstra}}\ and\ \bibinfo {author} {\bibfnamefont {H.}~\bibnamefont {{de
  Swart}}},\ }\bibfield  {title} {\bibinfo {title} {The semantic origins of
  word order},\ }\href
  {https://doi.org/https://doi.org/10.1016/j.cognition.2014.03.004} {\bibfield
  {journal} {\bibinfo  {journal} {Cognition}\ }\textbf {\bibinfo {volume}
  {131}},\ \bibinfo {pages} {431} (\bibinfo {year} {2014})}\BibitemShut
  {NoStop}%
\bibitem [{\citenamefont {Levshina}(2019)}]{Levshina2019a}%
  \BibitemOpen
  \bibfield  {author} {\bibinfo {author} {\bibfnamefont {N.}~\bibnamefont
  {Levshina}},\ }\bibfield  {title} {\bibinfo {title} {Token-based typology and
  word order entropy: A study based on universal dependencies},\ }\href
  {https://doi.org/doi:10.1515/lingty-2019-0025} {\bibfield  {journal}
  {\bibinfo  {journal} {Linguistic Typology}\ }\textbf {\bibinfo {volume}
  {23}},\ \bibinfo {pages} {533} (\bibinfo {year} {2019})}\BibitemShut
  {NoStop}%
\bibitem [{\citenamefont {Levshina}\ \emph {et~al.}(2023)\citenamefont
  {Levshina}, \citenamefont {Namboodiripad}, \citenamefont
  {Allassonnière-Tang}, \citenamefont {Kramer}, \citenamefont {Talamo},
  \citenamefont {Verkerk}, \citenamefont {Wilmoth}, \citenamefont {Rodriguez},
  \citenamefont {Gupton}, \citenamefont {Kidd}, \citenamefont {Liu},
  \citenamefont {Naccarato}, \citenamefont {Nordlinger}, \citenamefont
  {Panova},\ and\ \citenamefont {Stoynova}}]{Levshina2023a}%
  \BibitemOpen
  \bibfield  {author} {\bibinfo {author} {\bibfnamefont {N.}~\bibnamefont
  {Levshina}}, \bibinfo {author} {\bibfnamefont {S.}~\bibnamefont
  {Namboodiripad}}, \bibinfo {author} {\bibfnamefont {M.}~\bibnamefont
  {Allassonnière-Tang}}, \bibinfo {author} {\bibfnamefont {M.}~\bibnamefont
  {Kramer}}, \bibinfo {author} {\bibfnamefont {L.}~\bibnamefont {Talamo}},
  \bibinfo {author} {\bibfnamefont {A.}~\bibnamefont {Verkerk}}, \bibinfo
  {author} {\bibfnamefont {S.}~\bibnamefont {Wilmoth}}, \bibinfo {author}
  {\bibfnamefont {G.~G.}\ \bibnamefont {Rodriguez}}, \bibinfo {author}
  {\bibfnamefont {T.~M.}\ \bibnamefont {Gupton}}, \bibinfo {author}
  {\bibfnamefont {E.}~\bibnamefont {Kidd}}, \bibinfo {author} {\bibfnamefont
  {Z.}~\bibnamefont {Liu}}, \bibinfo {author} {\bibfnamefont {C.}~\bibnamefont
  {Naccarato}}, \bibinfo {author} {\bibfnamefont {R.}~\bibnamefont
  {Nordlinger}}, \bibinfo {author} {\bibfnamefont {A.}~\bibnamefont {Panova}},\
  and\ \bibinfo {author} {\bibfnamefont {N.}~\bibnamefont {Stoynova}},\
  }\bibfield  {title} {\bibinfo {title} {Why we need a gradient approach to
  word order},\ }\href {https://doi.org/10.1515/ling-2021-0098} {\bibfield
  {journal} {\bibinfo  {journal} {Linguistics}\ }\textbf {\bibinfo {volume}
  {61}},\ \bibinfo {pages} {825} (\bibinfo {year} {2023})}\BibitemShut
  {NoStop}%
\bibitem [{\citenamefont {Namboodiripad}(2017)}]{Namboodiripad2017a}%
  \BibitemOpen
  \bibfield  {author} {\bibinfo {author} {\bibfnamefont {S.}~\bibnamefont
  {Namboodiripad}},\ }\emph {\bibinfo {title} {An {Experimental} {Approach} to
  {Variation} and {Variability} in {Constituent} {Order}}},\ \href
  {https://escholarship.org/uc/item/2sv6z8bz} {\bibinfo {type} {{PhD}
  {Thesis}}},\ \bibinfo  {school} {UC San Diego} (\bibinfo {year}
  {2017})\BibitemShut {NoStop}%
\bibitem [{\citenamefont {Bandt}\ and\ \citenamefont
  {Pompe}(2002)}]{Bandt2002a}%
  \BibitemOpen
  \bibfield  {author} {\bibinfo {author} {\bibfnamefont {C.}~\bibnamefont
  {Bandt}}\ and\ \bibinfo {author} {\bibfnamefont {B.}~\bibnamefont {Pompe}},\
  }\bibfield  {title} {\bibinfo {title} {Permutation entropy: A natural
  complexity measure for time series},\ }\href
  {https://doi.org/10.1103/PhysRevLett.88.174102} {\bibfield  {journal}
  {\bibinfo  {journal} {Phys. Rev. Lett.}\ }\textbf {\bibinfo {volume} {88}},\
  \bibinfo {pages} {174102} (\bibinfo {year} {2002})}\BibitemShut {NoStop}%
\bibitem [{\citenamefont {Leyva}\ \emph {et~al.}(2022)\citenamefont {Leyva},
  \citenamefont {Martínez}, \citenamefont {Masoller}, \citenamefont {Rosso},\
  and\ \citenamefont {Zanin}}]{Leyva2022a}%
  \BibitemOpen
  \bibfield  {author} {\bibinfo {author} {\bibfnamefont {I.}~\bibnamefont
  {Leyva}}, \bibinfo {author} {\bibfnamefont {J.~H.}\ \bibnamefont
  {Martínez}}, \bibinfo {author} {\bibfnamefont {C.}~\bibnamefont {Masoller}},
  \bibinfo {author} {\bibfnamefont {O.~A.}\ \bibnamefont {Rosso}},\ and\
  \bibinfo {author} {\bibfnamefont {M.}~\bibnamefont {Zanin}},\ }\bibfield
  {title} {\bibinfo {title} {20 years of ordinal patterns: {Perspectives} and
  challenges},\ }\href {https://doi.org/10.1209/0295-5075/ac6a72} {\bibfield
  {journal} {\bibinfo  {journal} {Europhysics Letters}\ }\textbf {\bibinfo
  {volume} {138}},\ \bibinfo {pages} {31001} (\bibinfo {year}
  {2022})}\BibitemShut {NoStop}%
\bibitem [{\citenamefont
  {{Ferrer-i-Cancho}}(2026{\natexlab{b}})}]{Ferrer2025c_open_data}%
  \BibitemOpen
  \bibfield  {author} {\bibinfo {author} {\bibfnamefont {R.}~\bibnamefont
  {{Ferrer-i-Cancho}}},\ }\bibfield  {title} {\bibinfo {title} {How to measure
  the optimality of word or gesture order with respect to the principle of swap
  distance minimization},\ }\href
  {https://osf.io/zxvum/overview?view_only=d0cfb658ba4e487189659d273ac2ad03}
  {\bibfield  {journal} {\bibinfo  {journal} {Open Science Foundation}\ }
  (\bibinfo {year} {2026}{\natexlab{b}})}\BibitemShut {NoStop}%
\bibitem [{\citenamefont {Àlvarez}\ \emph {et~al.}(2007)\citenamefont
  {Àlvarez}, \citenamefont {Cases}, \citenamefont {Díaz}, \citenamefont
  {Petit},\ and\ \citenamefont {Serna}}]{Alvarez2007a}%
  \BibitemOpen
  \bibfield  {author} {\bibinfo {author} {\bibfnamefont {C.}~\bibnamefont
  {Àlvarez}}, \bibinfo {author} {\bibfnamefont {R.}~\bibnamefont {Cases}},
  \bibinfo {author} {\bibfnamefont {J.}~\bibnamefont {Díaz}}, \bibinfo
  {author} {\bibfnamefont {J.}~\bibnamefont {Petit}},\ and\ \bibinfo {author}
  {\bibfnamefont {M.}~\bibnamefont {Serna}},\ }\bibfield  {title} {\bibinfo
  {title} {Communication tree problems},\ }\href
  {https://doi.org/10.1016/j.tcs.2007.04.038} {\bibfield  {journal} {\bibinfo
  {journal} {Theoretical Computer Science}\ }\textbf {\bibinfo {volume}
  {381}},\ \bibinfo {pages} {197–217} (\bibinfo {year} {2007})}\BibitemShut
  {NoStop}%
\bibitem [{\citenamefont {Garey}\ and\ \citenamefont
  {Johnson}(1979)}]{Garey1979}%
  \BibitemOpen
  \bibfield  {author} {\bibinfo {author} {\bibfnamefont {M.~R.}\ \bibnamefont
  {Garey}}\ and\ \bibinfo {author} {\bibfnamefont {D.~S.}\ \bibnamefont
  {Johnson}},\ }\href@noop {} {\emph {\bibinfo {title} {Computers and
  intractability: a guide to the theory of NP-completeness}}}\ (\bibinfo
  {publisher} {W. M. Freeman},\ \bibinfo {address} {San Francisco},\ \bibinfo
  {year} {1979})\BibitemShut {NoStop}%
\bibitem [{\citenamefont {{Ferrer-i-Cancho}}(2019)}]{Ferrer2018a}%
  \BibitemOpen
  \bibfield  {author} {\bibinfo {author} {\bibfnamefont {R.}~\bibnamefont
  {{Ferrer-i-Cancho}}},\ }\bibfield  {title} {\bibinfo {title} {The sum of edge
  lengths in random linear arrangements},\ }\href@noop {} {\bibinfo  {journal}
  {Journal of Statistical Mechanics}\ ,\ \bibinfo {pages} {053401}}\BibitemShut
  {NoStop}%
\bibitem [{\citenamefont {{Alemany-Puig}}\ \emph {et~al.}(2022)\citenamefont
  {{Alemany-Puig}}, \citenamefont {Esteban},\ and\ \citenamefont
  {{Ferrer-i-Cancho}}}]{Alemany2021a}%
  \BibitemOpen
\bibfield  {journal} {  }\bibfield  {author} {\bibinfo {author} {\bibfnamefont
  {L.}~\bibnamefont {{Alemany-Puig}}}, \bibinfo {author} {\bibfnamefont
  {J.~L.}\ \bibnamefont {Esteban}},\ and\ \bibinfo {author} {\bibfnamefont
  {R.}~\bibnamefont {{Ferrer-i-Cancho}}},\ }\bibfield  {title} {\bibinfo
  {title} {Minimum projective linearizations of trees in linear time},\ }\href
  {https://doi.org/10.1016/j.ipl.2021.106204} {\bibfield  {journal} {\bibinfo
  {journal} {Information Processing Letters}\ }\textbf {\bibinfo {volume}
  {174}},\ \bibinfo {pages} {106204} (\bibinfo {year} {2022})}\BibitemShut
  {NoStop}%
\bibitem [{\citenamefont {{Alemany-Puig}}\ and\ \citenamefont
  {{Ferrer-i-Cancho}}(2022)}]{Alemany2021b}%
  \BibitemOpen
  \bibfield  {author} {\bibinfo {author} {\bibfnamefont {L.}~\bibnamefont
  {{Alemany-Puig}}}\ and\ \bibinfo {author} {\bibfnamefont {R.}~\bibnamefont
  {{Ferrer-i-Cancho}}},\ }\bibfield  {title} {\bibinfo {title} {Linear-time
  calculation of the expected sum of edge lengths in random projective
  linearizations of trees},\ }\href {https://doi.org/10.1162/coli_a_00442}
  {\bibfield  {journal} {\bibinfo  {journal} {Journal of Computational
  Linguistics}\ }\textbf {\bibinfo {volume} {48}},\ \bibinfo {pages}
  {491–516} (\bibinfo {year} {2022})}\BibitemShut {NoStop}%
\bibitem [{\citenamefont {{Alemany-Puig}}\ and\ \citenamefont
  {{Ferrer-i-Cancho}}(2024)}]{Alemany2022c}%
  \BibitemOpen
  \bibfield  {author} {\bibinfo {author} {\bibfnamefont {L.}~\bibnamefont
  {{Alemany-Puig}}}\ and\ \bibinfo {author} {\bibfnamefont {R.}~\bibnamefont
  {{Ferrer-i-Cancho}}},\ }\bibfield  {title} {\bibinfo {title} {The expected
  sum of edge lengths in planar linearizations of trees},\ }\href
  {https://doi.org/10.15398/jlm.v12i1.362} {\bibfield  {journal} {\bibinfo
  {journal} {Journal of Language Modelling}\ }\textbf {\bibinfo {volume}
  {12}},\ \bibinfo {pages} {1} (\bibinfo {year} {2024})}\BibitemShut {NoStop}%
\bibitem [{\citenamefont {Cover}\ and\ \citenamefont
  {Thomas}(2006)}]{Cover2006a}%
  \BibitemOpen
  \bibfield  {author} {\bibinfo {author} {\bibfnamefont {T.~M.}\ \bibnamefont
  {Cover}}\ and\ \bibinfo {author} {\bibfnamefont {J.~A.}\ \bibnamefont
  {Thomas}},\ }\href@noop {} {\emph {\bibinfo {title} {Elements of information
  theory}}}\ (\bibinfo  {publisher} {Wiley},\ \bibinfo {address} {New York},\
  \bibinfo {year} {2006})\ \bibinfo {note} {2nd edition}\BibitemShut {NoStop}%
\bibitem [{\citenamefont {Hardy}\ \emph {et~al.}(1934)\citenamefont {Hardy},
  \citenamefont {Littlewood},\ and\ \citenamefont {Pólya}}]{Hardy1934a}%
  \BibitemOpen
  \bibfield  {author} {\bibinfo {author} {\bibfnamefont {G.~H.}\ \bibnamefont
  {Hardy}}, \bibinfo {author} {\bibfnamefont {J.~E.}\ \bibnamefont
  {Littlewood}},\ and\ \bibinfo {author} {\bibfnamefont {G.}~\bibnamefont
  {Pólya}},\ }\href@noop {} {\emph {\bibinfo {title} {Inequalities}}}\
  (\bibinfo  {publisher} {Cambridge University Press},\ \bibinfo {address}
  {Cambridge},\ \bibinfo {year} {1934})\BibitemShut {NoStop}%
\bibitem [{\citenamefont {Petrini}\ \emph {et~al.}(2023)\citenamefont
  {Petrini}, \citenamefont {{Casas-i-Muñoz}}, \citenamefont
  {{Cluet-i-Martinell}}, \citenamefont {Wang}, \citenamefont {Bentz},\ and\
  \citenamefont {{Ferrer-i-Cancho}}}]{Petrini2022b}%
  \BibitemOpen
  \bibfield  {author} {\bibinfo {author} {\bibfnamefont {S.}~\bibnamefont
  {Petrini}}, \bibinfo {author} {\bibfnamefont {A.}~\bibnamefont
  {{Casas-i-Muñoz}}}, \bibinfo {author} {\bibfnamefont {J.}~\bibnamefont
  {{Cluet-i-Martinell}}}, \bibinfo {author} {\bibfnamefont {M.}~\bibnamefont
  {Wang}}, \bibinfo {author} {\bibfnamefont {C.}~\bibnamefont {Bentz}},\ and\
  \bibinfo {author} {\bibfnamefont {R.}~\bibnamefont {{Ferrer-i-Cancho}}},\
  }\bibfield  {title} {\bibinfo {title} {Direct and indirect evidence of
  compression of word lengths. {Zip's law} of abbreviation revisited},\ }\href
  {https://doi.org/10.53482/2023_54_407} {\bibfield  {journal} {\bibinfo
  {journal} {Glottometrics}\ }\textbf {\bibinfo {volume} {54}},\ \bibinfo
  {pages} {58} (\bibinfo {year} {2023})}\BibitemShut {NoStop}%
\bibitem [{\citenamefont {Debowski}(2025)}]{Debowski2025a}%
  \BibitemOpen
  \bibfield  {author} {\bibinfo {author} {\bibfnamefont {L.}~\bibnamefont
  {Debowski}},\ }\bibfield  {title} {\bibinfo {title} {Corrections of
  {Zipf’s} and {Heaps’} laws derived from hapax rate models},\ }\href
  {https://doi.org/10.1080/09296174.2025.2455764} {\bibfield  {journal}
  {\bibinfo  {journal} {Journal of Quantitative Linguistics}\ }\textbf
  {\bibinfo {volume} {32}},\ \bibinfo {pages} {128} (\bibinfo {year}
  {2025})}\BibitemShut {NoStop}%
\end{thebibliography}%

\end{document}